\documentclass[12pt]{article}
\usepackage{arxiv}

\usepackage{graphicx}
\usepackage{natbib} 
\usepackage{url} 
\newcommand{\blind}{0}

\usepackage{amsmath}
\usepackage{amsfonts} 
\usepackage{amsthm}
\usepackage{algorithm}
\usepackage{algorithmic}

\usepackage{subfigure}
\usepackage[utf8]{inputenc} 
\usepackage[T1]{fontenc} 
\usepackage{hyperref} 

\usepackage{booktabs} 
\usepackage{nicefrac} 
\usepackage{microtype} 
\usepackage{xcolor} 
\usepackage{array,tabularx}
\addtolength{\oddsidemargin}{-.5in}%
\addtolength{\evensidemargin}{-.5in}%
\addtolength{\textwidth}{1in}%
\addtolength{\textheight}{1.3in}%
\addtolength{\topmargin}{-.8in}%

\begin{document}

	\def\spacingset#1{\renewcommand{\baselinestretch}%
		{#1}\small\normalsize} \spacingset{1}
	

\if0\blind 
{
\title{\bf A Bayesian Robust Regression Method for Corrupted Data Reconstruction}
\author{
Zheyi Fan\thanks{Academy of Mathematics and Systems Science, Chinese Academy of Sciences, China;
School of Mathematical Sciences, University of Chinese Academy of Sciences, China. \texttt{fanzheyi@amss.ac.cn}, \texttt{Qingpeihu@amss.ac.cn}}, \\
\And
Zhaohui Li\thanks{H.\ Milton Stewart School of Industrial and Systems Engineering, Georgia Institute of Technology, USA. \texttt{zhaohui.li@gatech.edu}}, \\
\And
Jingyan Wang\thanks{Beijing Institute of Spacecraft System Engineering, China. \texttt{yanering@tom.com}, \texttt{xiong\_ztb@126.com}}, 
\And
Dennis K. J. Lin\thanks{Department of Statistics, Purdue University, West Lafayette, Indiana, \texttt{dkjlin@purdue.edu}}, 
\And
Xiao Xiong\footnotemark[3], 
\And
Qingpei Hu\footnotemark[1]
}

\maketitle
} 
\fi
\if1\blind
{
\begin{center}
{\LARGE\bf A Bayesian Robust Regression Method for Corrupted Data Reconstruction}
\end{center}
\medskip
} \fi
\bigskip
\begin{abstract}
Because of the widespread existence of noise and data corruption, recovering the true regression parameters with a certain proportion of corrupted response variables is an essential task. Methods to overcome this problem often involve robust least-squares regression, but few methods perform well when confronted with severe adaptive adversarial attacks. In many applications, prior knowledge is often available from historical data or engineering experience, and by incorporating prior information into a robust regression method, we develop an effective robust regression method that can resist adaptive adversarial attacks. First, we propose the novel TRIP (hard \textbf{T}hresholding approach to \textbf{R}obust regression with s\textbf{I}mple \textbf{P}rior) algorithm, which improves the breakdown point when facing adaptive adversarial attacks. Then, to improve the robustness and reduce the estimation error caused by the inclusion of priors, we use the idea of Bayesian reweighting to construct the more robust BRHT (robust \textbf{B}ayesian \textbf{R}eweighting regression via \textbf{H}ard \textbf{T}hresholding) algorithm. We prove the theoretical convergence of the proposed algorithms under mild conditions, and extensive experiments show that under different types of dataset attacks, our algorithms outperform other benchmark ones. Finally, we apply our methods to a data-recovery problem in a real-world application involving a space solar array, demonstrating their good applicability.
\end{abstract}
\noindent%
{\it Keywords:} Robust regression; Hard thresholding; Bayesian reweighting; Variational inference
\vfill
\newpage

\newpage
\spacingset{1.5} 

\section{Introduction}\label{introduction}

Ordinary least-squares methods are used widely because of their simplicity and ease of operation. However, because of the inevitable existence of outliers, least-squares methods such as linear regression may cause significant bias in practical applications \citep{rousseeuw2005robust}. Many robust regression methods have been proposed to reduce the impact of outliers, but with most of them it is difficult to ensure good results under a high proportion of outliers with complex distribution. For example, one extreme outlier will severely affect the results of Huber regression \citep{huber1992robust} and least absolute deviation (LAD) regression \citep{pollard1991asymptotics}, and recent methods \citep{bhatia2017consistent, pmlr-v99-suggala19a} give good results only when the generation of outliers is independent of the data. Herein we propose two robust regression algorithms, i.e., TRIP (hard \textbf{T}hresholding approach to \textbf{R}obust regression with s\textbf{I}mple \textbf{P}rior) and BRHT (robust \textbf{B}ayesian \textbf{R}eweighting regression via \textbf{H}ard \textbf{T}hresholding), which incorporate the prior information of the parameter for more-robust regression. 

The motivation for this work is recovering data from a space solar array (SSA) on a satellite in low earth orbit (LEO). As an important component of a satellite, the SSA powers the entire satellite platform to guarantee normal operation \citep{kim2010analytical}, so the satellite owner or ground station must monitor the power data to ensure the health status of the SSA \citep{brandhorst2008space}. While a satellite is in orbit, the output power of its SSA often degrades slowly \citep{sheng2019residual}, but satellite design lifetimes often exceed ten years, which makes guaranteeing reliability a challenging and essential task. It is wholly unrealistic to repair or replace a broken part of an SSA in space, therefore there is great value in analyzing the degradation mode of the SSA to obtain important reliability indicators, such as its lifetime.

However, data transmission between satellites and ground stations often suffers corruption due to bandwidth limitations or environmental interference \citep{elaasar2005satellite}. Also, the performance of the devices on a satellite decreases gradually during operation, leading to reduced data quality and causing outliers or even censoring data. Moreover, for some older satellites, their small memory capacity significantly limits the data storage; when the main memory of the satellite is full but the satellite is far from the ground station (thereby preventing the former from transmitting data), previous data are overwritten by new data, which may produce many outliers. Consequently, conventional statistical methods are not applicable to the original SSA data, and one must first recover/reconstruct the data to solve for the missing data and outliers \citep{gull1978image}.

To recover the signal, one idea is to use a robust regression method to fit the signal of each period to remove the influence of outliers and estimate the real parameter information. However, the proportion of such data outliers is relatively high, and their distribution can be complicated and unknown because of multiple patterns of formation mechanisms (see Section~\ref{sec:application} for details). An important concept in robust regression is the breakdown point $\alpha$, which is a measure of robustness and is defined as the proportion of corruptions that the robust least-squares regression (RLSR) algorithm can tolerate. Thus, it is difficult to model the generation mode of outliers, such as the widely used gross error model \citep{hampel1968contributions}. The conventional robust regression methods perform poorly on such data, and there is an urgent need to develop a new method to deal with such complex problems.

Herein, we propose two robust regression algorithms, i.e., TRIP and BRHT, to recover data for such problems. 
These two algorithms use prior information to enhance the effect of robust regression and increase the breakdown point of the algorithm. In this work, the prior information is a rough estimate of the real parameters, which may not be very accurate. Given the periodicity of SSA data, those in each period are similar, we can select a period with few outliers and obtain parameters by using polynomial regression. These parameters can then be treated as common prior information and can be used to recover all periods of the data. The TRIP algorithm is computationally efficient, while the BRHT algorithm is more accurate but less efficient; therefore, the TRIP algorithm is more appropriate for large-scale data or online data analysis. 
These two algorithms are first proposed in \cite{fan2022nips}, in which the main focus is the construction and performance of the algorithms.
In this paper, we rigorously formulate the Bayesian robust regression with a statistical interpretation. 
Besides the convergence of algorithms, the quality of the prior is a key factor that can affect the statistical inference of the regression model. 
Thus, we propose a data-driven method for determining the prior distribution of parameters.
The method proposed in this paper is applied to a real-world SSA reconstruction problem.

For analysis, we use the power data of an anonymous SSA recorded for 300 days by an orbiting LEO satellite. The data have several important characteristics besides periodicity: (i) they exhibit an overall change in trend and several jump points, which indicate several SSA failures at specific times; (ii) the data number 20 million in total, thereby requiring a denoising method with low time complexity; (iii) most problematically, there are many outliers that unlike random noise have a certain pattern, which makes it difficult to eliminate them using conventional methods. This type of outlier occupies a high proportion (ca.\ $20\%$--$30\%$) in all periods, thereby requiring the breakdown point of the robust regression method to exceed this proportion. Given the large amount of data, we use the TRIP algorithm for data recovery, and the result shows that it recovers the true information perfectly. Based on the recovered data, we select a representative point in each period to extract the overall trend of the SSA power, and those data can be used for further degradation analysis or extracting other reliability indicators. Currently, the global aerospace industry is accelerating and China is planning to build a space station, so a data-recovery algorithm for SSAs has high application value.

The rest of this paper is organized as follows. In Section~2, we review previous work on robust regression methods and robust Bayesian models. In Section~3, we formulate the problem and introduce the required notation and tools. In Section~4, we describe our proposed TRIP and BRHT algorithms in detail, and in Section~5 we discuss their theoretical properties. In Section~6, we present extensive experimental results that demonstrate the excellent performance of our proposed algorithms. In Section~7, we propose an SSA data-recovery algorithm, and its results show that it has good recovery performance and high application value. Finally, we conclude the paper in Section~8.

\section{Related Work}\label{literature_review}

To meet the challenge of learning reliable regression coefficients in the presence of significant data corruption, researchers have paid increasing attention to RLSR, which has excellent application value in many fields. Examples can be found in signal processing \citep{huang2017scalable,lin2019robust,zoubir2018robust}, economics \citep{wang2022prediction}, industry \citep{wang1997robust}, biology \citep{lourencco2011robust}, and computer vision \citep{Robust_Regression_Methods_for_Computer_Vision}.

To solve the RLSR problem, a commonly used data-generation model is $\mathbf{y}=X^{T}\mathbf{w}^{*}+\mathbf{b}^{*}+\boldsymbol{\epsilon}$, where $\mathbf{y}$ contains the response data of size $n$ (i.e., a column vector of size $1\times n$), $\mathbf{w}^*$ is the true regression coefficient we wish to recover, and $\boldsymbol{\epsilon}$ is a dense white-noise vector representing the random observation error. The vector $\mathbf{b}^{*}$ is $k$-sparse, i.e., only $k$ elements are non-zero, representing $k$ outliers in the data. In past decades, numerous methods were developed to solve the problem represented by \eqref{problem}, but those methods typically perform well only under specific conditions. The main challenge is the low breakdown point of conventional methods, with many RLSR algorithms failing to guarantee theoretical convergence as the value of $k$ increases. For example, the conventional Huber regression \citep{huber1992robust} can only tolerate $\alpha=1/n$ outliers, as can LAD regression \citep{pollard1991asymptotics}. Some recent methods such as that due to \citet{mcwilliams2014fast} use weighted subsampling for linear regression, but then the breakdown point is only $\alpha=\textit{O}(1/\sqrt{d})$, where $d$ is the dimension of the covariate. \citet{PSBR} proposed a robust gradient estimator that can be applied to linear regression, but their method only tolerates corruption up to $\alpha=\textit{O}(1/\log{d})$. Some robust regression estimators have high breakdown value, including repeated median regression \citep{siegel1982robust}, least median squares \citep{rousseeuw1984least}, least trimmed squares \citep{rousseeuw1984least}, S-estimators \citep{rousseeuw1984robust}, and MM-estimators \citep{yohai1987high}. However, those methods are either very computationally expensive or struggle to guarantee solution optimality.

Some methods have a higher breakdown point, but those tend to assume a specific pattern of data corruption. In general, there are two types of corruption, i.e., oblivious adversarial attack (OAA) and adaptive adversarial attack (AAA). In an OAA, the opponent generates $k$ sparse corruption vectors while completely ignoring $X$, $\mathbf{w}^*$, and $\boldsymbol{\epsilon}$, while an AAA is a more severe attack in which the opponent can view everything in the model including $X$, $\mathbf{w}^*$, and $\boldsymbol{\epsilon}$ before determining $\mathbf{b}^*$. \citet{bhatia2017consistent} and \citet{pmlr-v99-suggala19a} reported excellent results against OAAs by using a novel hard-thresholding method. Indeed, \citet{pmlr-v99-suggala19a} suggested that $\alpha$ may even approach unity as $n \to \infty$. The recent online fast robust regression algorithm \citep{OnlineRobustRegression} also has consistent convergence under mild conditions by using the stochastic gradient decent algorithm. However, those methods cannot resist an AAA, the handling of which is challenging, with many methods guaranteeing only a very low breakdown point, especially when the data are not distributed normally \citep{OptimalRobustLinearRegressioninNearlyLinearTime, RobustRegressionRevisited, Robustregressionwithcovariatefiltering}. \citet{bhatia2015robust} proposed a robust regression method based on a thresholding operator, and their breakdown point reached $1/65$ for a noiseless model, i.e., $\boldsymbol{\epsilon} \equiv 0$; however, their method gives consistent estimation only in noiseless cases. \citet{karmalkar2018compressed} obtained good results in sparse robust linear regression by applying $\text{L}_1$ regression, and the breakdown point of their method reached 0.239; however, their estimation is consistent only when the observation error $\boldsymbol{\epsilon}$ is sparse. \citet{diakonikolas2019efficient} considered the situation in which $X$ and $\mathbf{y}$ may have outliers simultaneously, and they proposed a filter algorithm in which the error bound is $\textit{O} (\alpha \log (1 / \alpha)\sigma)$; however, their method requires either accurate data covariance of the true data distribution or numerous unlabeled correct data to estimate the data covariance, which are often unavailable in practice.

The limitations of the aforementioned methods can be attributed to a lack of prior knowledge from the real data, making it difficult to distinguish the set of correct points in the case of an AAA. \citet{gulccehre2016knowledge} showed that prior information is effective for improving the accuracy of machine learning methods, and in many application scenarios in industry, economics, and biology, prior knowledge such as previous experimental or engineering data is available. Be it physical or historical, incorporating prior knowledge into statistical methods has attracted increasing attention in the statistical community \citep{leonard1978density, mccarthy2005profiting}, and the view is that incorporating prior information into robust regression can make the regression more robust and accurate.

The typical approach for integrating prior information is the Bayesian method, which provides a way of formalizing the process of learning from data to update beliefs in accordance with recent notions of knowledge synthesis \citep{congdon2007bayesian}. However, the generic Bayesian method is also sensitive to outliers, so a robust Bayesian method should be considered for more-reliable estimates in the presence of data corruption. \citet{polson2010shrink} used the local variance to assign each point a local parameter that makes the estimation result robust, and \citet{wang2018general} proposed a local parameterization method and used empirical Bayesian estimation to determine the global parameters. \citet{bhatia2019bayesian} proposed a Bayesian descent method using an unadjusted Langevin algorithm (ULA), which guarantees convergence in a finite number of steps, and \citet{wang2017robust} used Bayesian reweighting to assign different weights to samples to reduce the impact of outliers.

\section{Problem Formulation}\label{problem formulation}

In this study, we focus on the problem of RLSR under AAAs. Given a data matrix $X=[\mathbf{x}_{1},\dots,\mathbf{x}_{n}]\in \mathbb{R}^{d\times n}$, the corresponding response vector $\mathbf{y}\in \mathbb{R}^{n}$, and a certain number $k$ of corruptions in the data, the RLSR problem can be described as
\begin{equation}
(\hat{\mathbf{w}},\hat S)=\arg\min_{\substack{\mathbf{w}\in\mathbb{R}^{p},S\subset[n]\\ |S|=n-k}}\sum_{i \in S}(y_{i}-\mathbf{x}_{i}^{T}\mathbf{w})^2. \label{problem}
\end{equation}
That is, we aim to recover the correct point set $S$ (which represents the set of corrupted data) and the regression coefficient $\mathbf{w}^*$ simultaneously to achieve the minimum regression error. However, this problem is NP hard \citep{Recovery_of_Sparsely_Corrupted_Signals} and so difficult to optimize directly.

Suppose that the true coefficient of the regression model is denoted by $\mathbf{w}^*$. As noted in Section~\ref{literature_review}, the response vector $\mathbf{y} \in \mathbb{R}^{n}$ is generated by
\begin{equation}
\mathbf{y}=X^{T}\mathbf{w}^{*}+\mathbf{b}^{*}+\boldsymbol{\epsilon}.\label{generation}
\end{equation}
The perturbations to the response vector have two parts: (i) the adversarial corruption vector introduced by $\mathbf{b}^*$, which is a $k$-sparse vector, and (ii) the dense observation error $\epsilon_{i} \sim \mathcal{N} (0,\sigma^2)$. Our goal is to recover the true regression coefficient $\mathbf{w}^{*}$ while simultaneously determining the corruption set $S$.

We begin by giving an explicit statistical interpretation of \eqref{problem}. Given the model in \eqref{generation} and assuming that $\sigma$ is known, the likelihood is immediately obtained by $\ell(\mathbf{w}, \mathbf{b}\mid y_{i},\mathbf{x}_i,\sigma^2) \propto \prod_{i=1}^n \exp\{-\frac{1}{2\sigma^2} (y_{i}-\mathbf{x}_{i}^{T}\mathbf{w}-b_i)^2\}$, and the maximum likelihood estimation (MLE) can be obtained by maximizing the likelihood. Note that for non-zero $b_i$, the corresponding component $\exp\{-\frac{1}{2\sigma^2} (y_{i}-\mathbf{x}_{i}^{T}\mathbf{w}-b_i)^2\}$ is maximized by setting $\hat{b}_i = y_{i}-\mathbf{x}_{i}^{T}\mathbf{w}$ for any given $\mathbf{w}$, and the maximum value is a constant, i.e., unity. Thus, the likelihood value is affected by only $S\subset[n]$, which indicates the zero components in $\mathbf{b}$. Treating $b_i$ as a nuisance parameter, the MLE problem is transformed into solving the following maximizing-profile log-likelihood (plug-in estimation for non-zero $b_i$) problem:
\begin{equation}
(\hat {\mathbf{w}},\hat S)=\arg\max_{\substack{\mathbf{w}\in\mathbb{R}^{p},S\subset[n]\\ |S|=n-k}}\sum_{i \in S}\log\ell(\mathbf{w}\mid y_{i},\mathbf{x}_i,\sigma^2), \label{likelihood}
\end{equation}
where $\log\ell(\mathbf{w}\mid y_{i},\mathbf{x}_i,\sigma^2) = (y_{i}-\mathbf{x}_{i}^{T}\mathbf{w})^2$. In this formulation, the MLE for $\mathbf{b}$ is plugged in to eliminate $\log\ell(\mathbf{w}\mid y_{i},\mathbf{x}_i,\sigma^2)$ where $b_i \neq 0$ in \eqref{likelihood}. In other words, \eqref{likelihood} can be interpreted as meaning that the likelihood of outliers has no effect on the likelihood if they are correctly detected. On the other hand, $\hat S$ represents the dataset after removing the detected outliers, and the likelihood can be maximized by detecting the outlier dataset and minimizing the mismatch between the linear model and the data in the non-outlier dataset.

Based on the above statistical interpretation, it is natural to convert \eqref{likelihood} into a Bayesian version. We assign a prior $p_{\mathbf{w}}(\mathbf{w})$ for the coefficients in the model; note that the posterior and likelihood multiplied by the prior differ by only one regularization constant, so we ignore this constant. We consider finding the subset $S$ and parameter $\mathbf{w}$ that maximize the unnormalized profile posterior:
\begin{equation}
(\hat {\mathbf{w}},\hat S)=\arg\max_{\substack{\mathbf{w}\in\mathbb{R}^{p},S\subset[n]\\ |S|=n-k}}\log p_{\mathbf{w}}(\mathbf{w})+\sum_{i \in S}\log\ell(\mathbf{w}\mid y_{i},\mathbf{x}_i,\sigma^2), \label{TRIP likelihood}
\end{equation}
which is the problem to be solved by the TRIP algorithm, as shown in Section~\ref{TRIP section}. From a different perspective, this can be treated as regularizing $\mathbf{w}$ by using the log-prior $\log(p_{\mathbf{w}}(\mathbf{w}))$. Intuitively, $\log(p_{\mathbf{w}}(\mathbf{w}))$ prevents the estimation of $\mathbf{w}$ from deviating too far from the true value. However, this regularization term still leads to an unavoidable bias in the final estimation, so a more effective method is needed to decrease this bias.

To solve this, we propose adding a localization parameter $\mathbf{r}$ that reflects the change introduced by each additional sample. First, we introduce the reweighted probabilistic model (RPM) proposed by \citet{wang2017robust} for traditional linear regression. Given the covariates $X$, the response $\mathbf{y}$, and priors $p_{\mathbf{r}}(\mathbf{r})$ and $p_{\mathbf{w}}(\mathbf{w})$, the RPM can be formulated as
\begin{equation}
p(\mathbf{y},\mathbf{w},\mathbf{r}| X)=\frac{1}{Z}p_{\mathbf{w}}(\mathbf{w})p_{\mathbf{r}}(\mathbf{r})\prod\limits_{i=1}^{n}\ell(\mathbf{w} |y_{i} ,\mathbf{x}_i,\sigma^2)^{r_{i}},\label{RPM-likelilood}
\end{equation}
where $\mathbf{r}$ is the local weight assigned to each sample, $Z$ is the normalizing constant, $\ell(\mathbf{w} |y_{i} ,\mathbf{x}_i,\sigma^2)$ represents the likelihood of the normal distribution $\mathcal{N} (\mathbf{x}_{i}^{T}\mathbf{w},\sigma^2)$, and $p_{\mathbf{w}}(\mathbf{w})$ and $p_{\mathbf{r}}(\mathbf{r})$ are the priors of $\mathbf{w}$ and $\mathbf{r}$, respectively. By adding this localization parameter $\mathbf{r}$ and ignoring the normalizing constant, the problem in \eqref{TRIP likelihood} can be transformed into the following form under this RPM setting: 
\begin{equation}
(\hat{\mathbf{w}},\hat S)=\arg\max_{\substack{\mathbf{w}\in\mathbb{R}^{p},\mathbf{r}\in\mathbb{R}_{+}^{n}\\S\subset[n],|S|=n-k}}
\log p_{\mathbf{w}}(\mathbf{w})+\sum\limits_{i \in S} [r_{i}\log\ell(\mathbf{w} |y_{i},\mathbf{x}_i,\sigma^2)+\log p_{\mathbf{r}}(r_{i})],\label{final problem}
\end{equation} 
where the prior $p_{\mathbf{r}}(r_i)$ can be set to any nonnegative random variable distribution.

The main purpose of adding the localization parameter $\mathbf{r}$ is to change $\log\ell(\mathbf{w} |y_{i} ,\mathbf{x}_i,\sigma^2)$ into $r_i\log\ell(\mathbf{w} |y_{i} ,\mathbf{x}_i,\sigma^2)$, which has a greater change amplitude. In the initial stage of optimization, the estimated $\mathbf{r}$ is relatively small because of the existence of outliers, which means that the regularization term $\log(p_{\mathbf{w}}(\mathbf{w}))$ dominates the estimation and so the search for estimation is not too far from the prior. When the estimation is close to the real parameter, each term $r_i$ in the chosen set $S$ is large, which means that the regularization term has low influence when calculating the estimation. To solve this problem, we propose the novel BRHT algorithm to solve \eqref{final problem}, which can also be regarded as integrating adaptive weights into robust regression.

Note that we treat $\sigma^2$ as a known fixed parameter whose value can be tuned to adjust the influence of the prior distribution. The main reason for doing this is that in the initial stage of the algorithm described in Section~\ref{method}, the estimated $\sigma^2$ is large because of the existence of outliers, and this makes the estimation of $\mathbf{w}$ excessively biased to the prior distribution. This bias is harmful, especially when the prior is not sufficiently accurate, and this phenomenon is shown in Section~\ref{TRIP section}.
\theoremstyle{plain} 
\newtheorem{defination}{Definition}

\section{Methodology}\label{method}

To solve \eqref{TRIP likelihood}, in Section~\ref{TRIP section} we propose the relatively simple TRIP algorithm, which demonstrates how adding a prior affects the hard-thresholding method. To improve the robustness and accuracy of the estimation, in Section~\ref{BRHT section} we propose the BRHT algorithm to solve \eqref{final problem}.

\subsection{TRIP: Hard Thresholding Approach to Robust Regression with Simple Prior}\label{TRIP section}

In this subsection, we propose the robust TRIP regression algorithm; see Algorithm~\ref{alg:TRIP}. Here, only the prior $p_{\mathbf{w}}(\mathbf{w})$ is considered, and the localization parameter $\mathbf{r}$ is not added to the model. We assume that the variance $\sigma^2$ of $\epsilon_i$ is known and fixed, and we assign a normal prior $\mathcal{N} (\mathbf{w}_0,\Sigma_0)$ to $p_{\mathbf{w}}(\mathbf{w})$, where $\mathbf{w}_0$ and $\Sigma_0$ are specified in advance. With the above simple parameter settings, \eqref{TRIP likelihood} is equivalent to
\begin{equation}
(\hat{\mathbf{w}},\hat S)=\arg\min_{\substack{\mathbf{w}\in\mathbb{R}^{p},S\subset[n]\\ |S|=n-k}}\sum_{i \in S}(y_{i}-x_{i}^{T}\mathbf{w})^2 +(\mathbf{w}-\mathbf{w}_0)^{T}M(\mathbf{w}-\mathbf{w}_0), \label{problem simple}
\end{equation}
where $M=(\Sigma_0/\sigma^2)^{-1}$. Inspired by the hard-thresholding method proposed by \citet{bhatia2017consistent}, which was focused on recovering the errors rather than selecting the ``cleanest'' set, we can reformulate \eqref{problem simple} as $\arg\min_{\mathbf{w}\in\mathbb{R}^{p},\|\mathbf{b}\|_{0}\le k^*} \|X^T\mathbf{w}-(\mathbf{y}-\mathbf{b})\|_2^2 +(\mathbf{w}-\mathbf{w}_0)^TM(\mathbf{w}-\mathbf{w}_0)$. Thus, if we have an estimation $\hat{\mathbf{b}}$ of the corruption vector $\mathbf{b}^*$, then the estimation of $\mathbf{w}^*$ can be obtained easily by $\hat{\mathbf{w}}=(XX^T+M)^{-1}[X(\mathbf{y}-\hat{\mathbf{b}})+M\mathbf{w}_0]$. By substituting this estimation into the optimization problem, we obtain a new formulation of the problem:
\begin{equation}
\min_{\|\mathbf{b}\|_{0}\le k^*}
\|(P_{MX}-I)(\mathbf{y}-\mathbf{b})+P_{MM}\mathbf{w}_0\|_2^2,\label{eq:obj_trip}
\end{equation}
where $P_{MX}=X^T(XX^T+M)^{-1}X$, $P_{MM}=X^T(XX^T+M)^{-1}M$. For the sake of simplicity, we denote the objective function of \eqref{eq:obj_trip} as $f(\mathbf{b})$, and the hard-thresholding step in the TRIP algorithm can be viewed as $\mathbf{b}^{t+1}=HT_k(\mathbf{b}^{t}-\nabla f(\mathbf{b}^{t}))$, where $k$ is the selected corruption coefficient. The hard-thresholding operator $HT_k$ is defined as follows.

\begin{defination}[Hard Thresholding]
For any vector $\mathbf{b} \in \mathbb{R}^n$, let $\delta_{\mathbf{b}}^{-1}(i)$ represent the index of the $i^{th}$ largest element in $\mathbf{b}$. Then, for any $k<n$, the hard-thresholding operator is defined as $\hat{\mathbf{b}}=HT_k(\mathbf{b})$, where $\hat{\mathbf{b}}_i=\mathbf{b}_i$ if $\delta_{\mathbf{b}}^{-1}(i)\le k$ or $0$ otherwise.
\end{defination}

\spacingset{1}
\begin{algorithm}[ht]
\caption{\textbf{TRIP}: hard \textbf{T}hresholding approach to \textbf{R}obust regression with s\textbf{I}mple \textbf{P}rior}
\label{alg:TRIP}
\begin{algorithmic}[1]
\REQUIRE Covariates $X=[\mathbf{x}_{1},\dots,\mathbf{x}_{n}]$, responses $\mathbf{y}=[y_{1},\dots,y_{n}]^T$,
prior knowledge $\mathbf{w}_0$, \\penalty matrix $M$,
corruption index $k$, tolerance $\epsilon$
\ENSURE solution $\hat{\mathbf{w}}$
\STATE $\mathbf{b}^0 \gets \mathbf{0}$, $t\gets 0$,\\
$P_{MX}\gets X^T(XX^T+M)^{-1}X$, $P_{MM}\gets X^T(XX^T+M)^{-1}M$
\WHILE {$\|\mathbf{b}^{t}-\mathbf{b}^{t-1}\|_{2}>\epsilon$}
\STATE $\mathbf{b}^{t+1}\gets HT_k(P_{MX} \mathbf{b}^{t}+(I-P_{MX})\mathbf{y}-P_{MM}\mathbf{w}_0)$
\STATE $t\gets t+1$;
\ENDWHILE
\RETURN $\hat{\mathbf{w}} \gets (XX^T)^{-1}X(\mathbf{y}-\mathbf{b}^{t})$
\end{algorithmic}
\end{algorithm}
\spacingset{2}

The difference between the proposed TRIP algorithm and the original CRR (consistent robust regression) algorithm \citep{bhatia2017consistent} is the form of the iteration step, which in both TRIP and CRR is expressed uniformly as $HT_k(\mathbf{y}-X^T\mathbf{w}^t)$, but with $\mathbf{w}^t=(XX^T+M)^{-1}[X(\mathbf{y}-\mathbf{b}^t)+M\mathbf{w}_0]$ in TRIP and $\mathbf{w}^t=(XX^T)^{-1}X(\mathbf{y}-\mathbf{b}^t)$ in CRR. In CRR, $\mathbf{w}^t$ is just a simple least-squares estimation, whereas the prior added in TRIP can be regarded as adding a quadratic regularization in each iteration. This quadratic regularization avoids any iteration candidate that is too far from the prior mean, which also helps to ensure a numerically stable solution. Therefore, as long as the prior is not too misspecified, TRIP is more likely to identify the uncorrupted points, and its final result is more robust than that of CRR.

Therefore, the prior plays an important role in TRIP, but its weight in the solution depends entirely on the matrix $M=(\Sigma_0/\sigma^2)^{-1}$. Therefore, if we either use an estimation of $\hat{\sigma}^2$ to replace $\sigma^2$ or give $\sigma^2$ a prior to calculate its posterior distribution, then the overestimation of $\sigma^2$ causes a severe increase in $M$ in the initial iteration steps, and this misleads the iteration and causes some deviation in the final results. To overcome this difficulty, we can treat $M$ directly as an adjustable parameter to control by specifying a form such as $M=sI$, where $s$ is a positive number, and the suitable parameter can be chosen through five-fold or ten-fold cross validation.

\subsection{BRHT: Robust Bayesian Reweighting Regression via Hard Thresholding}\label{BRHT section}

In this subsection, we describe how Bayesian reweighting is combined with hard thresholding to give the more robust BRHT algorithm; see Algorithm~\ref{alg:BRHT}. As shown in Section~\ref{problem formulation}, BRHT is designed to solve \eqref{final problem}. The specific form of the prior $p_{\mathbf{r}}(r_i)$ can be set to any nonnegative random variable distribution, including (but not limited to) the gamma, beta, or log-normal distribution. Here, we still use the normal distribution $\mathcal{N} (\mathbf{w}_0,\Sigma_0)$ as the form of $p_{\mathbf{w}}(\mathbf{w})$.

To solve the optimization problem in \eqref{final problem}, we use the two-step BRHT algorithm. Similar to TRIP, BRHT also estimate $\mathbf{w}$ and $\mathbf{b}$ alternately, that is, fixing one to estimate the other. The estimation of $\mathbf{b}$ in BRHT is still $\mathbf{b}^{t+1}\gets HT_k(\mathbf{y}-X^T\mathbf{w}^t)$, while $\mathbf{w}^t$ is calculated by maximizing the log-posterior of the RPM model: 
\begin{equation}
(\mathbf{w}^t,\mathbf{r}^t)=\arg\max_{\mathbf{w}\in \mathbb{R}^{d},\mathbf{r}\in \mathbb{R}_{+}^{n}}\log p_{\mathbf{w}}(\mathbf{w})+\log p_{\mathbf{r}}(\mathbf{r})+\sum\limits_{i=1}^{n} r_{i}\log\ell(y_{i}-b_i^t\mid \mathbf{w},\mathbf{x}_i,\sigma^2).\label{log-likelood}
\end{equation}
However, direct inference of \eqref{log-likelood} is difficult because of the nonconvexity of this problem. In general, the parameters in this RPM model can be divided into two parts: the global variable $\mathbf{w}$ and the local latent variable $\mathbf{r}$. 

\spacingset{1}
\begin{algorithm}[ht]
\caption{\textbf{BRHT}: robust \textbf{B}ayesian \textbf{R}eweighting regression via \textbf{H}ard \textbf{T}hresholding}
\label{alg:BRHT}
\begin{algorithmic}[1]
\REQUIRE Covariates $X=[\mathbf{x}_{1},\dots,\mathbf{x}_{n}]$, responses $\mathbf{y}=[y_{1},\dots,y_{n}]^T$, prior distribution $p_{\mathbf{r}}(\mathbf{r})$, $p_{\mathbf{w}}(\mathbf{w})$, corruption index $k$, tolerance $\epsilon$
\ENSURE solution $\hat{\mathbf{w}}$
\STATE $\mathbf{b}^0 \gets \mathbf{0}$, $t\gets 0$,\\
\WHILE {$\|\mathbf{b}^{t}-\mathbf{b}^{t-1}\|_{2}>\epsilon$}
\STATE $\mathbf{w}^t\gets \text{VBEM}(X,\mathbf{y}-\mathbf{b}^{t},p_{\mathbf{r}}(\mathbf{r}),p_{\mathbf{w}}(\mathbf{w}))$
\STATE $\mathbf{b}^{t+1}\gets HT_k(\mathbf{y}-X^T\mathbf{w}^t)$
\STATE $t\gets t+1$;
\ENDWHILE
\RETURN $\hat{\mathbf{w}} \gets (XX^T)^{-1}X(\mathbf{y}-\mathbf{b}^{t})$
\end{algorithmic}
\end{algorithm}

\begin{algorithm}[ht]
\caption{\textbf{VBEM}: \textbf{V}ariational \textbf{B}ayesian \textbf{E}xpectation \textbf{M}aximization}
\label{alg:VBEM}
\begin{algorithmic}[1]
\REQUIRE Covariates $X=[\mathbf{x}_{1},\dots,\mathbf{x}_{n}]$, responses $\mathbf{y}=[y_{1},\dots,y_{n}]^T$,
prior distribution $p_{\mathbf{r}}(\mathbf{r})$, $p_{\mathbf{w}}(\mathbf{w})$ 
\ENSURE solution $\hat{\mathbf{w}}$
\REPEAT
\STATE update $q(\mathbf{r})$
\STATE update $q(\mathbf{w})$
\UNTIL{convergence}
\RETURN $\hat{\mathbf{w}} \gets$ MAP($q(\mathbf{w})$)
\end{algorithmic}
\end{algorithm}
\spacingset{2}

To solve the inference problem of RPM, a feasible method is variational Bayesian expectation maximization (VBEM). We set $q(\mathbf{w},\mathbf{r})=q(\mathbf{w})q(\mathbf{r})$ to approximate the true posterior after particular iterations of VBEM (Algorithm~\ref{alg:VBEM}), and we replace the estimation of $\mathbf{w}$ by the maximum a~posteriori (MAP) estimation from the final approximate posterior. Full details about the VBEM method are given in Appendix~A.

It is reasonable to ask why we use Bayesian reweighting, other than the advantage mentioned in Section~\ref{problem formulation}. Note that the iteration step in the TRIP algorithm is $\mathbf{b}^{t+1}\gets HT_k(P_{MX} \mathbf{b}^{t}+(I-P_{MX}) \mathbf{y}-P_{MM} \mathbf{w}_0)=HT_k(\mathbf{y}-X^T\mathbf{w}^t)$, where $\mathbf{w}^t=(XX^T+M)^{-1}[X(\mathbf{y}-\mathbf{b}^t)+M\mathbf{w}_0]$. Although we can show that TRIP already guarantees theoretical convergence, the estimation of $\mathbf{w}^*$ in every iteration still uses least-squares with a penalty term, which is easily affected by the corrupted points. This disadvantage forces us to assign a higher weight to the prior to resist severe data corruption in the case of AAAs. However, a higher weight on the prior means a larger estimation bias. When applying Bayesian reweighting, the estimation in each step is more robust than the least-squares result, and thus the weight on the prior can be reduced to guide the iteration. Therefore, the estimation bias is relatively small and the results are more robust. This is reflected in the experimental results presented in Section~\ref{experiment}.

We should also explain why we only use a prior for a few parameters. It is important to ensure that the prior weights are neither too high nor too low. As mentioned earlier, if we treat $\sigma^2$ as the parameter to be estimated, then this places too much weight on the prior. We also ensure that $p_{\mathbf{w}}(\mathbf{w})$ does not create more uncertainty, such as by setting $p_{\mathbf{w}}(\mathbf{w})$ to $p(\mathbf{w},u)=\mathcal{N} (\mathbf{w}_0,u^{-1}\Sigma_0)Gam(u|a_{u},b_{u})$. Data corruption means that the subset of the training data may vary greatly from the prior information. Thus, when calculating the posterior, the variance of $\mathbf{w}$ controlled by $u$ is very large to fit the data, and so the prior information $\mathbf{w}_0$ has lower weights in the inference step. The above problems also mislead the estimation and the selection of subsets, so we do not consider the uncertainty of these quantities and simply treat them as model parameters to be set in advance. An adjustment method for all the parameters in BRHT is described in Appendix~D.

\subsection{Data-driven Priors}\label{choose_prior}

Although prior information is common in daily application scenarios, there are also cases that lack prior information or in which the prior does not fit the actual data. Therefore, in this subsection we expand the original algorithm to explore how to apply our methods in those cases. As stated in Section~\ref{literature_review}, plenty of robust regression methods have been proposed in past decades, and although their results may not be accurate enough under complex noise, their main advantage is robustness. Therefore, the solutions given by other robust regression algorithms can be used for generating prior distributions. Intuitively, the main purpose of using an informative prior distribution herein is to provide regularization for the objective function \eqref{likelihood} so that the prior mean is less sensitive to the estimation error. In particular, the prior mean can be a plug-in value of parameter estimation from state-of-the-art robust regression methods. The other hyperparameter $\Sigma$ in prior $p_{\mathbf{w}}(\mathbf{w})$ can be selected by cross-validation by giving $\Sigma$ the specific form $\Sigma = sI$. This method is similar to the empirical Bayesian method \citep{maritz2018empirical}, where the hyperparameters of the prior model are estimated from data using a non-Bayesian method.

In Section~\ref{experiment}, we use LAD regression \citep{pollard1991asymptotics} to extract prior information from data, and we show that the choice of this prior can significantly reduce the estimation error.

\theoremstyle{plain} 
\newtheorem{theorem}{Theorem}

\section{Theoretical Convergence Analysis}\label{theoretical analysis}

In this section, we establish a convergence theory for the TRIP algorithm, and we explain clearly how the prior effectively enhances the convergence of the RLSR model; Theorems~1 and 2 summarize the results. Also, Theorems~3--5 provide a theoretical guarantee for the BRHT algorithm, demonstrating further the special properties achieved by using Bayesian reweighting.

Before presenting the convergence result, we introduce some notation. Let $\mathbf{\lambda}^t:=(XX^T+M)^{-1}X(\mathbf{b}^{t}-\mathbf{b}^{*})$, $\mathbf{g}:=(I-P_{MX})\mathbf{\epsilon}$, and $\mathbf{f}:=P_{MM}(\mathbf{w}^*-\mathbf{w}_0)$. Let $S_t:=[n]\backslash \text{supp}(\mathbf{b}^t)$ be the chosen subset that is considered to be uncorrupted, and let $I_{t}:= \text{supp}(\mathbf{b}^t)\cup \text{supp}(\mathbf{b}^*)$, where $\text{supp}(\mathbf{b})=\{i,b_i\neq 0, i=1,\dots,n\}$ is the support of vector $\mathbf{b}$.

To prove the positive effect of the prior on the RLSR problem, we require the properties of \textit{subset strong convexity (SSC)} and \textit{subset strong smoothness (SSS)}. Given a set $S\subset [n]$, $X_{S}:=[\mathbf{x}_{i \in S}]\in \mathbb{R}^{d\times |S|}$ signifies the matrix with columns in $S$, and the smallest and largest eigenvalues of a symmetric matrix $A$ are denoted by $\lambda_{\text{min}}(A)$ and $\lambda_{\text{max}}(A)$, respectively. Proposed by \citet{bhatia2015robust}, these two properties are intended to standardize the generation of the data matrix so that it is not too abnormal.

\begin{defination}[SSC Property]
A matrix $X \in \mathbb{R}^{d\times n}$ is said to satisfy the SSC property at level $m$ with constant $\lambda_{m}$ if $\lambda_{m}\le \min_{|S|=m}\lambda_{\text{min}}(X_{S}X_{S}^{T}).$
\end{defination}

\begin{defination}[SSS Property]
A matrix $X \in \mathbb{R}^{d\times n}$ is said to satisfy the SSS property at level $m$ with constant $\Lambda_{m}$ if $\max_{|S|=m}\lambda_{\text{max}}(X_{S}X_{S}^{T}) \le \Lambda_{m}.$
\end{defination}

\begin{theorem}\label{theorem1}
Let $X=[\mathbf{x}_1,\dots,\mathbf{x}_n]\in \mathbb{R}^{d\times n}$ be the given data matrix and $\mathbf{y}=X^{T}\mathbf{w}^{*}+\mathbf{b}^{*}+\boldsymbol{\epsilon}$ be the corrupted output with sparse corruption of $\|\mathbf{b}^*\|_0\le k^{*}$. For a specific positive semidefinite matrix $M$, $X$ satisfies the SSC and SSS properties such that $2\frac{\Lambda_{k+k^{*}}}{\lambda_{min}(XX^T+M)}<1$. Then, if $k>k^{*}$, it is guaranteed with a probability of at least $1-\delta$ that for any $\varepsilon,\delta>0$, $\|\mathbf{b}^{T_0}-\mathbf{b}^*\|_{2}\le \varepsilon + \textit{O}(e_0)+\textit{O}(\frac{\sqrt{\Lambda_{k+k^*}}\lambda_{max}(M)}{\lambda_{min}(XX^T+M)})\|\mathbf{w}^*-\mathbf{w}_0\|_2$ after $T_{0}=\textit{O}(\log (\frac{\|\mathbf{b}^*\|_2}{\varepsilon}))$ iterations of TRIP, where $e_0=\textit{O}(\sigma\sqrt{(k+k^*)\log\frac{n}{\delta(k+k^*)}})$ under the normal design. 
\end{theorem}

\begin{theorem}\label{theorem2}
Under the conditions of Theorem~1 and assuming that $\mathbf{x}_i\in \mathbb{R}^d$ are generated from the standard normal distribution, if $k>k^{*}$, it is guaranteed with a probability of at least $1-\delta$ that for any $\varepsilon,\delta>0$, the current estimation coefficient $\mathbf{w}_{T_{0}}$ satisfies $\|\mathbf{w}_{T_{0}}-\mathbf{w}^*\|_2\le \textit{O}(\frac{1}{\sqrt{n}})(\varepsilon+e_0)+\textit{O}(\frac{\sqrt{k+k^*}\lambda_{\max}(M)}{n^{3/2}})\|\mathbf{w}^*-\mathbf{w}_0\|_2$ after $T_{0}=\textit{O}(\log (\frac{\|\mathbf{b}^*\|_2}{\varepsilon}))$ steps.
\end{theorem}

For positive semidefinite matrices $XX^T$ and $M$, we have $\lambda_{min}(XX^T+M)\ge \lambda_{min}(XX^T)+\lambda_{min}(M)$. Thus, the condition $2\frac{\Lambda_{k+k^{*}}}{\lambda_{min}(XX^T+M)}<1$ in Theorem~\ref{theorem1} is weaker than the condition $2\frac{\Lambda_{k+k^{*}}}{\lambda_{min}(XX^T)}<1$ of Lemma~5 in \citet{bhatia2017consistent}, which shows that a prior can effectively improve the convergence of the algorithm. Assigning a higher weight to a prior means that $M$ has larger eigenvalues, and so the convergence condition is satisfied more easily. Consequently, the TRIP algorithm can tolerate a higher proportion of outliers than can the CRR method of \citet{bhatia2017consistent}, and it achieves a higher breakdown point. In fact, under the condition $\lim_{n\to \infty}\frac{\lambda_{min}(M)}{n}=\xi$, we can give an approximate expression for the breakdown point of TRIP when $\xi$ is not too large, i.e., $k^{*}\le k\le (0.3023-\sqrt{0.0887-0.0040\xi})n$; see Appendix~C.2 for details. However, the improved convergence comes at the cost of probably more bias, as can be seen from Theorem~\ref{theorem2}. If the data corruption is such that $k^*$ is $\textit{O}(n)$ and the maximum eigenvalue of $M$ is also $\textit{O}(n)$, then the bias of $\hat{\mathbf{w}}$ cannot be decreased by adding more samples, which shows that there is a trade-off between convergence and accuracy. A reliable prior improves both accuracy and convergence by being given a higher weight, but an inaccurate prior can also be helpful in practice as long as it is quite different from the distribution of outliers.

To prove the properties of our BRHT algorithm, we define the following two intermediate variables to simplify the description: 
\begin{gather}
U(\mathbf{w},\mathbf{r},S)=\log p_{\mathbf{w}}(\mathbf{w})+\sum_{i \in S}[\log p_{\mathbf{r}}(r_i)+r_i \log\ell(y_{i}\mid \mathbf{w},\mathbf{x}_i,\sigma^2)],\\
M(\mathbf{w},\mathbf{r},\mathbf{b})=\log p_{\mathbf{w}}(\mathbf{w})+\sum_{i }[\log p_{\mathbf{r}}(r_i)+r_i \log\ell(y_{i}-b_i\mid \mathbf{w},\mathbf{x}_i,\sigma^2)].
\end{gather}

\setcounter{theorem}{4}
\renewcommand{\thetheorem}{\arabic{theorem}}
\begin{theorem}\label{theorem3}
Suppose that the prior of $r_i$ is independently and identically distributed (i.i.d.). We consider $\mathbf{w}_t,\mathbf{r}_t=\arg \max_{\mathbf{w}\in \mathbb{R}^{d},\mathbf{r}\in \mathbb{R}_{+}^{n}}M(\mathbf{w},\mathbf{r},\mathbf{b}_{t})$, which is the estimation in the $t^{th}$ iteration step of the BRHT algorithm, and $\mathbf{b}_t= HT_k(\mathbf{y}-X^T\mathbf{w}_{t-1})$ is obtained from the hard-thresholding step. Then, we have that $U(\mathbf{w}_t,\mathbf{r}_t,S_{t+1})\ge U(\mathbf{w}_{t-1},\mathbf{r}_{t-1},S_{t})$.
\end{theorem}

\begin{theorem}\label{theorem4}
Consider a data matrix $X$ and a specific positive semidefinite matrix $M$ satisfying the SSC and SSS properties such that $2\frac{\Lambda_{k+k^{*}}}{\lambda_{min}(XX^T+M)}<1$. Then, there exist $l>0$ and $0<\gamma\le1+\epsilon$, where $\epsilon$ is a small number, such that if $k>k^{*}$ and $\Sigma$ in the prior $p_{\mathbf{w}}(\mathbf{w})$ is $l \sigma^2 M^{-1}$, it is guaranteed with a probability of at least $1-\delta$ that for any $\varepsilon,\delta>0$, $\|\mathbf{b}^{T_0}-\mathbf{b}^*\|_{2}\le \varepsilon + \textit{O}(e_0)+\textit{O}(\frac{\sqrt{\Lambda_{k+k^*}}\lambda_{max}(M)}{\lambda_{min}(XX^T+M)})\gamma\|\mathbf{w}^*-\mathbf{w}_0\|_2$ after $T_{0}=\textit{O}(\log (\frac{\gamma\|\mathbf{b}^*\|_2}{\varepsilon}))$ iterations of BRHT, where $e_0=\textit{O}(\sigma\sqrt{(k+k^*)\log\frac{n}{\delta(k+k^*)}})$ under the normal design. 
\end{theorem}

\begin{theorem}\label{theorem5}
Under the conditions of Theorem~6 and with $\mathbf{x}_i\in \mathbb{R}^d$ generated from the standard normal distribution, there exist $l>0$ and $0<\gamma\le1+\epsilon$, where $\epsilon$ is a small number, such that if $k>k^{*}$ and $\Sigma$ in the prior $p_{\mathbf{w}}(\mathbf{w})$ is $l\sigma^2 M^{-1}$, it can be guaranteed with a probability of at least $1-\delta$ that for any $\varepsilon,\delta>0$, the current estimation coefficient $\mathbf{w}_{T_{0}}$ satisfies 
$\|\mathbf{w}_{T_{0}}-\mathbf{w}^*\|_2\le \textit{O}(\frac{1}{\sqrt{n}})(\varepsilon+e_0)+\textit{O}(\frac{\sqrt{k+k^*}\lambda_{\max}(M)}{n^{3/2}})\gamma\|\mathbf{w}^*-\mathbf{w}_0\|_2$ after $T_{0}=\textit{O}(\log (\frac{\gamma\|\mathbf{b}^*\|_2}{\varepsilon}))$ steps.
\end{theorem}

Theorem~\ref{theorem3} shows that the BRHT algorithm is valid because the objective function of \eqref{final problem} increases in each iteration step. Theorems~\ref{theorem4} and \ref{theorem5} guarantee the convergence of the parameter. In particular, if we introduce a prior $p_{\mathbf{w}}(\mathbf{w})= \mathcal{N} (\mathbf{w}_0,\Sigma_0)$ to the TRIP algorithm such that it is convergent, then there exists $t>0$ such that the prior $p_{\mathbf{w}}(\mathbf{w})= \mathcal{N} (\mathbf{w}_0,t\Sigma_0)$ in the BRHT algorithm guarantees convergence, and the bias of the estimation of $\mathbf{w}^*$ shrinks by a factor $\gamma$ compare with that of TRIP. Note that $t$ is usually large in practice, which causes a less informative prior in BRHT. The bias of the estimator $\hat{\mathbf{w}}$ can be reduced significantly by selecting a less informative prior, so even if the data are corrupted, the BRHT algorithm ensures convergence without sacrifice regarding bias. The proofs of all these theorems are given in Appendix~C.

\section{Numerical Studies}\label{experiment}

In this section, we begin by considering how to effectively ``corrupt'' the dataset using two different attacks, i.e., OAA and AAA, then we report an extensive experimental evaluation to verify the robustness of the proposed methods. 

\subsection{Data, Prior, and Evaluation Metrics}\label{Data, Prior and Metrics}

In our experiments, the data generation can be divided into two steps. First, we generate the basic model: the true coefficient $\mathbf{w}^*$ is chosen to be a random unit norm vector, the covariants $\mathbf{x}_i$ are i.i.d.\ in $\mathcal{N} (0,I_{{d}})$, the data are generated by $y_{i}=\mathbf{x}_{i}^{T}\mathbf{w}^*+\epsilon_{i}$, where $\epsilon_{i}$ are i.i.d.\ in $\mathcal{N} (0,1^2)$ in the experiments. Second, we generate the corrupted data using three types of attack, i.e., OAA/AAA and a leverage point attack (LPA), as described in Section~\ref{Corruption Method}, the aim being to produce $k^*$ corrupted responses in the whole dataset.

Suppose that prior information is available, with the prior mean being a rough estimation of the parameters from historical data or engineering knowledge. To this end, we generate prior coefficient $\mathbf{w}_0$ by $\mathbf{w}^*+\nu\mathbf{u}$, where $\mathbf{u}$ is sampled from $\mathcal{N} (0,I)$ and $\nu$ is a nonnegative number ($\nu$ is set to 0.5 unless stated otherwise). In other words, the prior mean is chosen as an initial guess that might have significant bias. $\Sigma_0$ takes the form $sI$, where $s$ takes different values for each method. All parameters are fixed in each experiment.

We measure the performance of the regression coefficients by the standard $L_2$ error, i.e., $r_{\hat{\mathbf{w}}}=\|\hat{\mathbf{w}}-\mathbf{w}^*\|_2$. To diagnose whether the algorithm converges, we used the termination criterion $\|\mathbf{w}^{t+1}-\mathbf{w}^t\|_2\le 10^{-4}$. All results were averaged over 10 runs.

\subsection{Corruption Methods}\label{Corruption Method} 

To demonstrate the efficiency of the proposed methods, we apply two different attacks to the synthetic dataset, i.e., OAA and AAA, the details of which are as follows.

\noindent\textbf{OAA}: The set of corrupted points $S$ is selected as a uniformly random $k$-sized subset of $[n]$. The corresponding response variables are corrupted by a uniform distributed attack, i.e., the corrupted data are $y_i=\mathbf{x}_{i}^{T}\mathbf{w}^{*}+b_i+\epsilon_i$, where $b_i$ are sampled from the uniform distribution $U[0,10]$ and the white noise $\epsilon_i\sim\mathcal{N} (0,1^2)$.

\noindent\textbf{AAA}: We propose an adaptive data corruption algorithm (ADCA) that corrupts the data using all information from the true data distribution. This algorithm is quite similar to TRIP, and full details of the ADCA (Algorithm~\ref{alg:ADCA}) are given in Appendix~B. $\delta$ is set to $0.1n$ for $n=1000, p=200$ and to $0.2n$ for $n=2000, p=100$.

\spacingset{1}
\begin{algorithm}[ht]
\caption{\textbf{ADCA}: \textbf{A}daptive \textbf{D}ata \textbf{C}orruption \textbf{A}lgorithm}
\label{alg:ADCA}
\begin{algorithmic}[1]
\REQUIRE Covariates $X=[\mathbf{x}_{1},\dots,\mathbf{x}_{n}]$, responses $\mathbf{y}=[y_{1},\dots,y_{n}]^T$, true parameter $\mathbf{w}^*$, penalty coefficient $\delta$,
corruption index $k$, tolerance $\epsilon$
\ENSURE solution $\hat{\mathbf{w}}$
\STATE $\mathbf{b}^0 \gets \mathbf{0}$, $t\gets 0$,\\
$P_{\delta X}\gets X^T(XX^T-\delta I)^{-1}X$, $P_{\delta}\gets X^T(XX^T-\delta I)^{-1}\delta I$
\WHILE {$\|\mathbf{b}^{t}-\mathbf{b}^{t-1}\|_{2}>\epsilon$}
\STATE $\mathbf{b}^{t+1}\gets HT_k(P_{\delta X} \mathbf{b}^{t}+(I-P_{\delta X})\mathbf{y}+P_{\delta}\mathbf{w}^*)$
\STATE $t\gets t+1$;
\ENDWHILE
\STATE$\hat{\mathbf{w}} \gets (XX^T)^{-1}X(\mathbf{y}-\mathbf{b}^t)$
\STATE $C\gets supp(\mathbf{b}^{t})$
\RETURN $\mathbf{y}_{C}=X_{C}^{T}\hat{\mathbf{w}}$
\end{algorithmic}
\end{algorithm}
\spacingset{2}

In the OAA case, we employ the prior $p_{\mathbf{w}}(\mathbf{w}) = N(\mathbf{w}_0, \frac{1}{0.05n}I)$ for TRIP and $p_{\mathbf{w}}(\mathbf{w}) = N(\mathbf{w}_0, \frac{1}{0.01n}I)$ for BRHT. In the AAA case, the prior is $p_{\mathbf{w}}(\mathbf{w}) = N(\mathbf{w}_0, \frac{1}{0.2n}I)$ for TRIP and $p_{\mathbf{w}}(\mathbf{w}) = N(\mathbf{w}_0, \frac{1}{0.04n}I)$ for BRHT. This means that the prior has a higher weight in AAA, and the prior information of both TRIP and BRHT is enhanced by a factor of four. The other parameters for these two algorithms are fixed. The prior distribution $p_{\mathbf{r}}(\mathbf{r})$ is set to the gamma distribution $Ga(4,10)$ unless stated otherwise.
We also design an LPA to test the robustness of our methods; see Appendix~E for more results under this attack.

\subsection{Benchmark Methods}

Our methods are compared with three baselines: (i) CRR \citep{bhatia2017consistent} is an effective robust regression method when there are many random outliers; (ii) reweighted robust Bayesian regression (RRBR) \citep{wang2017robust} allows us to judge whether our proposed methods are better than the original method; (iii) Rob-ULA \citep{bhatia2019bayesian} is an effective robust Bayesian inference method that converges approximately to the real posterior distribution in a finite number of steps in the presence of outliers.

The parameters of RRBR are the same as those of the BRHT algorithm, except for the hard-thresholding part. In Appendix~E, to test the regression effect in the noiseless model, we compare our methods with TORRENT \citep{bhatia2015robust} method.

\begin{figure*}[tbp]
\centering
\subfigure[]
{
\begin{minipage}{3.6cm}
\centering
\includegraphics[width=3.6cm]{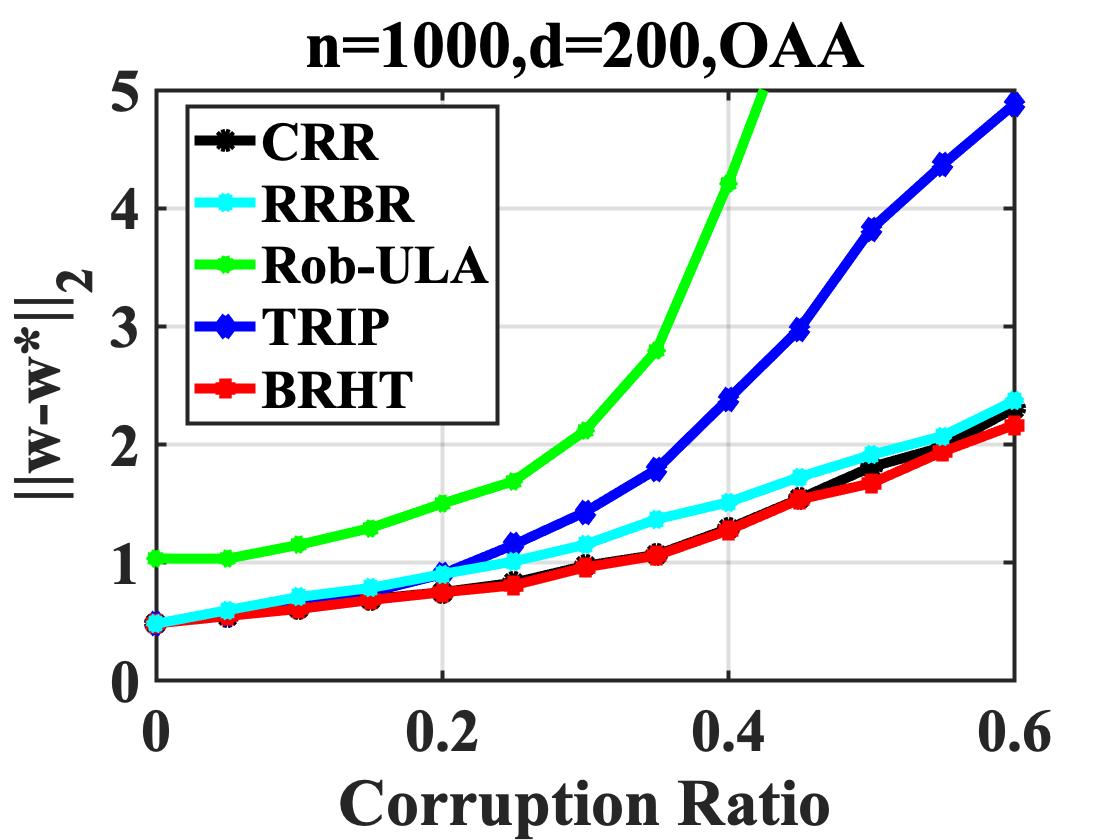}
\end{minipage}
}
\subfigure[]
{
\begin{minipage}{3.6cm}
\centering
\includegraphics[width=3.6cm]{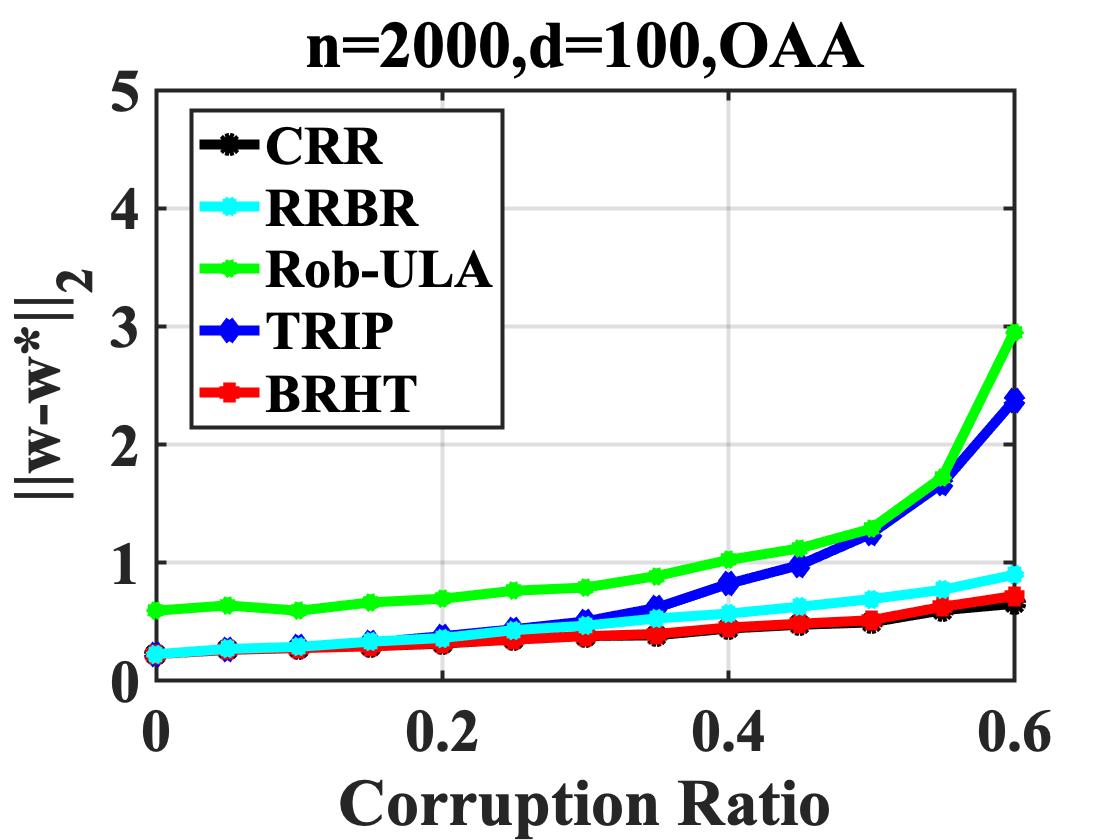}
\end{minipage}
}
\subfigure[]
{
\begin{minipage}{3.6cm}
\centering
\includegraphics[width=3.6cm]{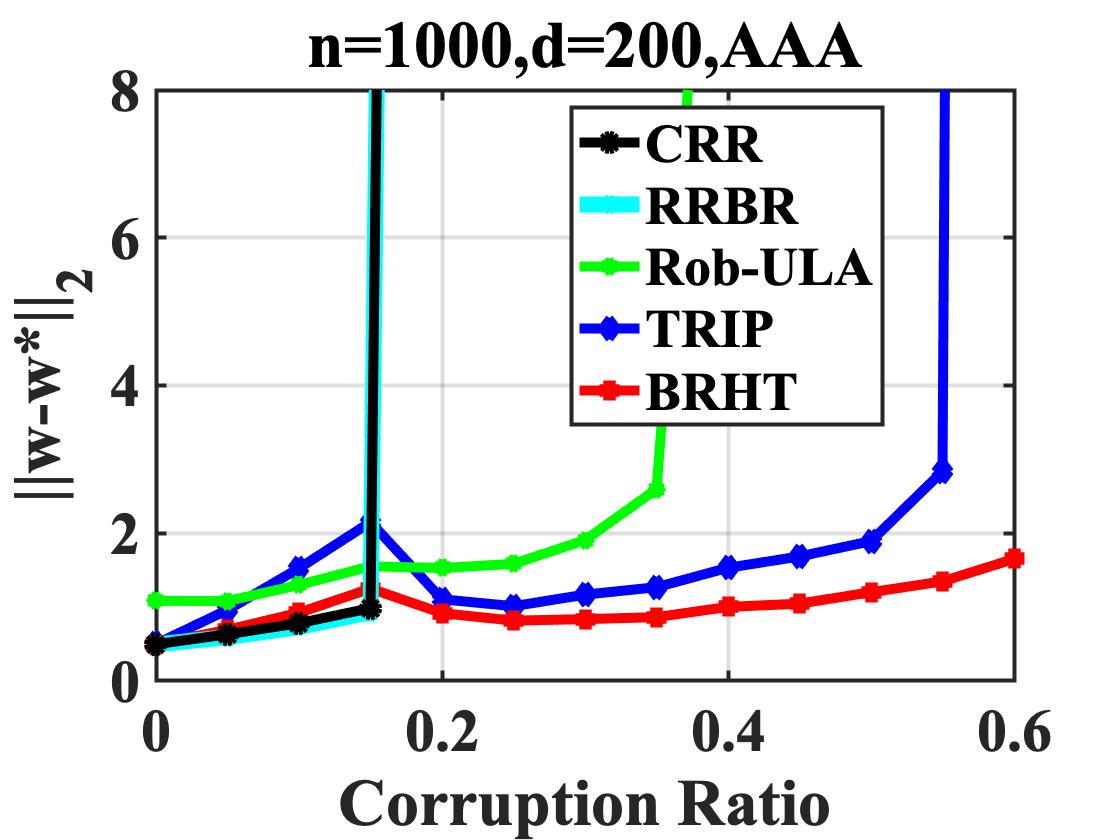}
\end{minipage}
}
\subfigure[]
{
\begin{minipage}{3.6cm}
\centering
\includegraphics[width=3.6cm]{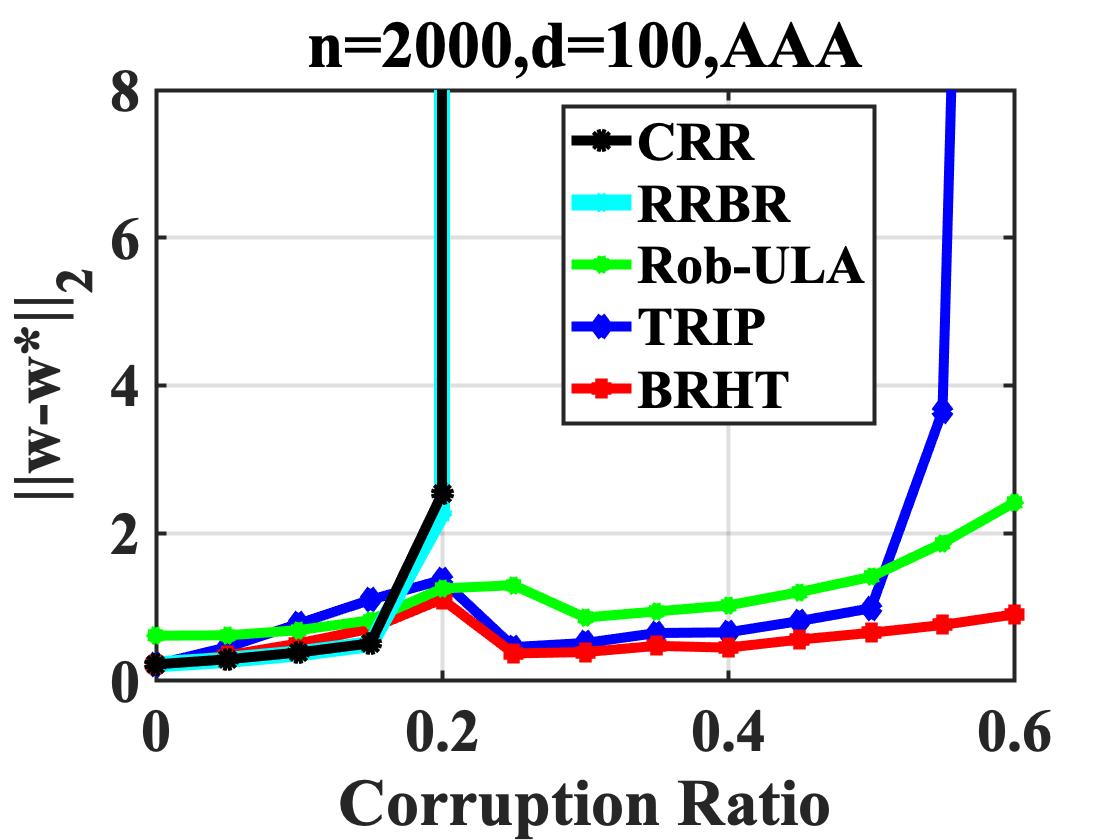}
\end{minipage}
}
\caption{Recovery of parameters with respect to number of data points $n$, dimensionality $d$, and corruption ratio $\alpha$.} 
\label{figure1}
\end{figure*}

\begin{figure*}[tbp]
\addtocounter{subfigure}{0}
\centering
\subfigure[]
{
\begin{minipage}{3.6cm}
\centering
\includegraphics[width=3.6cm]{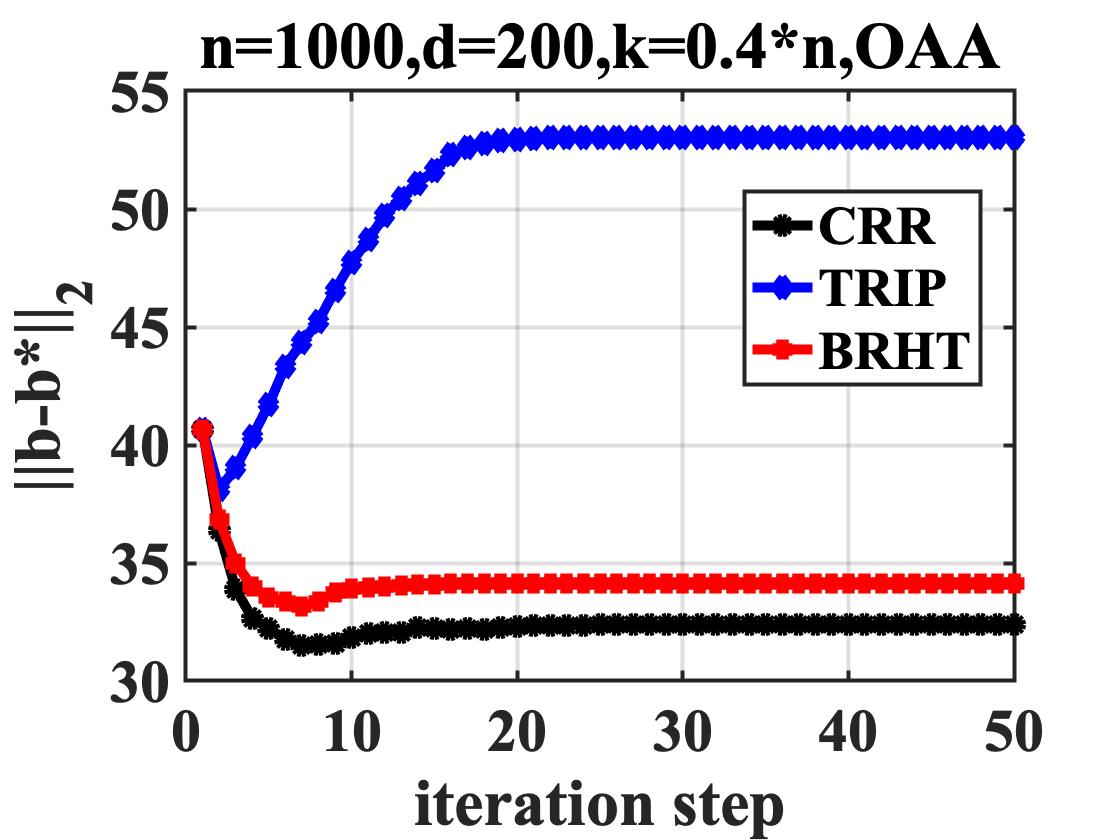}
\end{minipage}
}
\subfigure[]
{
\begin{minipage}{3.6cm}
\centering
\includegraphics[width=3.6cm]{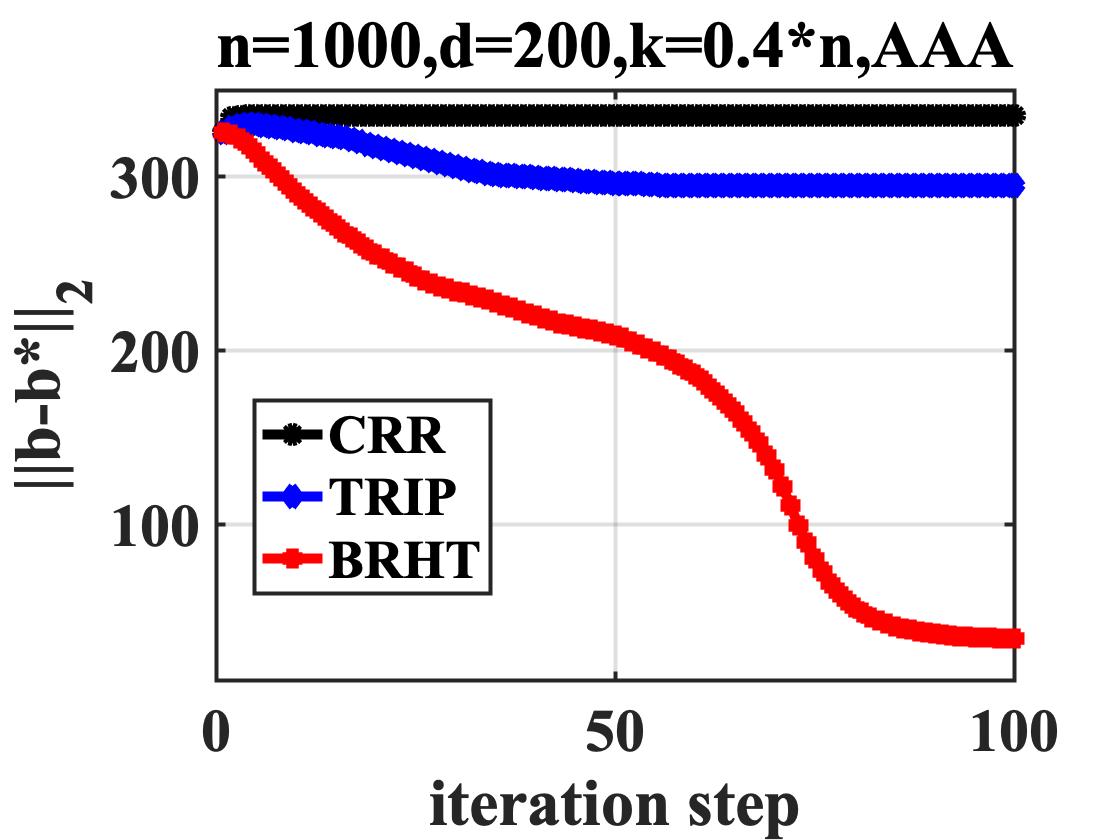}
\end{minipage}
}
\subfigure[]
{
\begin{minipage}{3.6cm}
\centering
\includegraphics[width=3.6cm]{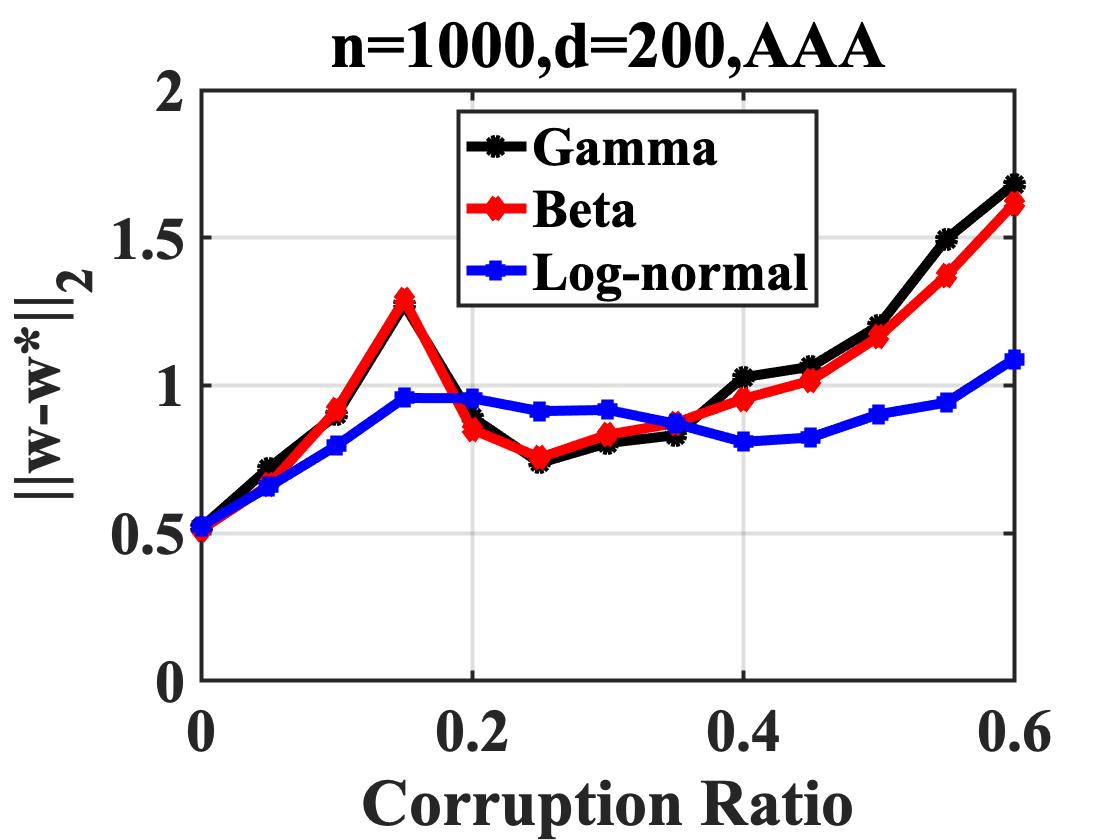}
\end{minipage}
}
\subfigure[]
{
\begin{minipage}{3.6cm}
\centering
\includegraphics[width=3.6cm]{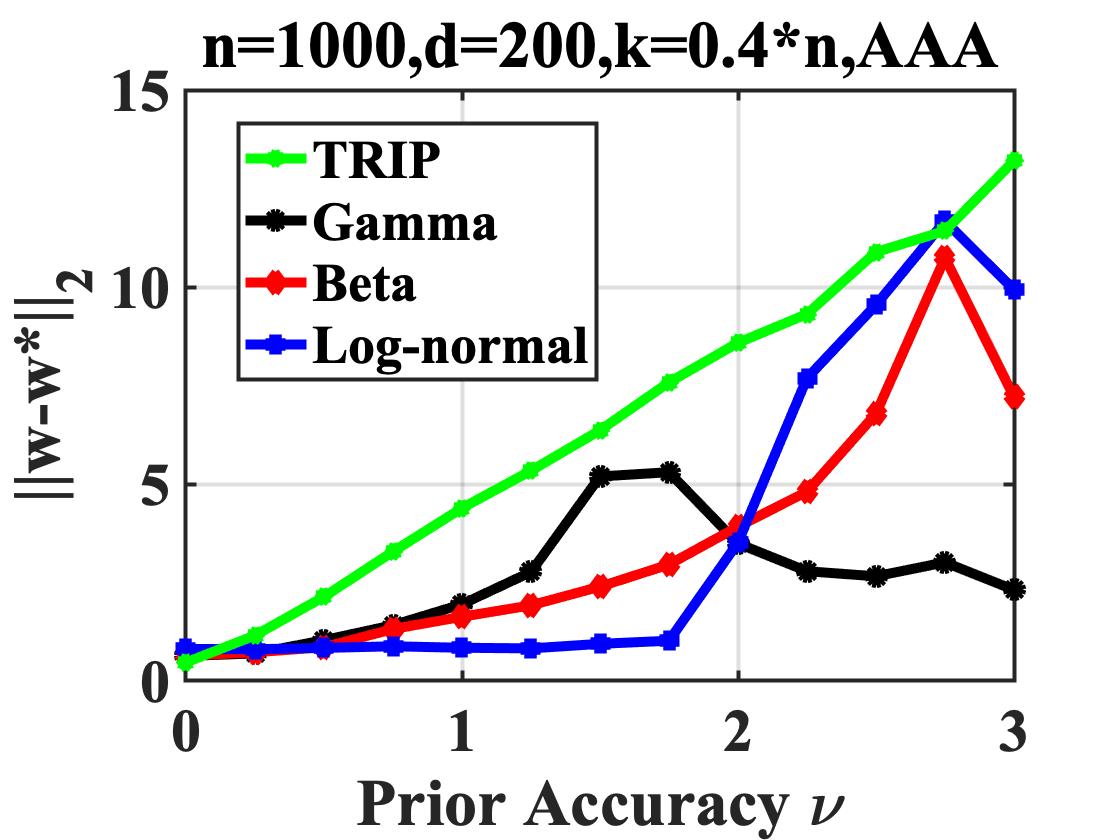}
\end{minipage}
}
\caption{(a),~(b) Convergence characteristics of TRIP and BRHT. (c),~(d) Convergence under different prior $p_{\mathbf{r}}(\mathbf{r})$ and coefficient prior $p_{\mathbf{w}}(\mathbf{w})$.}
\label{figure2}
\end{figure*}

\subsection{Recovery Properties of Coefficients and Uncorrupted Sets}\label{Recovery Properties}

In this subsection, we summarize the simulation results obtained by the different methods. The standard $L_2$ errors of the parameter estimation are shown in Figure~\ref{figure1}, where we compare the $L_2$ error of parameter estimation when the corruption ratio increases under different sample size and parameter dimensions. In particular, we consider two cases, i.e., $n=1000, d=200$ and $n=2000, d=100$, and for each case the results are evaluated for an OAA and an AAA. As can be seen, CRR, RRBR, and Rob-ULA are excellent robust regression methods or Bayesian inference methods, but they all have limitations when faced with different types of attack. CRR performs the best when faced with an OAA because it is theoretically unbiased, but it collapses rapidly when faced with an AAA, as shown in Figures~\ref{figure1}(c) and \ref{figure1}(d). RRBR and Rob-ULA consider priors, but RRBR cannot resist an AAA, and Rob-ULA produces poor results under an OAA, as shown in Figures~\ref{figure1}(a)--\ref{figure1}(d). TRIP performs well against an AAA, while BRHT is not only optimal against an AAA but also performs similarly to CRR against an OAA. This shows that of the algorithms compared in this experiment, BRHT is the most robust.

In Figures~\ref{figure2}(a) and \ref{figure2}(b), we investigate the convergence properties of TRIP and BRHT. As shown in Figure~\ref{figure2}(a) for an OAA, the estimation errors of CRR and BRHT decrease gradually, while that of TRIP increases to ca.~50. This indicates that TRIP puts too much weight on prior information, resulting in a greater estimation error. However, as stated in Theorem~\ref{theorem2}, the higher the prior weight, the easier it is to meet the convergence conditions. This means that the convergence of TRIP should be easier to guarantee than that of BRHT, and the error in the case of an AAA should also be ca.~50 as in the case of an OAA. However, from Figure~\ref{figure2}(b), BRHT converges under a weaker prior, while TRIP becomes trapped around a local optimum. This shows that BRHT only needs to integrate weak prior information to ensure convergence, which guarantees the accuracy of the algorithm.

Figures~\ref{figure2}(c) and \ref{figure2}(d) show the convergence properties under different weights of the prior $p_{\mathbf{r}}(\mathbf{r})$ and coefficient prior $p_{\mathbf{w}}(\mathbf{w})$. In these experiments, we tried more types of prior on BRHT to confirm its efficacy, with the newly added priors being the log-normal distribution $\log N(1,1)$ and the beta distribution $\beta(10,20)$. In Figure~\ref{figure2}(c), the results of applying these three types of prior show no significant difference with increasing corruption ratio. This shows that BRHT is not especially sensitive to the weight of prior $p_{\mathbf{r}}(\mathbf{r})$ when this prior is relatively reliable. Figure~\ref{figure2}(d) shows the effects of applying the three types of prior with increasing prior uncertainty. Of the different methods, TRIP always performs the worse. A log-normal distribution is the best choice when the prior $p_{\mathbf{w}}(\mathbf{w})$ is relatively close to the real parameters, and a gamma distribution is more robust when the prior is imprecise. This result can be used as a reference for selecting a prior.

\subsection{Results of Applying a Data-driven Prior}

In this subsection, we show the experimental results of applying a data-driven prior. As the prior mean $\mathbf{w}_0$, we use the solution of LAD regression \citep{pollard1991asymptotics}, which is a commonly used robust regression method that reduces the influence of outliers by using the absolute loss function. However, the efficiency of this estimation is relatively low when there are no outliers, and the breakdown point is only $1/n$. Because an AAA is so severe that neither can LAD regression resist it, in this experiment we apply an OAA in which we also let the set of corrupted points $S$ be selected as a uniformly random $k$-sized subset of $[n]$. The uncorrupted response variables are set as $y_i=\mathbf{x}_{i}^{T}\mathbf{w}^{*}+\epsilon_i$, while the corrupted ones are sampled from the uniform distribution $U[0,10]$. The white noise is $\epsilon_i\sim\mathcal{N} (0,1)$. This attack is slightly stronger than the OAA in Section~\ref{Corruption Method}, the aim being to collapse the data more effectively and make the experimental results clearer. In this case, we employ the prior $p_{\mathbf{w}}(\mathbf{w}) = N(\mathbf{w}_0, \frac{1}{0.05n}I)$ for TRIP and $p_{\mathbf{w}}(\mathbf{w}) = N(\mathbf{w}_0, \frac{1}{0.01n}I)$ for BRHT.

Figure~\ref{LAD} shows the $L_2$ error of parameter estimation by the different algorithms for increasing corruption ratio, where LAD+TRIP and LAD+BRHT denote TRIP and BRHT using the result of LAD as the prior. As in Section~\ref{Recovery Properties}, we consider two cases, i.e., $n=1000, d=200$ and $n=2000, d=100$. In the case of $n=1000, d=200$, the largest corruption ratio is 0.4, which is because LAD performs less well for larger corruption ratio, which means that if we still use this data-driven prior, then this unreliable prior will lead to a bad result. By using the solution of LAD regression as the prior mean, both TRIP and BRHT exhibit excellent robustness, as shown in Figures~\ref{LAD}(a) and \ref{LAD}(b). LAD indeed shows robustness, but its estimation error is usually larger than that of CRR. However, TRIP and BRHT behave much better than do LAD and CRR, which means that the prior is effective in helping the former two algorithms to find the correct solutions. This also shows that our algorithms can reduce the estimation errors of other algorithms, and the combination of algorithms may be better than the original CRR algorithm.

\begin{figure}[ht]
\centering
\subfigure[]
{
\begin{minipage}{6cm}
\centering
\includegraphics[width=6cm]{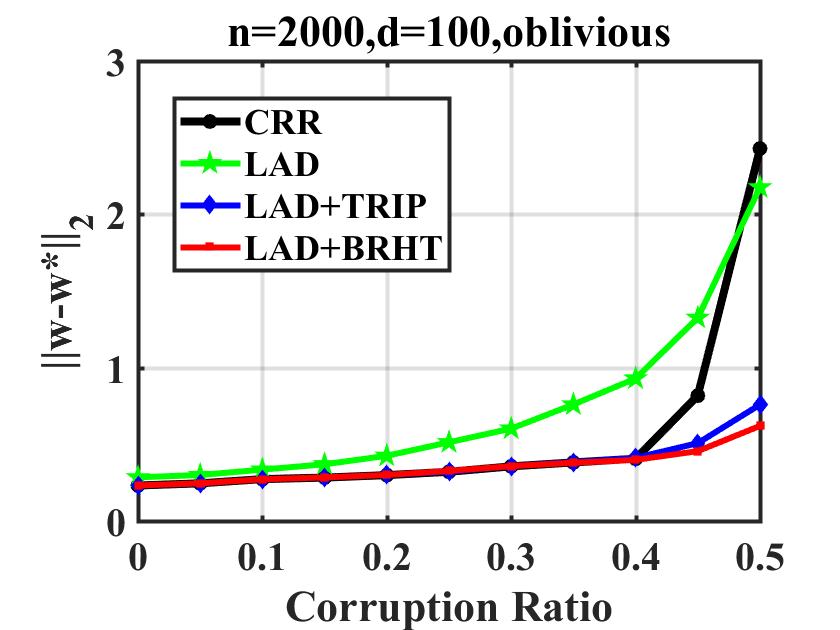}
\end{minipage}
}
\subfigure[]
{
\begin{minipage}{6cm}
\centering
\includegraphics[width=6cm]{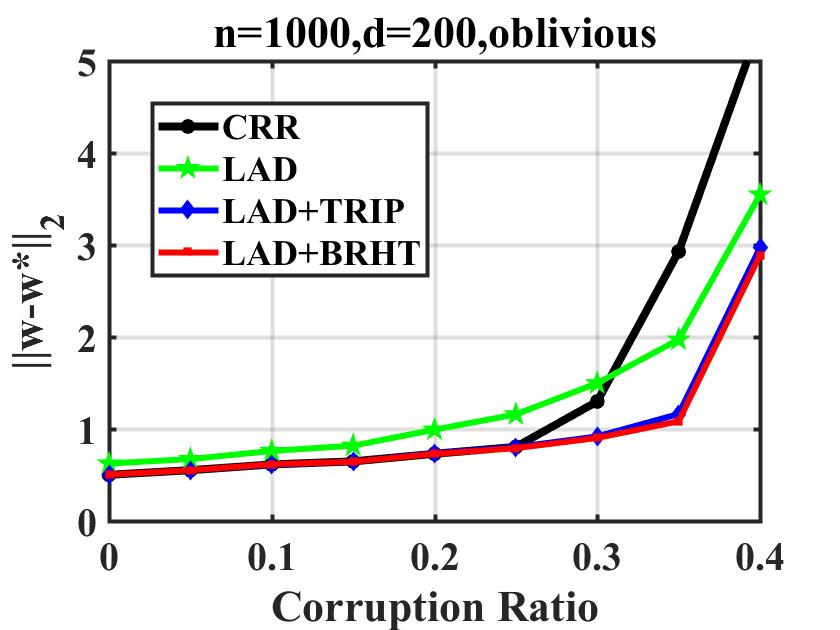}
\end{minipage}
}
\caption{Recovery of parameters with respect to number of data points $n$, dimensionality $d$, and corruption ratio $\alpha$.}
\label{LAD}
\end{figure}

\section{Application: Data Analysis for Space Solar Array}\label{sec:application}


In this section, we introduce a real-world application, i.e., SSA data analysis. First, we briefly introduce the background of SSA data analysis (Section~\ref{subsec:background}), then we describe in detail the application of the proposed procedure to SSA data reconstruction (Section~\ref{subsec:data_analysis}), and finally we present results (Section~\ref{subsec:result}).

\subsection{Background}\label{subsec:background}

Our method can be applied to two main scenarios: (i) situations in which a rough understanding of the actual data based on previous experiments or results can be easily obtained as prior information (e.g., in economics, biology, industry, etc.); (ii) periodic data, for which because the data of each period are roughly similar, we can manually select a period with basically no noise as the prior data of all periods. In this subsection, we introduce a real-world application of periodic data denoising, which pertains to SSA data from LEO satellites as introduced in Section~1.

The original data describe the power changes of an SSA recorded by its satellite during four years of operation. The data points are in the form of $(t_{ij},p_{ij})$, where $t_{ij}$ is the $j$th recorded time point of the $i$th period, and $p_{ij}$ is the power value of the SSA at time $t_{ij}$. These data have several significant characteristics: (i) they are nearly periodic, which means that they are similar between each period, but perhaps with some slight changes; (ii) they have a decreasing trend, which is caused by the degradation of the SSA; (iii) there are several jump points, each corresponding to some sudden failure in the SSA; (iv) most difficultly, there are many outliers. Regarding (iv), because the main memory of the satellite is relatively small, the data storage capacity is limited; data cannot be transmitted when the satellite is far from the receiving station, and in that case, if the main memory is full, then old data are overwritten by newly recorded data, causing data loss and generating many outliers. Outliers of this type occupy a high proportion (ca.\ $20\%$--$30\%$) of data points, and unlike random noise, they have a certain pattern, which makes them difficult to eliminate by conventional methods. The data number 20 million in total, thereby requiring an efficient denoising method, and we now intercept the data from 300 days in the first year for analysis.

From expert knowledge, the typical power of the SSA is between 26 and 30 (anonymous data with removing unit). By removing points with power values below 26 (known as outliers), the overall distribution of the original data is as shown in Figure~\ref{raw data amplifying}(a). In each period, the fluctuation range of the power data does not exceed 5; see Figure~\ref{raw data amplifying}(b) for example. Thus, the large dispersion of the overall data indicates the outliers in the original data. By locally amplifying the data, the fluctuation in each period can be observed. The uncorrupted data are in the form of a typical waveform as shown in Figure~\ref{raw data amplifying}(b), but there are many outliers in the real data as shown in Figure~\ref{raw data amplifying}(c), making the data analysis challenging.

\begin{figure}[tbp]
\centering
\subfigure[]
{
\begin{minipage}{5cm}
\centering
\includegraphics[width=5cm]{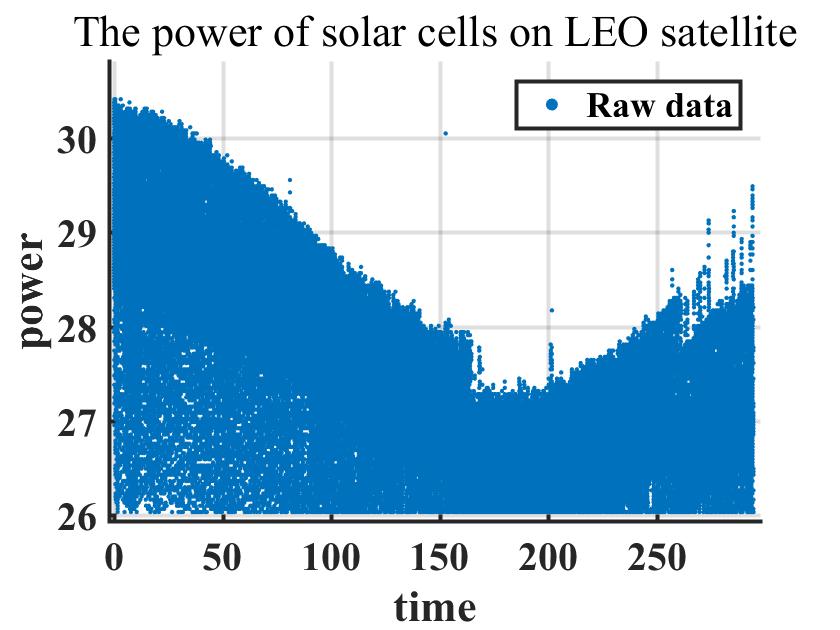}
\end{minipage}
}
\subfigure[]
{
\begin{minipage}{5cm}
\centering
\includegraphics[width=5cm]{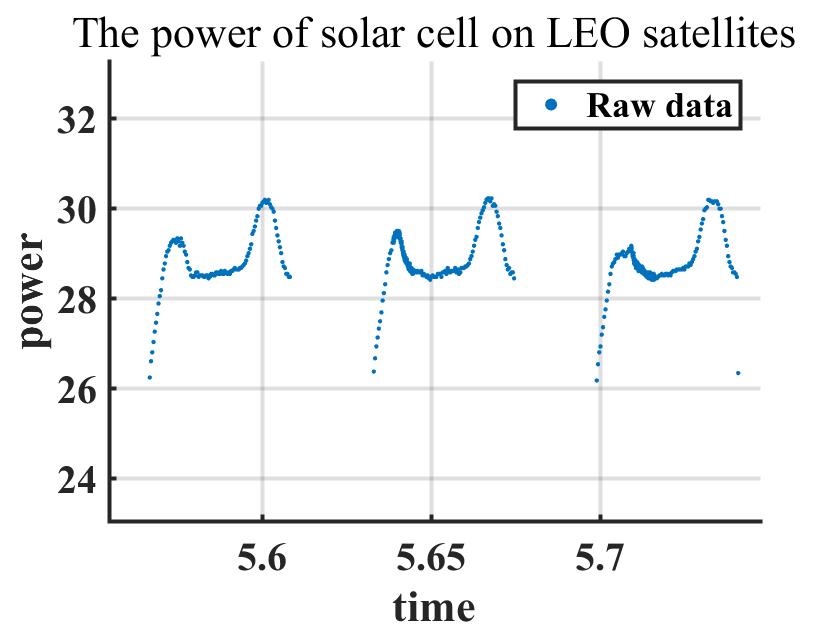}
\end{minipage}
}
\subfigure[]
{
\begin{minipage}{5cm}
\centering
\includegraphics[width=5cm]{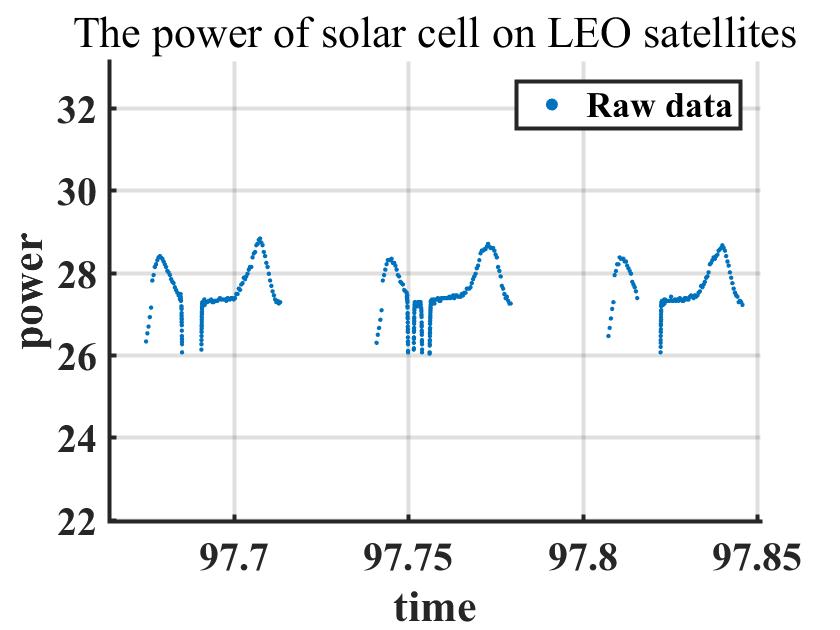}
\end{minipage}
}
\caption{Overall and partially enlarged views of original data.}
\label{raw data amplifying}
\end{figure}

\subsection{Data Analysis}\label{subsec:data_analysis}

Given the many data, we use the faster TRIP algorithm to estimate the true information and remove outliers. For each period, we use Chebyshev polynomials to fit the data because they are orthogonal basis functions and their fitting result is stable \citep{mason2002chebyshev}. As discussed in Section~\ref{choose_prior}, we estimate the coefficients of polynomial model from one standard period and set them as the prior mean for the regression parameters in RLSR model. The standard period is obtained by manually selecting a period and removing its noise points, as shown in Figure~\ref{standard}(a), where the blue dots are the standard-period data and the orange line is the linear least-squares fitting curve obtained using a nine-degree Chebyshev polynomial; the fitting formula for the standard period is $p_{st}(x)= \sum_{i=0}^{9} \gamma_i^{st} T_i(t)$, where $T_0(t)=1, T_1(t)=t, T_{n+1} (t)=2tT_n (t)-T_{n-1}(t), n=1,2,\dots, 8$ are Chebyshev polynomials.

The estimated parameters $[\gamma_{0}^{st},\gamma_{1}^{st},\dots,\gamma_{9}^{st}]^T$ of the standard period can be regarded as the prior mean $\mathbf{w}_0$ of all periods. In the TRIP algorithm, we set the prior precision matrix as $M=s\times \text{diag}\{0,1,1,\dots,1\}$, the first diagonal element of which is set to 0, i.e., we assign a noninformative prior $\gamma_{i0}$ because the intercept $\gamma_{i0}$ varies from the standard period. On the other hand, because the shape of each period is relatively consistent, we use a relatively large weight $s=1$ to ensure better recovery.

The period length of the SSA data is $T$, and they contain $n_{period}$ periods. The $i$th period contains $n_i$ data points, and first we subject them to a simple transformation so that they match the prior, i.e., $\tilde{t}_{ij}=t_{ij}-(i-1)T, j =1,\dots,n_i$. Let $X_i=[X^1_i, \dots,X^{n_i}_i]$ denote the matrix generated by the Chebyshev basis associated with $\tilde{t}_{ij}$, i.e., $X^j_i=[T_0(\tilde{t}_{ij}),\dots,T_9(\tilde{t}_{ij})]^T$. We use the TRIP algorithm to recover the current $i$th period, and by applying this recovery process, we obtain the recovered data $(\tilde{t}_{ij},p_i (\tilde{t}_{ij}))$, which we move back to the original location and obtain the final result $(t_{ij},p_i (\tilde{t}_{ij}))$. The overall recovery algorithm is given in Algorithm~\ref{alg:denoising algorithm}.

\spacingset{1}
\begin{algorithm}[tbp]
\caption{Denoising Algorithm for Data from Space Solar Array}
\label{alg:denoising algorithm}
\begin{algorithmic}[1]
\REQUIRE SSA data of LEO satellite $(t_{ij},p_{ij})$, standard-period parameter $\mathbf{w}_0$, penalty matrix $M$, corruption ratio $\alpha$, tolerance $\epsilon$, period length $T$
\ENSURE cleaned data $(t_{ij},p_i (\tilde{t}_{ij}))$
\WHILE {$i<n_{period}$}
\FOR{$j=1:n_i$}
\STATE $\tilde{t}_{ij}= t_{ij}-(i-1)T$
\ENDFOR
\STATE $k=\alpha n_i$;
\STATE $\mathbf{w}_i\gets TRIP(X_i,p_i,\mathbf{w}_0,M,k,\epsilon)$
\STATE $(t_{ij},p_{ij})\gets (t_{ij},X^j_i\mathbf{w}_i)$
\ENDWHILE
\RETURN $\{(t_{ij},p_{ij})\}$
\end{algorithmic}
\end{algorithm}
\spacingset{2}

\subsection{Results}\label{subsec:result}

Figure~\ref{standard}(b) shows the results recovered by TRIP and CRR. Although the rightmost two periods are thoroughly corrupted, our TRIP algorithm can still restore the data to their original state, which shows that our method can identify the uncorrupted points and obtain good recovery results.

\begin{figure}[tbp]
\centering
\subfigure[]
{
\begin{minipage}{6cm}
\centering
\includegraphics[width=6cm]{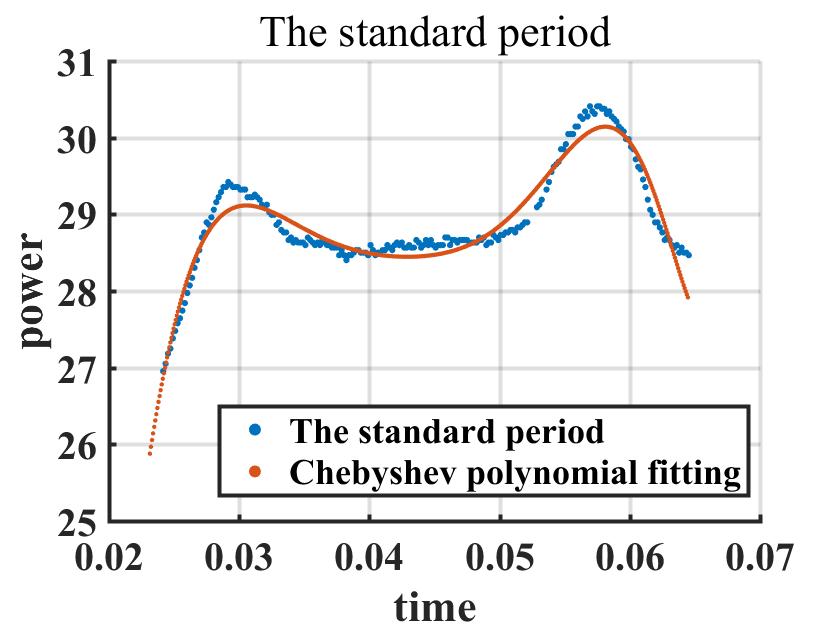}
\end{minipage}
}
\subfigure[]
{
\begin{minipage}{6cm}
\centering
\includegraphics[width=6cm]{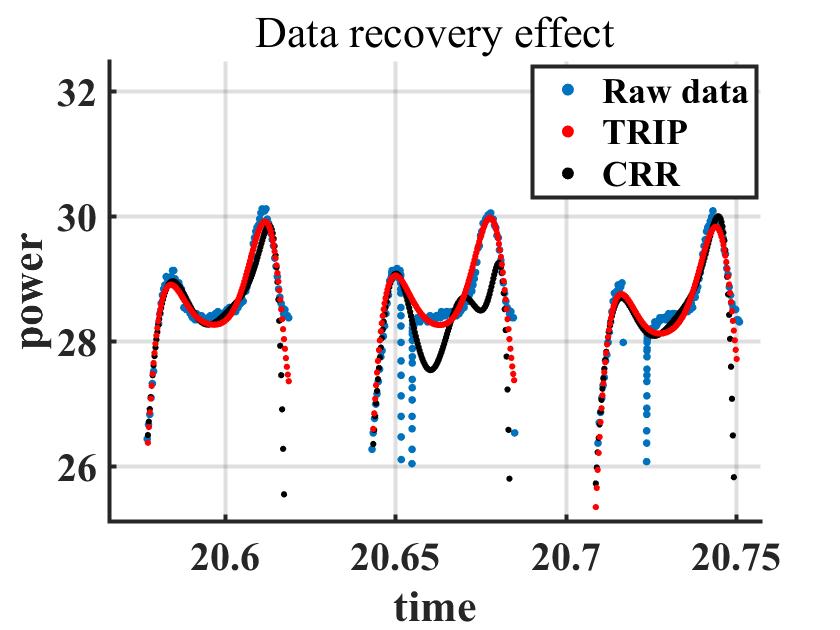}
\end{minipage}
}
\caption{(a) Data and polynomial fitting in one period, and (b) effects of data recovery.}
\label{standard}
\end{figure}

To capture the overall trend, we further extract information from the recovered data. In each period, we use a special point $((i-1)T+t_c,p_i (t_c )), i =1,\dots,n_{period}$ to represent this period, where $t_c$ is a prespecified value. By plotting all these representative points, the trend in SSA power is easily observed as shown in Figure~\ref{trend}(a). We use the $k$ nearest neighbors (KNN) \citep{peterson2009k} algorithm to further denoise the current points to obtain a clearer trend as shown in Figure~\ref{trend}(b). The trend in Figure~\ref{trend} has two jump points, indicating that the SSA was partially damaged at these two times.

\begin{figure}[tbp]
\centering
\subfigure[]
{
\begin{minipage}{7cm}
\centering
\includegraphics[width=7cm]{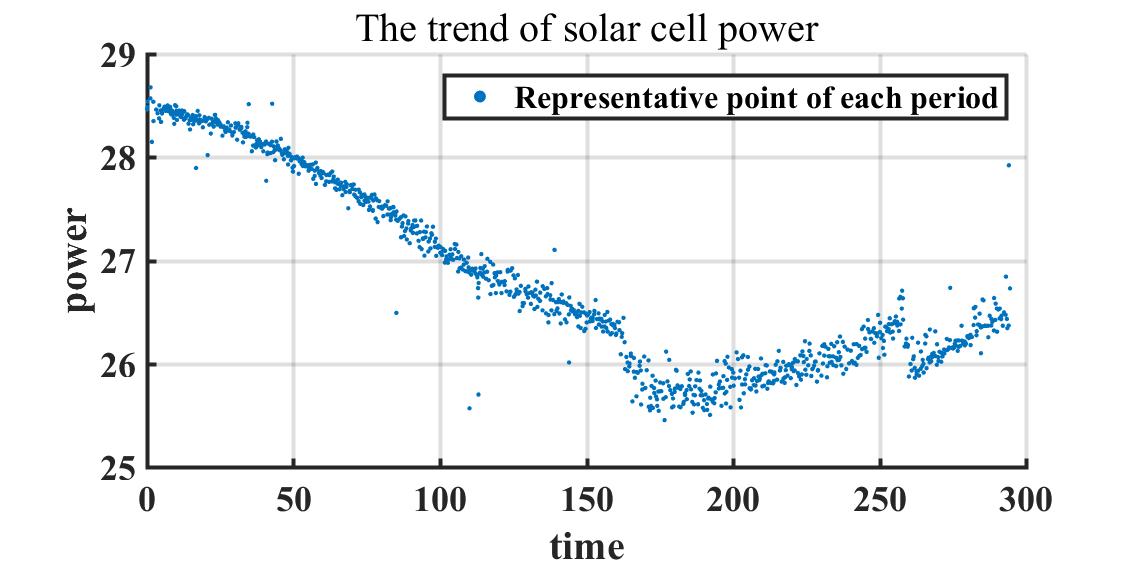}
\end{minipage}
}
\subfigure[]
{
\begin{minipage}{7cm}
\centering
\includegraphics[width=7cm]{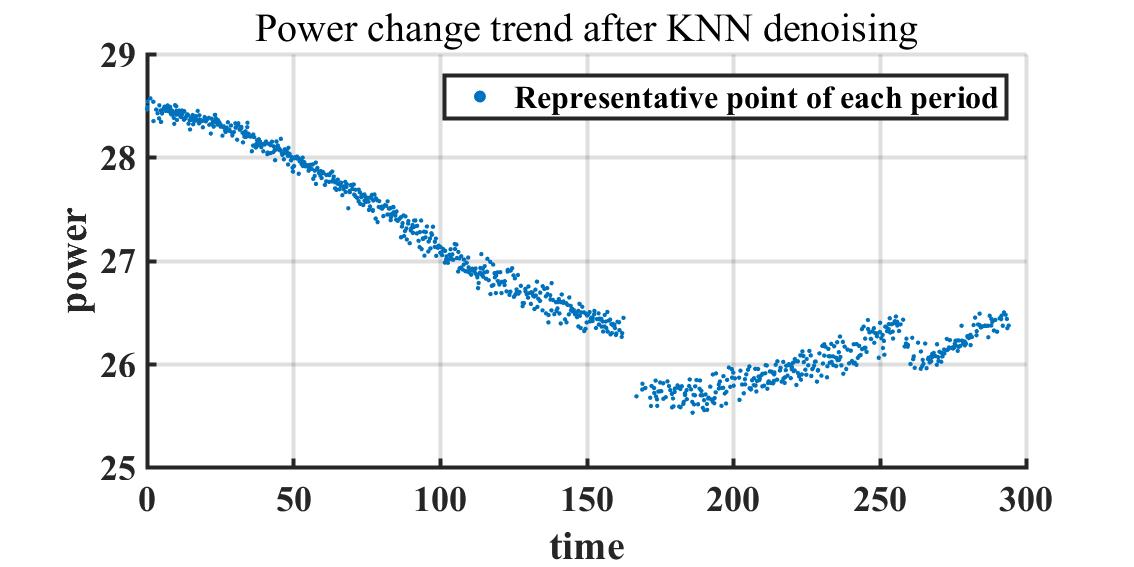}
\end{minipage}
}
\caption{ (a) Extracted overall data trend, and (b) trend after further denoising.}
\label{trend}
\end{figure}

\section{Conclusion}

Described herein was TRIP, a novel robust regression algorithm that performs strongly in resisting AAAs; by adding a prior to the robust regression via hard thresholding, the recovery of coefficients is improved significantly. Another algorithm, i.e., BRHT, was designed to improve the robustness of TRIP and reduce the estimation error by using Bayesian reweighting regression. We proved that both algorithms have strong theoretical guarantees and that they converge linearly under mild conditions. Extensive experiments showed that our algorithms outperform benchmark methods in terms of both robustness and efficiency. Our algorithms also achieved good results in recovering data from a real SSA, thereby showing that they have high practical value.

There are several interesting directions in which the current work could be extended. First, whereas herein we considered only the case in which $\mathbf{y}$ is corrupted, one could consider using prior information to deal better with the case in which both $\mathbf{y}$ and $X$ are corrupted. Second, it would be interesting to reduce further the effect of prior bias on the estimation.

\newpage
\spacingset{1}
\section*{Data availability statement}
The authors confirm that the data supporting the findings of this study are available within the article and its supplementary materials.
\bibliographystyle{apalike}
\bibliography{reference_LP}
\end{document}


\if0\blind	
{
\title{\large\bf Supplementary Materials for Paper Titled "A Bayesian Robust Regression Method for Corrupted Data Reconstruction"}
\author{
	Zheyi Fan\thanks{Academy of Mathematics and Systems Science, Chinese Academy of Sciences, China;
		School of Mathematical Sciences, University of Chinese Academy of Sciences, China. \texttt{fanzheyi@amss.ac.cn}, \texttt{Qingpeihu@amss.ac.cn}}, \\
	\And
	Zhaohui Li\thanks{H.\ Milton Stewart School of Industrial and Systems Engineering, Georgia Institute of Technology, USA. \texttt{zhaohui.li@gatech.edu}}, \\
	\And
	Jingyan Wang\thanks{Beijing Institute of Spacecraft System Engineering, China. \texttt{yanering@tom.com}, \texttt{xiong\_ztb@126.com}}, 
	\And
	Dennis K. J. Lin\thanks{Department of Statistics, Purdue University, West Lafayette, Indiana, \texttt{dkjlin@purdue.edu}}, 
	\And
	Xiao Xiong\footnotemark[3], 
	\And
	Qingpei Hu\footnotemark[1]
}

\maketitle
	} 
\fi
	
	\if1\blind
	{
		\begin{center}
			{\large\bf Supplementary Materials for Paper Titled "A Bayesian Robust Regression Method for Corrupted Data Reconstruction"}
		\end{center}
		\medskip
	} \fi
	\bigskip
\theoremstyle{plain} 
\newtheorem{defination}{Definition}
\spacingset{1.5} 

\appendix
\section{Details of Variational Bayesian EM Method}
For the RPM model and the given covariates $X=[\mathbf{x}_{1},...,\mathbf{x}_{n}]$, responses $\mathbf{y}=[y_{1},...,y_{n}]^T$,
and prior distributions $p_{\mathbf{r}}(\mathbf{r})$, $p_{\mathbf{w}}(\mathbf{w})$, the posterior of this RPM model is formulated as
\begin{equation*}
\log p_{\mathbf{w}}(\mathbf{w})+\log p_{\mathbf{r}}(\mathbf{r})+\sum\limits_{i=1}^{n} r_{i}\log\ell(y_{i}\mid \mathbf{w},\mathbf{x}_i,\sigma^2)
\end{equation*}
where $\ell(y_{i}\mid \mathbf{w},\mathbf{x}_i,\sigma^2)$ represent the likelihood of the normal distribution $\mathcal{N} (x_{i}^{T}\mathbf{w},\sigma^2)$. $p_{\mathbf{w}}(\mathbf{w})$, $p_{\mathbf{r}}(\mathbf{r})$ are the priors of $\mathbf{w}$ and $\mathbf{r}$. $p(\mathbf{w})$ is the density of the normal distribution $\mathcal{N} (\mathbf{w}_0,\Sigma_0)$. We now use variational Bayesian EM to approximate the true posterior.
\begin{equation*}
q(\mathbf{w})\prod_{i}q(r_{i})\approx
p(\mathbf{w},\mathbf{r}|\mathbf{y},X,\sigma^2)
\end{equation*}
In the following, we derive update equations for these variational parameters.

\subsection{Derivation of $q(\mathbf{r})$ (variational E step)}

Ignoring terms that do not involve $\mathbf{r}$, we take the expectations of over the remaining terms. We have
\begin{align*}
\log q(\mathbf{r}) &=\mathbb{E}_{q(\mathbf{w})} [\log p(\mathbf{y},\mathbf{w},\mathbf{r}|X)]+const\\
&=\log p_{\mathbf{r}}(\mathbf{r})+\sum_{i}r_i \mathbb{E}_{q(\mathbf{w})}[\log \ell(y_{i}|\mathbf{w},\mathbf{x}_i,\sigma^2)]+const
\end{align*}
where $q(\mathbf{w})$ is the form of the normal distribution $\mathcal{N} (\mathbf{w}_N,V_N)$, as will be shown in the derivation of $q(\mathbf{w})$. Using this fact, we have
\begin{align*}
\mathbb{E}_{q(\mathbf{w})}[\log \ell(y_{i}|\mathbf{w},\mathbf{x}_i,\sigma^2)]
&=\mathbb{E}_{q(\mathbf{w})}[-\frac{1}{2\sigma^2}(y_{i}-\mathbf{w}^{T}\mathbf{x}_i)^2-\frac{1}{2}\log(2\pi \sigma^2)]\\
&=-\frac{1}{2\sigma^2}[(y_{i}-\mathbf{w}_{N}^{T}\mathbf{x}_i)^2+\mathbf{x}_i^{T}V_N\mathbf{x}_i]-\frac{1}{2}\log(2\pi \sigma^2)
\end{align*}

When the priors $p_{\mathbf{r}}(r_i)$ are independent of each other, then 
\begin{equation*}
q(r_i)\propto \exp\{\log p_{\mathbf{r}}(r_i)+r_i\mathbb{E}_{q(\mathbf{w})}[\log \ell(y_{i}\mid \mathbf{w},\mathbf{x}_i,\sigma^2)]\}
\end{equation*}
If the distribution of $q(r_i)$ is complex, we use a Markov chain Monte Carlo method to simulate the distribution. An easy example for $p_{\mathbf{r}}(r_i)$ is the Gamma distribution $Gam(r_i|a_r,b_r)$. Then,
\begin{align*}
q(r_i)&=Gam(r_i|a_{N}^i,b_{N}^i)\\
a_{N}^i&=a_r\\
b_{N}^i&=b_r-\mathbb{E}_{q(\mathbf{w})}[\log \ell(y_{i}\mid\mathbf{w},\mathbf{x}_i,\sigma^2)]\\
\mathbb{E}_{q(\mathbf{w})}[\log \ell(y_{i}\mid \mathbf{w},\mathbf{x}_i,\sigma^2)]
&=-\frac{1}{2\sigma^2}[(y_{i}-\mathbf{w}_{N}^{T}\mathbf{x}_i)^2+\mathbf{x}_i^{T}V_N\mathbf{x}_i]-\frac{1}{2}\log(2\pi \sigma^2)
\end{align*}
for which the expectation of $r_i$ under the distribution of $q(r_i)$ can be easily obtained by $a_{N}^i/b_{N}^i$.

\subsection{Derivation of $q(\mathbf{w})$ (variational M step)}
\begin{align*}
\log q(\mathbf{w}) 
&=\mathbb{E}_{q(\mathbf{r})} [\log p(\mathbf{y},\mathbf{w},\mathbf{r}|X)]+const\\
&=\log p_{\mathbf{w}}(\mathbf{w})+\sum_{i}\mathbb{E}_{q(\mathbf{r})}(r_i) \log \ell(y_{i}\mid \mathbf{w},\mathbf{x}_i,\sigma^2)+const\\
&=-\frac{1}{2}(\mathbf{w}-\mathbf{w}_0)^T\Sigma^{-1}(\mathbf{w}-\mathbf{w}_0)-\frac{1}{2\sigma^2}(\mathbf{y}-X^T\mathbf{w})^{T}E_r(\mathbf{y}-X^T\mathbf{w})+const\\
&=-\frac{1}{2}(\mathbf{w}-\mathbf{w}_N)^{T}V_N^{-1}(\mathbf{w}-\mathbf{w}_N)+const
\end{align*}
where $E_r$ is a matrix with diagonal entries of ${E}_{q(\mathbf{r})}(\mathbf{r})$ and off-diagonal elements of 0. $V_N^{-1}=\frac{1}{\sigma^2}XE_{r}X^T+\Sigma^{-1}$, $\mathbf{w}_N=V_N(\frac{1}{\sigma^{2}}XE_{r}\mathbf{y}+\Sigma^{-1}\mathbf{w}_0)$, and $q(\mathbf{w})=\mathcal{N} (\mathbf{w}_N,V_N)$.

After several iterations, we use $q(\mathbf{w})\prod_{i}q(r_{i})$ to approximate the true posterior and take $\mathbf{w}_N$ as the MAP estimate of $q(\mathbf{w})$.

\section{Adaptive Data Corruption Method}
As the original CRR algorithm can tolerate OAAs to a significant degree, we must use all information contained in the data to fool the estimator. To achieve this goal, we need to find the most suitable subset to corrupt such that the rest of the data can more likely be generated from a completely different distribution. This problem can be formulated as follows:
\begin{equation}
(\hat{\mathbf{w}},\hat S)=\arg\min_{\substack{\mathbf{w}\in\mathbb{R}^{p},S\subset[n]\\ |S|=n-k}}\sum_{i \in S}(y_{i}-x_{i}^{T}\mathbf{w})^2 -\delta \|\mathbf{w}-\mathbf{w}^*\|_2^2
\end{equation}
where $\delta$ is the penalty coefficient that determines the extent to which the parameter leaves the standard value. $\hat S$ is the chosen subset that cannot be corrupted. If $\delta$ is not very large, then $\sum_{i \in \hat S}(y_{i}-x_{i}^{T}\mathbf{w}^*)^2$ will be similar to $\sum_{i \in \hat S}(y_{i}-x_{i}^{T}\hat{\mathbf{w}})^2$, and so $\hat{\mathbf{w}}$ can fool the regression model into thinking that $\hat{\mathbf{w}}$ is the true parameter. From this analysis, we find that $\hat{\mathbf{w}}$ can be used to construct the corrupted data. After getting the covariates of the corrupted data, we can define the response of the corrupted data as $y_{c_i}=\mathbf{x}_{c_i}^T\hat{\mathbf{w}} $, where $\mathbf{x}_{c_i}$ is the $i^{th}$ covariate of the corrupted data.

This problem is very similar to that in Eq. (7) in Section 4.1, where $M$ is replaced by $-\delta I$ and $\mathbf{w}_0$ is replaced by $\mathbf{w}^*$. Hence, these two problems can be solved by the same method. Similar to TRIP, we proposed an adaptive data corruption algorithm (ADCA) to solve the corruption problem by replacing some parameters in TRIP. ADCA seriously destroys the data, and when the corruption ratio increases, the solution of Eq. (1) may not be close to the true parameter. However, we will see that, even in this situation, TRIP and BRHT achieve good performance, as shown in Section 6.

\theoremstyle{plain} 
\newtheorem{lemma}{Lemma}
\theoremstyle{plain} 
\newtheorem{theorem_apd}[lemma]{Theorem}
\section{Supplementary Material for Proofs of TRIP and BRHT Algorithms}
\subsection{SSC/SSS guarantees }
\setcounter{lemma}{7}
\renewcommand{\thelemma}{\arabic{lemma}}
In this section, we introduce some theoretical properties of SSC and SSS from \citep{bhatia2015robust}, which will be used for the convergence analysis of the proposed algorithms. 

\begin{defination}
A random variable $x\in\mathbb{R}$ is called sub-Gaussian if the following quantity is finite
\begin{equation*}
    \sup_{p\ge 1}p^{-1/2}(E[|x|^p])^{1/p}
\end{equation*}
Moreover, the smallest upper bound on this quantity is referred to as the sub-Gaussian norm of $x$ and denoted as $\|x\|_{\psi_2}$
\end{defination}

\begin{defination}
A vector-valued random variable $\mathbf{x}\in\mathbb{R}^d$ is called sub-Gaussian if its unidimensional marginals $\langle\mathbf{x},\mathbf{v}\rangle$ are sub-Gaussian for all $\mathbf{v}\in S^{d-1}$. Moreover, its sub-Gaussian norm is defined as follows

\begin{equation*}
    \|x\|_{\psi_2}=\sup_{\mathbf{v}\in S^{d-1}}\|\langle\mathbf{x},\mathbf{v}\rangle\|_{\psi_2}
\end{equation*}
\end{defination}

\begin{lemma}\label{SSC_all}
	Let $X\in\mathbb{R}^{d\times n}$ be a matrix whose columns are sampled i.i.d from a standard Gaussian distribution i.e. $\mathbf{x}_{i}\sim\mathcal{N}(0,I)$. Then for any $\epsilon>0$, with probability at least $1-\delta$, $X$ satisfies
	\begin{align*}
		\lambda_{max}(XX^T)\le n+(1-2\epsilon)^{-1}\sqrt{cnd+c'n\log \frac{2}{\delta}}\\
		\lambda_{min}(XX^T)\ge n-(1-2\epsilon)^{-1}\sqrt{cnd+c'n\log \frac{2}{\delta}}
	\end{align*}
	where $c=24e^2log\frac{3}{\epsilon}$ and $c'=24e^2$.
\end{lemma}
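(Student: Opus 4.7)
The plan is to reduce both eigenvalue bounds to a single operator-norm estimate on $XX^T-nI$, obtained by the standard $\epsilon$-net plus Bernstein argument. Observe that $\lambda_{\max}(XX^T)=n+\lambda_{\max}(XX^T-nI)\le n+\|XX^T-nI\|_{op}$ and symmetrically $\lambda_{\min}(XX^T)\ge n-\|XX^T-nI\|_{op}$, so it is enough to show that with probability at least $1-\delta$,
\begin{equation*}
\|XX^T-nI\|_{op}\;\le\;(1-2\epsilon)^{-1}\sqrt{cnd+c'n\log(2/\delta)}.
\end{equation*}

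First I would fix $\mathbf{v}\in S^{d-1}$ and control the scalar quadratic form $\mathbf{v}^T(XX^T-nI)\mathbf{v}=\sum_{i=1}^n(\langle\mathbf{x}_i,\mathbf{v}\rangle^2-1)$. Since $\langle\mathbf{x}_i,\mathbf{v}\rangle\sim\mathcal{N}(0,1)$, the summands are i.i.d.\ centered sub-exponential random variables (distributed as $\chi^2_1-1$), and in the deviation regime $t=O(n)$ that is forced by the target bound, a Bernstein-type inequality for sums of centered squared sub-Gaussians yields
\begin{equation*}
\Pr\!\left(\big|\mathbf{v}^T(XX^T-nI)\mathbf{v}\big|>t\right)\;\le\;2\exp\!\left(-\tfrac{t^2}{24\,e^2\,n}\right).
\end{equation*}
The specific constant $24e^2$ is produced by tracking the sub-Gaussian norm of $\langle\mathbf{x}_i,\mathbf{v}\rangle$---where the $e^2$ factor comes from the moment characterisation of $\|\cdot\|_{\psi_2}$ in Definition~1---through the Bernstein argument; this is precisely what supplies $c'=24e^2$ in the statement.

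I would then discretize $S^{d-1}$ by an $\epsilon$-net $\mathcal{N}_\epsilon$ of cardinality at most $(3/\epsilon)^d$ (the standard volumetric bound), union-bound the preceding inequality over $\mathcal{N}_\epsilon$, and fix the overall failure probability to $\delta$: solving $2(3/\epsilon)^d\exp(-t^2/(24e^2 n))\le\delta$ returns exactly $t=\sqrt{24e^2\,n(d\log(3/\epsilon)+\log(2/\delta))}=\sqrt{cnd+c'n\log(2/\delta)}$. Finally I would promote the net bound to a full-sphere bound: for any $\mathbf{v}\in S^{d-1}$ pick $\mathbf{v}_0\in\mathcal{N}_\epsilon$ with $\|\mathbf{v}-\mathbf{v}_0\|\le\epsilon$, and use symmetry of $A:=XX^T-nI$ to get $|\mathbf{v}^T A\mathbf{v}-\mathbf{v}_0^T A\mathbf{v}_0|\le 2\epsilon\|A\|_{op}$, which rearranges to $\|A\|_{op}\le(1-2\epsilon)^{-1}\max_{\mathbf{v}_0\in\mathcal{N}_\epsilon}|\mathbf{v}_0^T A\mathbf{v}_0|$; this is the origin of the $(1-2\epsilon)^{-1}$ factor, and combining with the previous display finishes both inequalities simultaneously.

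The main obstacle I anticipate is the quantitative Bernstein step. Generic sub-exponential concentration only delivers an unspecified universal constant in the exponent, so to land on precisely $c=24e^2\log(3/\epsilon)$ and $c'=24e^2$ one must chase the constants through the moment estimates for $\langle\mathbf{x}_i,\mathbf{v}\rangle$ (where the $e$-factors from $\|\cdot\|_{\psi_2}$ appear) and through the squared-sub-Gaussian version of Bernstein's inequality. The remaining ingredients---the $\epsilon$-net cardinality bound $(3/\epsilon)^d$, the variational reduction to quadratic forms on $S^{d-1}$, and the transition from operator-norm control to bounds on $\lambda_{\max}$ and $\lambda_{\min}$---are routine.
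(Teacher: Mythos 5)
The paper does not prove this lemma; it is imported verbatim from \citet{bhatia2015robust}, and your $\epsilon$-net-plus-Bernstein argument is exactly the derivation given there (and in Vershynin's standard treatment of sub-Gaussian covariance concentration): the $(3/\epsilon)^d$ net, the $(1-2\epsilon)^{-1}$ net-to-sphere factor for the symmetric matrix $XX^T-nI$, and the constants $c=24e^2\log\frac{3}{\epsilon}$, $c'=24e^2$ all come out as you describe. Your outline is correct and takes essentially the same (indeed the intended) route, the only caveat being the one you already flag: the purely quadratic tail $\exp(-t^2/(24e^2n))$ requires staying in the small-deviation regime $t=O(n)$, i.e.\ $n\gtrsim d+\log\frac{1}{\delta}$, which is the regime in which the lemma is applied.
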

	
\begin{theorem_apd}\label{SSC_local}
	Let $X\in\mathbb{R}^{d\times n}$ be a matrix whose columns are sampled i.i.d from a standard Gaussian distribution i.e. $\mathbf{x}_{i}\sim\mathcal{N}(0,I)$. Then for any $k>0$, with probability at least $1-\delta$, the matrix $X$ satisfies the SSC and SSS properties with constants
	\begin{align*}
		&\Lambda_{k}\le k(1+3e\sqrt{6\log\frac{en}{k}})+\textit{O}(\sqrt{nd+n\log\frac{1}{\delta}})\\
		&\lambda_{k}\ge n-(n-k)(1+3e\sqrt{6\log\frac{en}{n-k}})-\Omega(\sqrt{nd+n\log\frac{1}{\delta}})
	\end{align*}
	
\end{theorem_apd}

\begin{lemma}
	Let $X\in\mathbb{R}^{d\times n}$ be a matrix with columns sampled from some sub-Gaussian distribution with sub-Gaussian norm $K$ and convariance $\Sigma$. Then for any $\delta>0$, with probability at least $1-\delta$, each of the following statements holds true:
	\begin{align*}
		\lambda_{max}(XX^T)\le \lambda_{max}(\Sigma)\cdot n+C_{K}\cdot\sqrt{dn}+t\sqrt{n}\\
		\lambda_{min}(XX^T)\ge \lambda_{min}(\Sigma)\cdot n-C_{K}\cdot\sqrt{dn}-t\sqrt{n}
	\end{align*}
	where $t=\sqrt{\frac{1}{c_{K}}\log \frac{2}{\delta}}$ and $c_{K}$, $C_{K}$ are absolute constants that depend only on the sub-Gaussian norm $K$ of the distribution.
\end{lemma}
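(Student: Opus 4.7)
The plan is to reduce the two spectral inequalities to a single operator-norm bound on the centered sample second-moment matrix. Observe that
\[
\lambda_{\max}(XX^T) - n\,\lambda_{\max}(\Sigma) \;\le\; \|XX^T - n\Sigma\|_{\mathrm{op}},
\qquad
n\,\lambda_{\min}(\Sigma) - \lambda_{\min}(XX^T) \;\le\; \|XX^T - n\Sigma\|_{\mathrm{op}},
\]
since for every unit vector $\mathbf{v}$ one has $\mathbf{v}^T(XX^T)\mathbf{v}\le \lambda_{\max}(\Sigma)\cdot n + \mathbf{v}^T(XX^T - n\Sigma)\mathbf{v}$ and a symmetric inequality for the minimum. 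So it suffices to show that, with probability at least $1-\delta$,
\[
\|XX^T - n\Sigma\|_{\mathrm{op}} \;\le\; C_K\sqrt{dn} + t\sqrt{n},
\qquad t=\sqrt{c_K^{-1}\log(2/\delta)}.
\]

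To bound this operator norm I would use a standard $\tfrac14$-net argument. Let $\mathcal{N}\subset S^{d-1}$ be a $\tfrac14$-net; by a volumetric argument $|\mathcal{N}|\le 9^d$, and
\[
\|XX^T - n\Sigma\|_{\mathrm{op}} \;\le\; 2\sup_{\mathbf{v}\in\mathcal{N}}\;\bigl|\mathbf{v}^T(XX^T - n\Sigma)\mathbf{v}\bigr|
\;=\; 2\sup_{\mathbf{v}\in\mathcal{N}}\;\Bigl|\sum_{i=1}^{n}\bigl[(\mathbf{v}^T\mathbf{x}_i)^2 - \mathbb{E}(\mathbf{v}^T\mathbf{x}_i)^2\bigr]\Bigr|.
\]
For a fixed $\mathbf{v}\in S^{d-1}$, $\mathbf{v}^T\mathbf{x}_i$ is sub-Gaussian with norm at most $K$ (by the definition of the vector sub-Gaussian norm), hence $(\mathbf{v}^T\mathbf{x}_i)^2$ is sub-exponential with sub-exponential norm bounded by $C K^2$ for some absolute constant $C$. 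A Bernstein-type concentration inequality for sums of independent centered sub-exponential variables then yields, for every $u\ge 0$,
\[
\Pr\!\left[\Bigl|\sum_{i=1}^n \bigl[(\mathbf{v}^T\mathbf{x}_i)^2-\mathbb{E}(\mathbf{v}^T\mathbf{x}_i)^2\bigr]\Bigr|\ge u\right] \;\le\; 2\exp\!\Bigl(-c_K\min\bigl\{u^2/n,\,u\bigr\}\Bigr).
\]

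I would then choose $u = C_K'\sqrt{dn} + t\sqrt{n}$ so that the exponent dominates $d\log 9$, take a union bound over the $9^d$ points of $\mathcal{N}$, and recombine to reach $\|XX^T-n\Sigma\|_{\mathrm{op}}\le C_K\sqrt{dn}+t\sqrt{n}$ with probability at least $1-\delta$. Plugging this into the two variational inequalities above delivers both claimed bounds simultaneously.

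The main obstacle is not any single step but the bookkeeping of absolute constants through (i) the equivalence of the sub-Gaussian definition via $L^p$ moments and the Laplace transform, (ii) the squared-norm passage from sub-Gaussian to sub-exponential, and (iii) Bernstein's inequality together with the net cardinality $9^d$. Since the lemma absorbs all of these into the single constants $c_K,C_K$ depending only on $K$, the proof essentially reduces to citing Vershynin's covariance estimation theorem for sub-Gaussian ensembles and matching the parameterization of $t$ to the Bernstein tail; I expect no genuine difficulties beyond this constant-tracking.
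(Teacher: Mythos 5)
The paper does not actually prove this lemma: it is imported verbatim (as one of the ``SSC/SSS guarantees'') from Bhatia et al.\ (2015), where it in turn rests on Vershynin's covariance estimation theorem for sub-Gaussian ensembles. Your proposal is essentially the standard proof of that cited result --- reduction of both eigenvalue bounds to $\|XX^T-n\Sigma\|_{\mathrm{op}}$, a $\tfrac14$-net of cardinality at most $9^d$, the sub-Gaussian-to-sub-exponential passage for $(\mathbf{v}^T\mathbf{x}_i)^2$, Bernstein's inequality, and a union bound --- and it is correct in the regime the paper actually works in. Two small points deserve explicit care. First, the Bernstein tail involves $\min\{u^2/n,\,u\}$ (after absorbing $K$), so with $u=C_K'\sqrt{dn}+t\sqrt{n}$ the quadratic branch is the active one only when $u\lesssim n$, i.e.\ when $n\gtrsim d+\log(2/\delta)$; outside that regime the bound degrades to the $\max(\delta,\delta^2)$ form of Vershynin's theorem and the stated $\sqrt{dn}$ rate fails. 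This is harmless here because the paper elsewhere assumes $n\ge\Omega(d+\log\frac{1}{\delta})$, but it should be stated. Second, the factor $2$ coming from the net bound multiplies both terms, including $t\sqrt{n}$; since $t$ is pinned to the explicit form $\sqrt{c_K^{-1}\log(2/\delta)}$, that factor must be absorbed by shrinking $c_K$, which is legitimate only because $c_K$ is left as an unspecified $K$-dependent constant. With those bookkeeping remarks, your argument delivers exactly the stated bounds.
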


\subsection{Convergence Proof for TRIP }\label{TRIP proof}

\setcounter{lemma}{0}
\renewcommand{\thelemma}{\arabic{lemma}}

\begin{theorem_apd}
Let $X=[\mathbf{x}_1,\dots,\mathbf{x}_n]\in \mathbb{R}^{d\times n}$ be the given data matrix and $\mathbf{y}=X^{T}\mathbf{w}^{*}+\mathbf{b}^{*}+\boldsymbol{\epsilon}$ be the corrupted output with sparse corruptions of $\|\mathbf{b}^*\|_0\le k$. For a specific positive semi-definite matrix $M$, the data matrix $X$ satisfies the SSC and SSS properties such that $2\frac{\Lambda_{k+k^{*}}}{\lambda_{min}(XX^T+M)}<1$. Then, if $k>k^{*}$, it is guaranteed with a probability of at least $1-\delta$ that, for any $\varepsilon,\delta>0$, $\|\mathbf{b}^{T_0}-\mathbf{b}^*\|_{2}\le \varepsilon + \textit{O}(e_0)+\textit{O}(\frac{\sqrt{\Lambda_{k+k^*}}\lambda_{max}(M)}{\lambda_{min}(XX^T+M)})\|\mathbf{w}^*-\mathbf{w}_0\|_2$ after $T_{0}=\textit{O}(\log (\frac{\|\mathbf{b}^*\|_2}{\varepsilon}))$ iterations of TRIP, where $e_0=\textit{O}(\sigma\sqrt{(k+k^*)\log\frac{n}{\delta(k+k^*)}})$ under the normal design.
\end{theorem_apd}

\begin{proof}
First, we consider the iteration of the TRIP algorithm:
\[
\mathbf{b}^{t+1}\gets HT_k(P_{MX}\mathbf{b}^{t}+(I-P_{MX})y-P_{MM}\mathbf{w}_0)
\]
After considering $\mathbf{y}=X^{T}\mathbf{w}^{*}+\mathbf{b}^{*}+\boldsymbol{\epsilon}$, the iteration step can be rewritten as: 
\[
\mathbf{b}^{t+1}\gets HT_k(\mathbf{b}^{*}+X^{T}\boldsymbol{\lambda}^t+\mathbf{g}+\mathbf{f})
\]
where
\begin{align*}
\boldsymbol{\lambda}^t&=(XX^T+M)^{-1}X(\mathbf{b}^{t}-\mathbf{b}^{*})\\
\mathbf{g}&=(I-P_{MX})\boldsymbol{\epsilon}\\
\mathbf{f}&=P_{MM}(\mathbf{w}^*-\mathbf{w}_0)
\end{align*}
Because $k>k^*$, we use the property of the hard thresholding step:
\begin{align*}
\|\mathbf{b}_{I^{t+1}}^{t+1}-(\mathbf{b}_{I^{t+1}}^{*}+X_{I^{t+1}}^{T}\boldsymbol{\lambda}^t+\mathbf{g}_{I^{t+1}}+\mathbf{f}_{I^{t+1}})\|_{2}&\le
\|\mathbf{b}_{I^{t+1}}^{*}-(\mathbf{b}_{I^{t+1}}^{*}+X_{I^{t+1}}^{T}\boldsymbol{\lambda}^t+\mathbf{g}_{I^{t+1}}+\mathbf{f}_{I^{t+1}})\|_{2} \\
&=\|X_{I^{t+1}}^{T}\boldsymbol{\lambda}^t+\mathbf{g}_{I^{t+1}}+\mathbf{f}_{I^{t+1}}\|_2
\end{align*}
Using the trigonometric inequality:
\[
\|\mathbf{b}_{I^{t+1}}^{t+1}-\mathbf{b}_{I^{t+1}}^{*}\|_2\le
2\|X_{I^{t+1}}^{T}\boldsymbol{\lambda}^t+\mathbf{g}_{I^{t+1}}+\mathbf{f}_{I^{t+1}}\|_2\le
2\|X_{I^{t+1}}^{T}\boldsymbol{\lambda}^t\|_2+2\|\mathbf{g}_{I^{t+1}}\|_2+2\|\mathbf{f}_{I^{t+1}}\|_2
\]
Through the SSS and SSC properties of $X$, we obtain:
\begin{align*}
\|X_{I^{t+1}}^{T}\boldsymbol{\lambda}^t\|_2
&=\|X_{I^{t+1}}^{T}(XX^T+M)^{-1}X(\mathbf{b}^{t+1}-\mathbf{b}^{*})\|_2\\
&=\|X_{I^{t+1}}^{T}(XX^T+M)^{-1}X_{I^t}(\mathbf{b}_{I^t}^{t+1}-\mathbf{b}_{I^t}^{*})\|_2\\
&\le \frac{\Lambda_{k+k^{*}}}{\lambda_{min}(XX^T+M)}\|\mathbf{b}_{I^t}^{t+1}-\mathbf{b}_{I^t}^{*}\|_2\\
&=\frac{\Lambda_{k+k^{*}}}{\lambda_{min}(XX^T+M)}\|\mathbf{b}^{t+1}-\mathbf{b}^{*}\|_2
\end{align*}
According to Bhatia~\citep{bhatia2017consistent}, there is a probability of at least $1-\delta$ that, for any set $S$ of size up to $k+k^*$, we can find a uniform bound:
\[
\|\boldsymbol{\epsilon}_S\|_2\le \sigma \sqrt{k+k^*}\sqrt{1+2e\sqrt{6\log\frac{en}{\delta (k+k^*)}}}\doteq e_0
\]
As for $\|X\boldsymbol{\epsilon}\|_2$, Bhatia~\citep{bhatia2017consistent} gives a consistent bound of
$\|X\boldsymbol{\epsilon}\|_2^2\le 2\sigma^2\|X\|_{F}^{2}\log(\frac{d}{\delta})\le 
2\sigma^2d\Lambda_{n}\log(\frac{d}{\delta})$, and so:
\begin{align*}
\|\mathbf{g}_{I^{t+1}}\|_2
&=\|\boldsymbol{\epsilon}_{I^{t+1}}-X_{I^{t+1}}^T(XX^T+M)^{-1}X\boldsymbol{\epsilon}_{I^{t+1}}\|_2
\le \|\boldsymbol{\epsilon}_{I^{t+1}}\|_2+\|X_{I^{t+1}}^T(XX^T+M)^{-1}X\boldsymbol{\epsilon}_{I^{t+1}}\|_2\\
&\le e_0+\sigma\frac{\sqrt{\Lambda_{k+k^{*}}\Lambda_{n}}}{\lambda_{min}(XX^T+M)}\sqrt{2d\log(\frac{d}{\delta})}\le e_0+\sigma\frac{\sqrt{\Lambda_{k+k^{*}}\Lambda_{n}}}{\lambda_{n}}\sqrt{2d\log(\frac{d}{\delta})}\\
&\le (1+\sqrt{\frac{2d}{n}\log(\frac{d}{\delta})})e_0
\end{align*}
The last inequality holds when $n$ is sufficiently large. Then, we consider $\mathbf{f}_{I^{t+1}}$:
\begin{align*}
\|\mathbf{f}_{I^{t+1}}\|_2&=\|X_{I^{t+1}}^T(XX^T+M)^{-1}M(\mathbf{w}^*-\mathbf{w}_0)\|_2\\
&\le\frac{\sqrt{\Lambda_{k+k^*}}\lambda_{max}(M)}{\lambda_{min}(XX^T+M)}\|\mathbf{w}^*-\mathbf{w}_0\|_2
\end{align*}
We substitute the three calculated terms into the original result to obtain:
\begin{align*}
\|\mathbf{b}^{t+1}-\mathbf{b}^{*}\|_2\le{}&
2\frac{\Lambda_{k+k^{*}}}{\lambda_{min}(XX^T+M)} \|\mathbf{b}^{t}-\mathbf{b}^{*}\|_2+2(1+\sqrt{\frac{2d}{n}\log(\frac{d}{\delta})})e_0\\
&+2\frac{\sqrt{\Lambda_{k+k^*}}\lambda_{max}(M)}{\lambda_{min}(XX^T+M)}\|\mathbf{w}^*-\mathbf{w}_0\|_2
\end{align*}
We let $\eta=2\frac{\Lambda_{k+k^{*}}}{\lambda_{min}(XX^T+M)}$. Because $\mathbf{b}^0=0$:
\begin{align*}
\|\mathbf{b}^{t+1}-\mathbf{b}^{*}\|_2\le{}&
\eta^{t}\|\mathbf{b}^{*}\|_2+\frac{2}{1-\eta}(1+\sqrt{\frac{2d}{n}\log(\frac{d}{\delta})})e_0\\
&+\frac{2}{1-\eta}\frac{\sqrt{\Lambda_{k+k^*}}\lambda_{max}(M)}{\lambda_{min}(XX^T+M)}\|\mathbf{w}^*-\mathbf{w}_0\|_2
\end{align*}
Suppose that $n>d\log(d)$. Then, $1+\sqrt{\frac{2d}{n}\log(\frac{d}{\delta})}=\textit{O}(1)$. From the expression of $e_0$, we have that $e_0=\textit{O}(\sigma\sqrt{(k+k^*)\log\frac{n}{\delta(k+k^*)}})$.
Then, after $T_0=\textit{O}(\log(\frac{\|\mathbf{b}^*\|_2}{\varepsilon}))$, we obtain:
\[
\|\mathbf{b}^{T_0}-\mathbf{b}^*\|_{2}\le \varepsilon + \textit{O}(e_0)+\textit{O}(\frac{\sqrt{\Lambda_{k+k^*}}\lambda_{max}(M)}{\lambda_{min}(XX^T+M)})\|\mathbf{w}^*-\mathbf{w}_0\|_2
\]
\end{proof}

\begin{theorem_apd}
Under the conditions of Theorem 1 and assuming that $\mathbf{x}_i\in \mathbb{R}^d$ are generated from the standard normal distribution, for $k>k^{*}$, it is guaranteed with a probability of at least $1-\delta$ that, for any $\varepsilon,\delta>0$, the current estimation coefficient $\mathbf{w}_{T_{0}}$ satisfies 
$\|\mathbf{w}_{T_{0}}-\mathbf{w}^*\|_2\le \textit{O}(\frac{1}{\sqrt{n}})(\varepsilon+e_0)+\textit{O}(\frac{\sqrt{k+k^*}\lambda_{\max}(M)}{n^{3/2}})\|\mathbf{w}^*-\mathbf{w}_0\|_2$ after $T_{0}=\textit{O}(\log (\frac{\|\mathbf{b}^*\|_2}{\varepsilon}))$ steps.
\end{theorem_apd}

\begin{proof}
\[
\mathbf{w}^t
=(XX^T)^{-1}X(\mathbf{y}-\mathbf{b}^t)
=(XX^T)^{-1}X(X^T\mathbf{w}^*+\mathbf{b}^*+\boldsymbol{\epsilon}-\mathbf{b}^t)
=\mathbf{w}^*+(XX^T)^{-1}X(\boldsymbol{\epsilon}+\mathbf{b}^*-\mathbf{b}^t)
\]
\begin{align*}
\|\mathbf{w}^t-\mathbf{w}^*\|_2
&=\|(XX^T)^{-1}X(\boldsymbol{\epsilon}+\mathbf{b}^*-\mathbf{b}^t)\|_2
\le\frac{1}{\lambda_n}(\|X\boldsymbol{\epsilon}\|_2+\|X(\mathbf{b}^*-\mathbf{b}^t)\|_2)\\
&\le \frac{\sqrt{\Lambda_n}}{\lambda_n}\sigma\sqrt{2d\log(\frac{d}{\delta})}+\frac{1}{\lambda_n}\|X(\mathbf{b}^*-\mathbf{b}^t)\|_2)\\
&\le\frac{\sqrt{\Lambda_n}}{\lambda_n}\sigma\sqrt{2d\log(\frac{d}{\delta})}
+\frac{\sqrt{\Lambda_n}}{\lambda_n}\bigg[\eta^{t}\|\mathbf{b}^{*}\|_2+\frac{2}{1-\eta}(1+\sqrt{\frac{2d}{n}\log(\frac{d}{\delta})})e_0\\
&+\frac{2}{1-\eta}\frac{\sqrt{\Lambda_{k+k^*}}\lambda_{max}(M)}{\lambda_{min}(XX^T+M)}\|\mathbf{w}^*-\mathbf{w}_0\|_2\bigg]
\end{align*}
when $n$ is sufficiently large. By Lemma \ref{SSC_all} and Theorem \ref{SSC_local}, $\sqrt{\Lambda_n}/\lambda_n$ can then be approximated as $\textit{O}(1/\sqrt{n})$ and $\sqrt{\Lambda_{k+k^*}}$ can be approximated as $\textit{O}(\sqrt{k+k^*})$. Then, we have:
\[
\|\mathbf{w}_t-\mathbf{w}^*\|_2\le \textit{O}(\frac{1}{\sqrt{n}})(\varepsilon+e_0)+\textit{O}(\frac{\sqrt{k+k^*}\lambda_{\max}(M)}{n^{3/2}})\|\mathbf{w}^*-\mathbf{w}_0\|_2
\]
\end{proof}

	\begin{theorem_apd}
		Let $X=[\mathbf{x}_1,...,\mathbf{x}_n]\in \mathbb{R}^{d\times n}$ be the given matrix with each $\mathbf{x}_i\sim \mathcal{N}(\mathbf{0},\Sigma)$. Let $\mathbf{y}=X^{T}\mathbf{w}^{*}+\mathbf{b}+\boldsymbol{\epsilon}$ and $\|\mathbf{b}\|_0\le k^{*}$. Also, let $k^{*}\le k$ and suppose $\lim_{n\to \infty}\frac{\lambda_{min}(M)}{n}=\xi$. Then if the following equation holds
		\begin{equation*}
			2\frac{k+k^{*}}{n}(1+3e\sqrt{6\log\frac{en}{k+k^{*}}})<1+\xi
		\end{equation*}
		 and $n\ge\Omega(d+\log\frac{1}{\delta})$. Then, with probability at least $1-\delta$, the data satisfies $2\frac{\Lambda_{k+k^{*}}}{\lambda_{min}(XX^T+M)}<1$, More specifically, after $T_{0}=\textit{O}(\log (\frac{\|\mathbf{b}^*\|_2}{\varepsilon}))$ steps in TRIP algorithm, the estimation coefficient $\mathbf{w}_{T_{0}}$ satisfies 
		 $\|\mathbf{w}_{T_{0}}-\mathbf{w}^*\|_2\le \textit{O}(\frac{1}{\sqrt{n}})(\varepsilon+e_0)+\textit{O}(\frac{\sqrt{k+k^*}\lambda_{\max}(M)}{n^{3/2}})\|\mathbf{w}^*-\mathbf{w}_0\|_2$.
	\end{theorem_apd}
	
	\begin{proof}
		We notice that if $\mathbf{x}\sim\mathcal{N}(\mathbf{0},\Sigma)$, then $\Sigma^{-1/2}\mathbf{x}\sim\mathcal{N}(\mathbf{0},I)$. Thus by Theorem \ref{SSC_local} and Lemma \ref{SSC_all}, with the probability at least $1-\delta$, the data matrix $\tilde{X}=\Sigma^{1/2}X$ satisfies SSC and SSS properties with the following constants
		\begin{align*}
			&\Lambda_{k}\le k(1+3e\sqrt{6\log\frac{en}{k}})+\textit{O}(\sqrt{nd+n\log\frac{1}{\delta}}),\\
			&\lambda_{min}(XX^T)\ge n-(1-2\epsilon)^{-1}\sqrt{cnd+c'n\log \frac{2}{\delta}}.
		\end{align*}
	As seen in Theorem 1, the convergence of TRIP needs to satisfies $2\frac{\Lambda_{k+k^{*}}}{\lambda_{min}(XX^T+M)}<1$. We notice that $\lambda_{min}(XX^T+M)\ge \lambda_{min}(XX^T)+\lambda_{min}(M)$, so the convergence condition can be scaled to $2\Lambda_{k+k^{*}}\le \lambda_{min}(XX^T)+\lambda_{min}(M)$. Using the above bounds, the condition is translated into
	\begin{equation*}
		\underbrace{2\frac{k+k^{*}}{n}(1+3e\sqrt{6\log\frac{en}{k+k^{*}}})}_{(A)}+\underbrace{\textit{O}(\sqrt{\frac{d}{n}+\frac{1}{n}\log\frac{1}{\delta}})}_{(B)}<1+\frac{\lambda_{min}(M)}{n}.
	\end{equation*}
	For $n=\Omega(d+\frac{1}{\delta})$ and suppose $n$ is large enough, the part $(B)$ goes to 0. Also because $\lim_{n\to \infty}\frac{\lambda_{min}(M)}{n}=\xi$, so the condition becomes 
	\begin{equation*}
		2\frac{k+k^{*}}{n}(1+3e\sqrt{6\log\frac{en}{k+k^{*}}})<1+\xi.
	\end{equation*}
	\end{proof}
	
	The condition $2\frac{k+k^{*}}{n}(1+3e\sqrt{6\log\frac{en}{k+k^{*}}})<1+\xi$ seems quite abstract. By approximating $f(t)=2t(1+3e\sqrt{6\log\frac{e}{t}})$ using its second order Taylor's expansion at $t= 1/10$, which is shown in Figure \ref{figure4}. We can give an approximated breakdown point of TRIP algorithm when $\xi$ is not too large, i.e.,
	\begin{equation}
		k^{*}\le k\le (0.3023-\sqrt{0.0887-0.0040\xi})n.
	\end{equation}
		\begin{figure*}[hp]
		\centering
		\subfigure[]
		{
			\begin{minipage}{5cm}
				\centering
				\includegraphics[width=5cm]{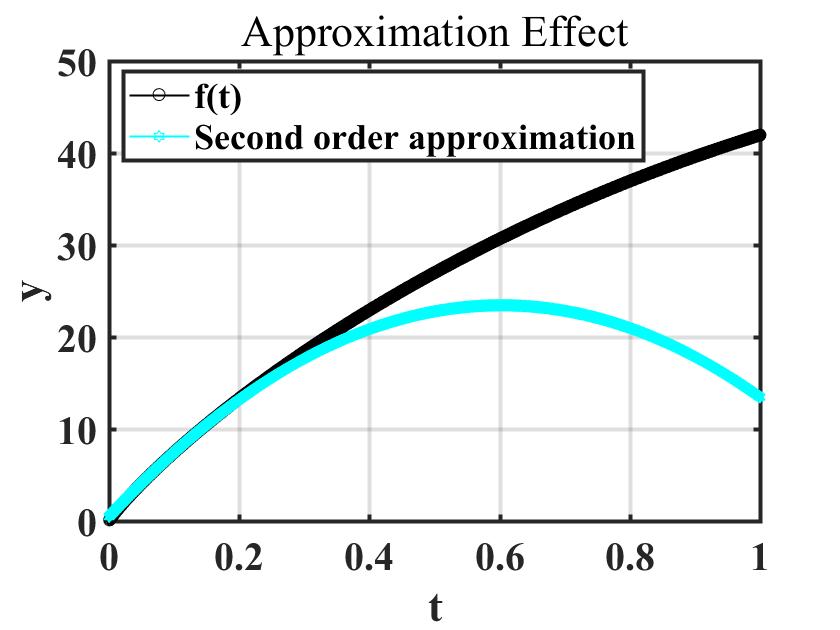}
			\end{minipage}
		}
		\subfigure[]
		{
			\begin{minipage}{5cm}
				\centering
				\includegraphics[width=5cm]{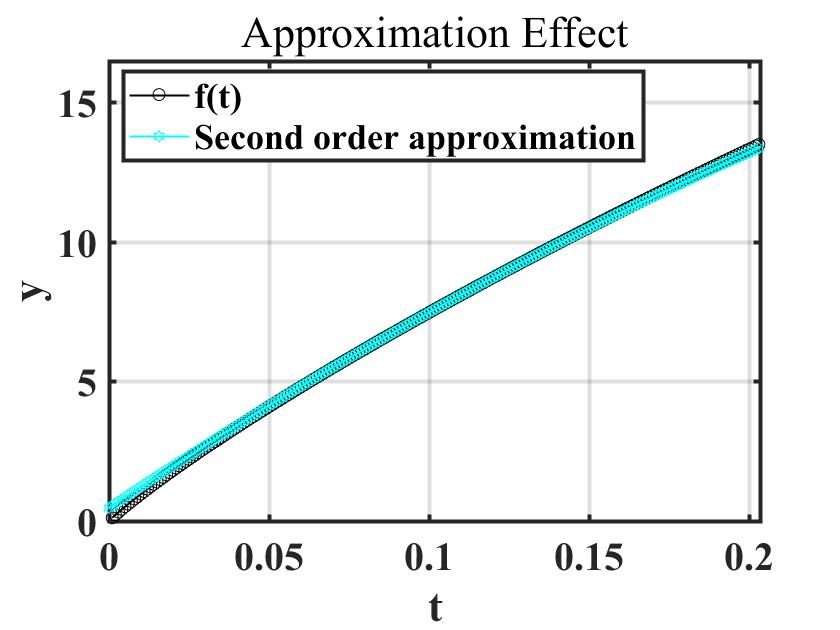}
			\end{minipage}
		}
		
		\caption{(a) The approximation of the second order Taylor's expansion on $[0,1]$. (b) Approximation on the interval $[0,0.2]$. } 
		\label{figure4}
	\end{figure*}

\subsection{Convergence Proof for BRHT}
\subsubsection{Proof of Theorem 5}
\begin{lemma}\label{lemma1}
For any real function $f(x)$:
\begin{equation}
\sup_{x\ge 0} [f(x)+ax] \le \sup_{x\ge 0} [f(x)+bx]
\end{equation}
for any $b \ge a \ge 0$.
\end{lemma}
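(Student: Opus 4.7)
The plan is to reduce the inequality to a pointwise comparison and then take suprema. The key observation is that the condition $x \ge 0$ together with $b \ge a \ge 0$ forces $ax \le bx$ for every admissible $x$, so the objective on the left is dominated pointwise by the objective on the right.

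Concretely, first I would fix an arbitrary $x \ge 0$ and write
\[
f(x) + ax = f(x) + bx + (a-b)x.
\]
Since $a - b \le 0$ and $x \ge 0$, the extra term $(a-b)x$ is non-positive, so $f(x) + ax \le f(x) + bx$. Next, I would bound the right-hand side by its supremum over the same domain, giving
\[
f(x) + ax \;\le\; f(x) + bx \;\le\; \sup_{x' \ge 0}\bigl[f(x') + bx'\bigr].
\]
The right-hand side no longer depends on $x$, so taking the supremum of the left-hand side over $x \ge 0$ yields the desired inequality.

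The main (and only) subtlety is ensuring that the constraint $x \ge 0$ is used essentially: if $x$ were allowed to be negative, the inequality would fail. Since both suprema are taken over the same non-negative half-line, this is automatic. No assumption on measurability, continuity, or finiteness of $f$ is needed, and the argument extends verbatim to any linearly ordered index set. Because the proof is a two-line pointwise estimate, I do not anticipate any real obstacle; the lemma will be invoked later simply as a monotonicity fact for suprema of linear perturbations.
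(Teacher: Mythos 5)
Your proof is correct. It rests on the same elementary observation the paper uses --- that $(a-b)x\le 0$ for $x\ge 0$, so the left objective is dominated pointwise by the right one --- but you package it as a direct two-step estimate (pointwise bound, then monotonicity of the supremum), whereas the paper argues by contradiction: it assumes the inequality fails, extracts a sequence $\{x_i\}$ with $f(x_i)+ax_i$ tending to the left supremum, and derives a contradiction from the set $\{f(x_i)+bx_i\}$ being squeezed between the two suprema. The two arguments have identical mathematical content, but yours is shorter and avoids the unnecessary detour through an approximating sequence and a proof by contradiction; it also makes explicit (as the paper does not) exactly where the hypotheses $x\ge 0$ and $b\ge a$ enter. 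Nothing is missing from your version, and your remark that no regularity of $f$ is needed applies equally to both.
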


\begin{proof}
Suppose the lemma does not hold, that is, $a\ge 0, b\ge a$, but 
\[\sup_{x\ge 0} [f(x)+ax] > \sup_{x\ge 0} [f(x)+bx]\]
We select the array $\{x_n\}=\{x_1,x_2,\dots\}$ such that $\lim_{i\to \infty}[f(x_i)+ax_i]=\sup_{x\ge 0} [f(x)+ax]$. Then, we consider the set 
$S\doteq\{f(x_i)+bx_i|x_i \in \{x_n\}\}$. It is easy to see that $\sup S \ge \sup_{x\ge 0} [f(x)+ax]$ and $\sup S \le \sup_{x\ge 0} [f(x)+bx]$. As shown above, however, $\sup_{x\ge 0} [f(x)+ax] > \sup_{x\ge 0} [f(x)+bx]$. This is a contradiction, and Lemma \ref{lemma1} is proved.
\end{proof}

\begin{theorem_apd}
Suppose that the prior of $r_i$ is independently and identically distributed (iid). We consider $\mathbf{w}_t,\mathbf{r}_t=\arg \max_{\mathbf{w}\in \mathbb{R}^{d},\mathbf{r}\in \mathbb{R}_{+}^{n}}M(\mathbf{w},\mathbf{r},\mathbf{b}_{t})$, which is the estimation in the $t^{th}$ iteration step of the BRHT algorithm, and $\mathbf{b}_t= HT_k(\mathbf{y}-X^T\mathbf{w}_{t-1})$ is obtained from the hard thresholding step. Then, we have that $U(\mathbf{w}_t,\mathbf{r}_t,S_{t+1})\ge U(\mathbf{w}_{t-1},\mathbf{r}_{t-1},S_{t})$.
\end{theorem_apd}
\begin{proof}
After obtaining $\mathbf{b}_{t}$ by $\mathbf{b}_{t}=HT_{k}(\mathbf{y}-X^{T}\mathbf{w}_{t-1})$, we consider $M(\mathbf{w}_{t-1},\mathbf{r}_{t-1},\mathbf{b}_{t})$, that is:
\begin{align*} 
M(\mathbf{w}_{t-1},\mathbf{r}_{t-1},\mathbf{b}_{t})=
{}&\log p_{\mathbf{w}}(\mathbf{w}_{t-1})+\sum_{i \in S_{t}}[\log p_{\mathbf{r}}(r_i^{t-1})+r_i^{t-1} \log\ell(y_{i}\mid \mathbf{w}_{t-1},\mathbf{x}_i,\sigma^2)]\\
&+\sum_{j \in [n]\backslash S_{t}}[p_{\mathbf{r}}(r_j^{t-1})+r_j^{t-1}\ell(0)]
\end{align*}
where $\ell(0)$ is the value of the likelihood of $\mathcal{N} (0,\sigma^2)$. This is because, after the hard thresholding step and if $i$ is not chosen from the clean set,' $y_i-b_i^{t}=y_i-(y_i-X^{T}\mathbf{w}_{t-1})=X^{T}\mathbf{w}_{t-1}$. Thus, it can be seen that $\ell(y_{i}-b_i^{t}\mid \mathbf{w}_{t-1},\mathbf{x}_i,\sigma^2)=\ell(\mathbf{x}_i^{T}\mathbf{w}_{t-1}\mid \mathbf{w}_{t-1},\mathbf{x}_i,\sigma^2)=\ell(0)$.
We consider a pseudo-reweighting process (this is just for the convenience of the proof and does not appear in the algorithm, but does not affect the result of the algorithm). We try to maximize $M(\mathbf{w}_{t-1},\mathbf{r},\mathbf{b}_{t})$ by varying $\mathbf{r}$. Because of the independence of $p_{\mathbf{r}}(r_i)$ and the definition of $M(\mathbf{w}_{t-1},\mathbf{r}_{t-1},\mathbf{b}_{t})$, the value of $\mathbf{r}$ in $S_{t}$ is unchanged.
\begin{align*}
\tilde{M}_{t-1}={}&\max_{\mathbf{r}\in\mathbb{R}^n}M(\mathbf{w}_{t-1},\mathbf{r},\mathbf{b}_{t})\\
={}&\log p_{\mathbf{w}}(\mathbf{w}_{t-1})+\sum_{i \in S_{t}}[\log p_{\mathbf{r}}(r_i^{t-1})+r_i^{t-1} \log\ell(y_{i}\mid \mathbf{w}_{t-1},\mathbf{x}_i,\sigma^2)]+kg(0)\\
={}&U(\mathbf{w}_{t-1},\mathbf{r}_{t-1},S_{t})+kg(0)
\end{align*}
where $g(0)$ is defined as $\max_{r_i}[p_{\mathbf{r}}(r_i)+r_i\ell(0)]$. Next, we consider the update of $\mathbf{w}$. Because:
\[\mathbf{w}_t,\mathbf{r}_t=\arg \max_{\mathbf{w}\in \mathbb{R}^{d},\mathbf{r}\in \mathbb{R}_{+}^{n}}M(\mathbf{w},\mathbf{r},\mathbf{b}_{t})\]
it is easy to see that: 
\[M(\mathbf{w}_t,\mathbf{r}_t,\mathbf{b}_{t})\ge
\max_{\mathbf{r}\in\mathbb{R}_{+}^{n}}M(\mathbf{w}_{t-1},\mathbf{r},\mathbf{b}_{t})
=\tilde{M}_{t-1}\]
Finally, we examine $\tilde{M}_{t}=\max_{\mathbf{r}\in\mathbb{R}^n}M(\mathbf{w}_{t},\mathbf{r},\mathbf{b}_{t+1})$. The explicit form of $\tilde{M}_{t}$ can be given by $\tilde{M}_{t-1}$. We compare the $\tilde{M}_{t}$ and $M(\mathbf{w}_t,\mathbf{r}_t,\mathbf{b}_{t})$:
\begin{align*}
\tilde{M}_{t}-M(\mathbf{w}_t,\mathbf{r}_t,\mathbf{b}_{t})
={}&\log p_{\mathbf{w}}(\mathbf{w}_{t})+\sum_{i \in S_{t+1}}[\log p_{\mathbf{r}}(r_i^{t})+r_i^{t} \log\ell(y_{i}\mid \mathbf{w}_{t},\mathbf{x}_i,\sigma^2)]+kg(0)\\
&-\{\log p_{\mathbf{w}}(\mathbf{w}_{t})+\sum_{j \in S_{t}}[\log p_{\mathbf{r}}(r_j^{t})+r_j^{t} \log\ell(y_{j}\mid \mathbf{w}_{t},\mathbf{x}_j,\sigma^2)]\\
&+\sum_{j \in [n]\backslash S_{t}}[p_{\mathbf{r}}(r_j^{t})+r_j^{t} \log\ell(y_{j}-b_j^{t}\mid \mathbf{w}_{t},\mathbf{x}_j,\sigma^2)]\}
\end{align*}
\begin{align*}
={}&\sum_{i\in S_{t+1}\backslash S_{t}}[\log p_{\mathbf{r}}(r_i^{t})+r_i^{t} \log\ell(y_{i}\mid\mathbf{w}_{t},\mathbf{x}_i,\sigma^2)]
-\sum_{j\in S_{t}\backslash S_{t+1}}[\log p_{\mathbf{r}}(r_j^{t})+r_j^{t} \log\ell(y_{j}\mid\mathbf{w}_{t},\mathbf{x}_j,\sigma^2)]\\
&+kg(0)-\sum_{j \in [n]\backslash S_{t}}[p_{\mathbf{r}}(r_j^{t})+r_j^{t} \log\ell(y_{j}-b_j^{t}\mid\mathbf{w}_{t},\mathbf{x}_j,\sigma^2)]
\end{align*}
Following the hard thresholding step, $\forall i \in S_{t+1}\backslash S_{t}$ and $\forall j \in S_{t}\backslash S_{t+1}$, $|y_i-\mathbf{x}_i^{T}\mathbf{w}_t|\le |y_j-\mathbf{x}_j^{T}\mathbf{w}_t|$, and so $\log\ell(y_{i}\mid\mathbf{w}_{t},\mathbf{x}_i,\sigma^2)\ge \log\ell(y_{j}\mid\mathbf{w}_{t},\mathbf{x}_j,\sigma^2)$. By Lemma \ref{lemma1}, we have that: 
\begin{equation*}
\log p_{\mathbf{r}}(r_i^{t})+r_i^{t} \log\ell(y_{i}\mid\mathbf{w}_{t},\mathbf{x}_i,\sigma^2)
\ge \log p_{\mathbf{r}}(r_j^{t})+r_j^{t} \log\ell(y_{j}\mid \mathbf{w}_{t},\mathbf{x}_j,\sigma^2)
\end{equation*}
and because $\forall j \in [n]\backslash S_{t}$, $\log\ell(y_{j}-b_{j}^{t}\mid \mathbf{w}_{t},\mathbf{x}_j,\sigma^2)\le \ell(0)$, we have:
\begin{equation*}
\log p_{\mathbf{r}}(r_j^{t})+r_j^{t}\log\ell(y_{j}-b_{j}^{t}\mid \mathbf{w}_{t},\mathbf{x}_j,\sigma^2) \le g(0)
\end{equation*} 
This proves that: 
\begin{equation*}
\tilde{M}_{t} \ge M(\mathbf{w}_t,\mathbf{r}_t,\mathbf{b}_{t}) 
\end{equation*}

Note that $M(\mathbf{w}_t,\mathbf{r}_t,\mathbf{b}_{t})\ge \tilde{M}_{t-1}$. Using the expressions for $\tilde{M}_{t}$ and $\tilde{M}_{t-1}$:
\begin{align*}
U(\mathbf{w}_{t},\mathbf{r}_{t},S_{t+1})+kg(0) \ge& U(\mathbf{w}_{t-1},\mathbf{r}_{t-1},S_{t})+kg(0)\\
U(\mathbf{w}_{t},\mathbf{r}_{t},S_{t+1})\ge&U(\mathbf{w}_{t-1},\mathbf{r}_{t-1},S_{t})
\end{align*}
\end{proof}

\subsubsection{Proof of Theorems 6 and 7}

To prove Theorem 6, we require a certain assumption. We will show that this assumption is reasonable through a brief description in Appendix C.3.2.

\theoremstyle{plain} 
\newtheorem{assumption}{Assumption}

\begin{assumption}\label{assumption1}
	Let $X$ be the given data matrix and  $\mathbf{y}=X^{T}\mathbf{w}^{*}+\mathbf{b}^{*}+\boldsymbol{\epsilon}$ be the output. For any specific positive semi-definite matrix $M$, there exist $l>0$ and $0<\gamma\le1+\epsilon$, where $\epsilon$ is a small positive number, that for any estimation $\hat{\mathbf{b}}$ of $\mathbf{b}^{*}$, and let $I_{\hat{\mathbf{b}}}=supp(\hat{\mathbf{b}})\cup supp(\mathbf{b}^*)$, it holds that
	\[
	u_1=\|\boldsymbol{\epsilon}_{I_{\hat{\mathbf{b}}}}+X_{I_{\hat{\mathbf{b}}}}^T(\mathbf{w}^{*}-\mathbf{w}_1)\|_2\le	\gamma\|\boldsymbol{\epsilon}_{I_{\hat{\mathbf{b}}}}+X_{I_{\hat{\mathbf{b}}}}^T(\mathbf{w}^{*}-\mathbf{w}_2)\|_2=\gamma u_2
	\]
	where $\mathbf{w}_1$ and $\mathbf{w}_2$ are obtained from:
	
	\[
	\mathbf{w}_1=VBEM(X,\mathbf{y}-\hat{\mathbf{b}},p_{\mathbf{r}}(\mathbf{r}),p_{\mathbf{w}}(\mathbf{w}))
	\]
	
	\[
	\mathbf{w}_2 =\arg\min_{\mathbf{w} \in \mathbb{R}^d} \sum_{i=1}^{n}\|y_i-\hat{b}_i-\mathbf{x}_i^{T}\mathbf{w}\|^2+(\mathbf{w}-\mathbf{w}_0)^{T}M(\mathbf{w}-\mathbf{w}_0)
	\]
	
	and $p_{\mathbf{w}}(\mathbf{w})=\mathcal{N} (\mathbf{w}_0,l\sigma^2 M^{-1})$
\end{assumption}
This assumption can be easily understood as making the Bayesian reweighting regression more robust and accurate than simple regression, thus providing a more reliable solution in each iteration step of the BRHT algorithm. This can be explained from the following two aspects: 1) the Bayesian reweighting regression adds smaller weights to points with large deviations, so the regression is less affected by outliers, especially when the estimation $\hat{\mathbf{b}}$ is not very accurate. 2) By considering the robustness of the Bayesian reweighting regression, smaller prior weights are required to recover the true coefficient. Thus, $\mathbf{w}_2$ is closer to the true coefficient $\mathbf{w}^*$ than $\mathbf{w}_1$, which is reflected in the prior shrinkage coefficient $t$ and the error shrinkage coefficient $\gamma$. 

\begin{theorem_apd}
Consider a data matrix $X$ and a specific positive semi-definite matrix $M$ satisfying the SSC and SSS properties such that $2\frac{\Lambda_{k+k^{*}}}{\lambda_{min}(XX^T+M)}<1$. Then, there exist $l>0$ and $0<\gamma\le1+\epsilon$, where $\epsilon$ is a small number, such that if $k>k^{*}$ and $\Sigma_0$ in the prior $p_{\mathbf{w}}(\mathbf{w})$ is $l \sigma^2 M^{-1}$, it is guaranteed with a probability of at least $1-\delta$ that, for any $\varepsilon,\delta>0$, $\|\mathbf{b}^{T_0}-\mathbf{b}^*\|_{2}\le \varepsilon + \textit{O}(e_0)+\textit{O}(\frac{\sqrt{\Lambda_{k+k^*}}\lambda_{max}(M)}{\lambda_{min}(XX^T+M)})\gamma\|\mathbf{w}^*-\mathbf{w}_0\|_2$ after $T_{0}=\textit{O}(\log (\frac{\gamma\|\mathbf{b}^*\|_2}{\varepsilon}))$ iterations of BRHT, where $e_0=\textit{O}(\sigma\sqrt{(k+k^*)\log\frac{n}{\delta(k+k^*)}})$ under the normal design. 
\end{theorem_apd}

\begin{proof}
The iteration step of the BRHT algorithm is:
\[
\mathbf{b}^{t+1}\gets HT_k(\mathbf{y}-X^{T}\mathbf{w}^t)
\]
where $\mathbf{w}^t= VBEM(X,\mathbf{y}-\mathbf{b}^{t},p_{\mathbf{r}}(\mathbf{r}),p_{\mathbf{w}}(\mathbf{w}))$ and:
\begin{align*}
\|\mathbf{b}_{I^{t+1}}^{t+1}-(\mathbf{y}_{I^{t+1}}-X^{T}_{I^{t+1}}\mathbf{w}^t)\|_{2}
&\le 
\|\mathbf{b}_{I^{t+1}}^{*}-(\mathbf{y}_{I^{t+1}}-X_{I^{t+1}}^{T}\mathbf{w}^t)\|_2\\
&=
\|\mathbf{b}_{I^{t+1}}^{*}-(\mathbf{b}_{I^{t+1}}^{*}+\mathbf{\epsilon}_{I_{t+1}}+X_{I_{t+1}}^T(\mathbf{w}^{*}-\mathbf{w}^t))\|_{2}\\
&=\|\mathbf{\epsilon}_{I_{t+1}}+X_{I_{t+1}}^T(\mathbf{w}^{*}-\mathbf{w}^t)\|_2
\end{align*}
By defining $\hat{\mathbf{w}}^t=(XX^T+M)^{-1}(X(\mathbf{y}-\mathbf{b}^t)+M\mathbf{w}_0)$ and using the trigonometric inequality, we obtain:
\begin{align*}
\|\mathbf{b}_{I^{t+1}}^{t+1}-\mathbf{b}_{I^{t+1}}^{*}\|_2
&\le
2\|\mathbf{\epsilon}_{I_{t+1}}+X_{I_{t+1}}^T(\mathbf{w}^{*}-\mathbf{w}^t)\|_2\\
&\le 
2\gamma\|\mathbf{\epsilon}_{I_{t+1}}+X_{I_{t+1}}^T(\mathbf{w}^{*}-\hat{\mathbf{w}}^t)\|_2\\
&=
2\gamma\|X_{I^{t+1}}^{T}\mathbf{\lambda}^t+\mathbf{g}_{I^{t+1}}+\mathbf{f}_{I^{t+1}}\|_2
\end{align*}
The second inequality holds because of assumption 1. $\mathbf{\lambda}^t$, $\mathbf{g}$, $\mathbf{f}$ have the same meaning as in Theorem 1. Therefore, through the same proof procedure as for Theorem 1, the above inequality can be finally transformed into the following formula:
\begin{align*}
\|\mathbf{b}^{t+1}-\mathbf{b}^{*}\|_2\le{}&
2\gamma\frac{\Lambda_{k+k^{*}}}{\lambda_{min}(XX^T+M)} \|\mathbf{b}^{t}-\mathbf{b}^{*}\|_2+2\gamma(1+\sqrt{\frac{2d}{n}\log(\frac{d}{\delta})})e_0\\
&+2\gamma\frac{\sqrt{\Lambda_{k+k^*}}\lambda_{max}(M)}{\lambda_{min}(XX^T+M)}\|\mathbf{w}^*-\mathbf{w}_0\|_2
\end{align*}
We let $\eta=2\gamma\frac{\Lambda_{k+k^{*}}}{\lambda_{min}(XX^T+M)}$, and because $\mathbf{b}^0=0$, we can write:
\begin{align*}
\|\mathbf{b}^{t+1}-\mathbf{b}^{*}\|_2\le{}&
\eta^{t}\|\mathbf{b}^{*}\|_2+\frac{2\gamma}{1-\eta}(1+\sqrt{\frac{2d}{n}\log(\frac{d}{\delta})})e_0\\
&+\frac{2\gamma}{1-\eta}\frac{\sqrt{\Lambda_{k+k^*}}\lambda_{max}(M)}{\lambda_{min}(XX^T+M)}\|\mathbf{w}^*-\mathbf{w}_0\|_2
\end{align*}
Suppose that $n>d\log(d)$. Then, $1+\sqrt{\frac{2d}{n}\log(\frac{d}{\delta})}=\textit{O}(1)$. From the expression for $e_0$, we have that $e_0=\textit{O}(\sigma\sqrt{(k+k^*)\log\frac{n}{\delta(k+k^*)}})$.
Then, after $T_0=\textit{O}(\log(\frac{\|\mathbf{b}^*\|_2}{\varepsilon}))$, we have:
\[
\|\mathbf{b}^{T_0}-\mathbf{b}^*\|_{2}\le \varepsilon + \textit{O}(e_0)+\textit{O}(\frac{\sqrt{\Lambda_{k+k^*}}\lambda_{max}(M)}{\lambda_{min}(XX^T+M)})\gamma\|\mathbf{w}^*-\mathbf{w}_0\|_2
\]
\end{proof}

\begin{theorem_apd}
Under the conditions of Theorem 6 and assuming that $\mathbf{x}_i\in \mathbb{R}^d$ are generated from the standard normal distribution, there exist $\alpha>0$ and $0<\gamma\le1+\epsilon$, where $\epsilon$ is a small number, such that if $k>k^{*}$ and $\Sigma_0$ in the prior $p_{\mathbf{w}}(\mathbf{w})$ is $\alpha \sigma^2 M^{-1}$, it is guaranteed with a probability of at least $1-\delta$ that, for any $\varepsilon,\delta>0$, the current estimation coefficient $\mathbf{w}_{T_{0}}$ satisfies 
$\|\mathbf{w}_{T_{0}}-\mathbf{w}^*\|_2\le \textit{O}(\frac{1}{\sqrt{n}})(\varepsilon+e_0)+\textit{O}(\frac{\sqrt{k+k^*}\lambda_{\max}(M)}{n^{3/2}})\gamma\|\mathbf{w}^*-\mathbf{w}_0\|_2$ after $T_{0}=\textit{O}(\log (\frac{\gamma\|\mathbf{b}^*\|_2}{\varepsilon}))$ steps.
\end{theorem_apd}
The proof of Theorem 7 is the same as that for Theorem 2, so it is omitted here.
\subsubsection{Rationality of Assumption 1}
In this section, we use some simulations to check whether assumption 1 is true in the iteration of the BRHT algorithm. For this problem, we choose some special $M$ under AAA to make our description more representative. For each corruption ratio, we choose $M$ so as to achieve the minimum fitting error $\|\mathbf{w}_t-\mathbf{w}^*\|_2$ in the TRIP algorithm. We then find a prior shrinkage coefficient $l$ for this $M$ and simulate the BRHT algorithm to show that there exists an error shrinkage coefficient $\gamma$ such that the following formula holds in all iterative steps:
\[
u_{1t}=\|\boldsymbol{\epsilon}_{I_{t}}+X_{I_{t}}^T(\mathbf{w}^{*}-\mathbf{w}_{1t})\|_2\le \gamma\|\boldsymbol{\epsilon}_{I_{t}}+X_{I_{t}}^T(\mathbf{w}^{*}-\mathbf{w}_{2t})\|_2=\gamma u_{2t}
\]
where $\mathbf{w}_{1t}$ and $\mathbf{w}_{2t}$ are obtained from:
\[
\mathbf{w}_{1t}=VBEM(X,\mathbf{y}-\mathbf{b}^{t},p_{\mathbf{r}}(\mathbf{r}),p_{\mathbf{w}}(\mathbf{w}))
\]

\[
\mathbf{w}_{2t} =\arg\min_{\mathbf{w} \in \mathbb{R}^d} \sum_{i=1}^{n}\|y_i-b^t_i-\mathbf{x}_i^{T}\mathbf{w}\|^2+(\mathbf{w}-\mathbf{w}_0)^{T}M(\mathbf{w}-\mathbf{w}_0)
\]

and $p_{\mathbf{w}}(\mathbf{w})=\mathcal{N} (\mathbf{w}_0,l \sigma^2 M^{-1})$. For the chosen corruption ratio, we take eight evenly spaced points from 0.2 to 0.55. This is because the CRR method collapses when the corruption ratio exceeds 0.2, so including the prior become very important at this time. By selecting an appropriate prior shrinkage coefficient $l$ for different $M$, the overall result is as shown in Figure \ref{figure3}.

\begin{figure*}[htbp]
\centering
\subfigure[]
{
\begin{minipage}{3cm}
\centering
\includegraphics[width=3cm]{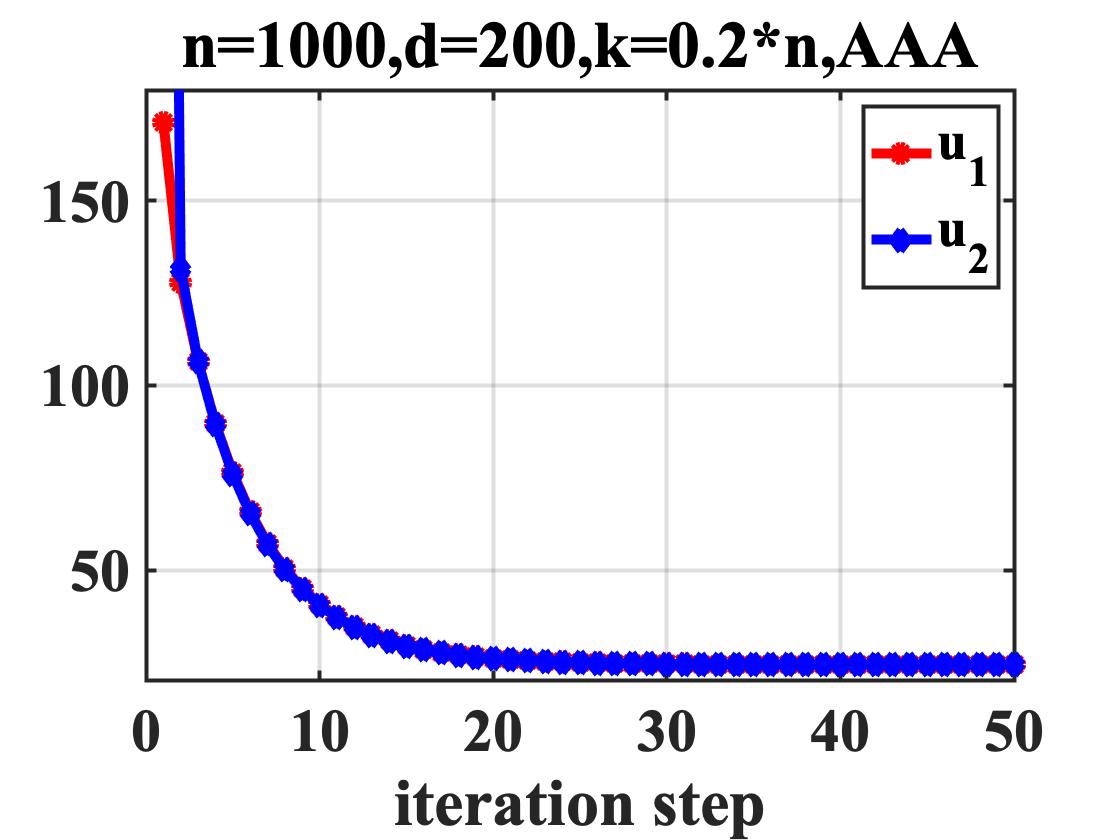}
\end{minipage}
}
\subfigure[]
{
\begin{minipage}{3cm}
\centering
\includegraphics[width=3cm]{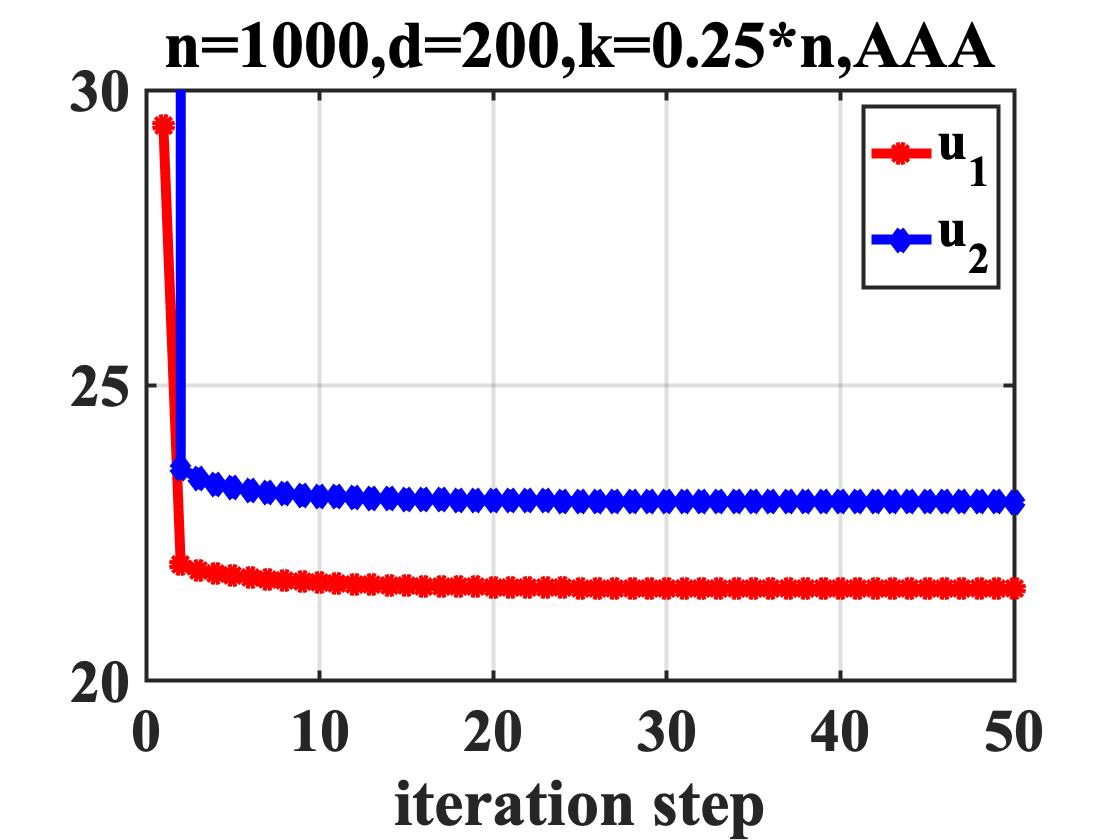}
\end{minipage}
}
\subfigure[]
{
\begin{minipage}{3cm}
\centering
\includegraphics[width=3cm]{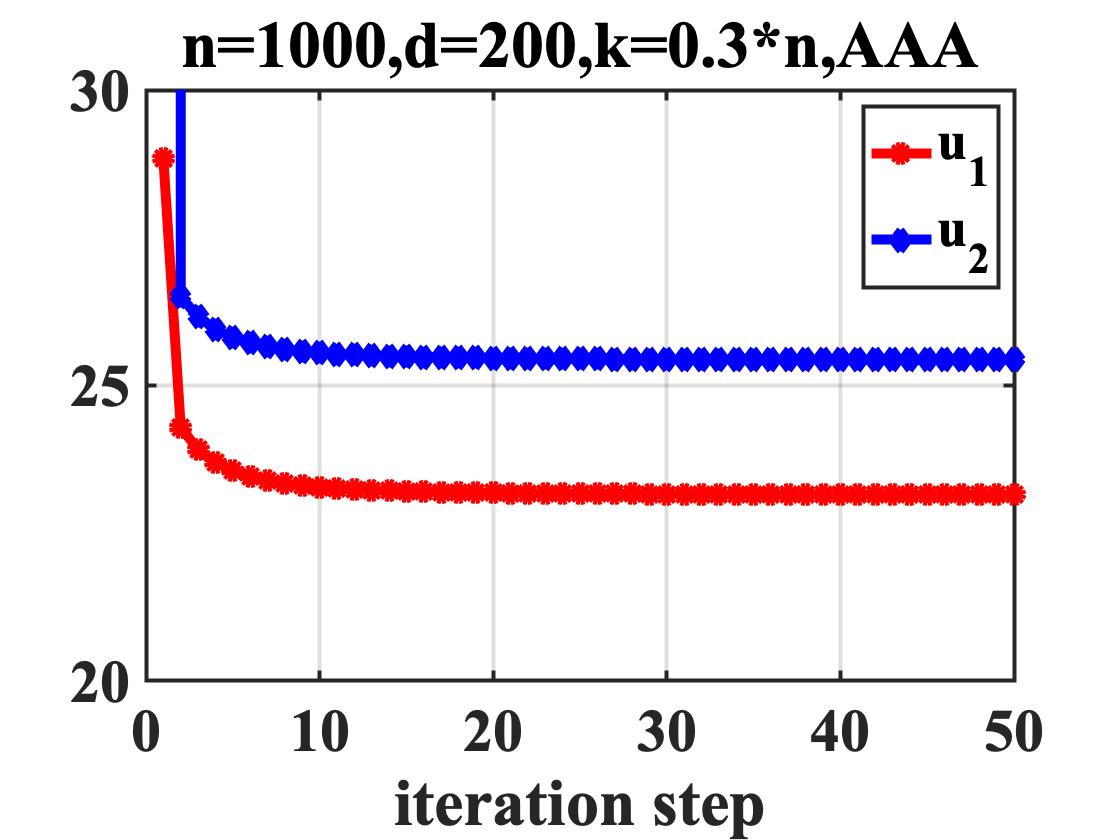}
\end{minipage}
}
\subfigure[]
{
\begin{minipage}{3cm}
\centering
\includegraphics[width=3cm]{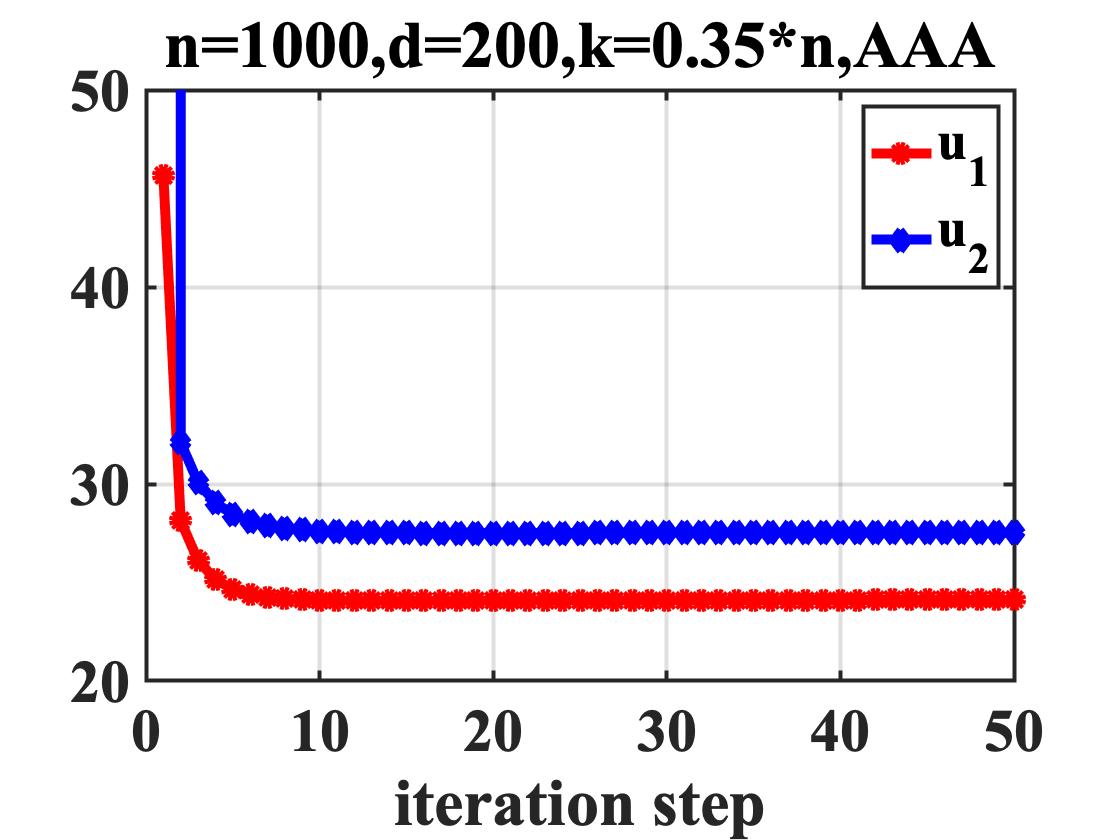}
\end{minipage}
}
\\
\subfigure[]
{
\begin{minipage}{3cm}
\centering
\includegraphics[width=3cm]{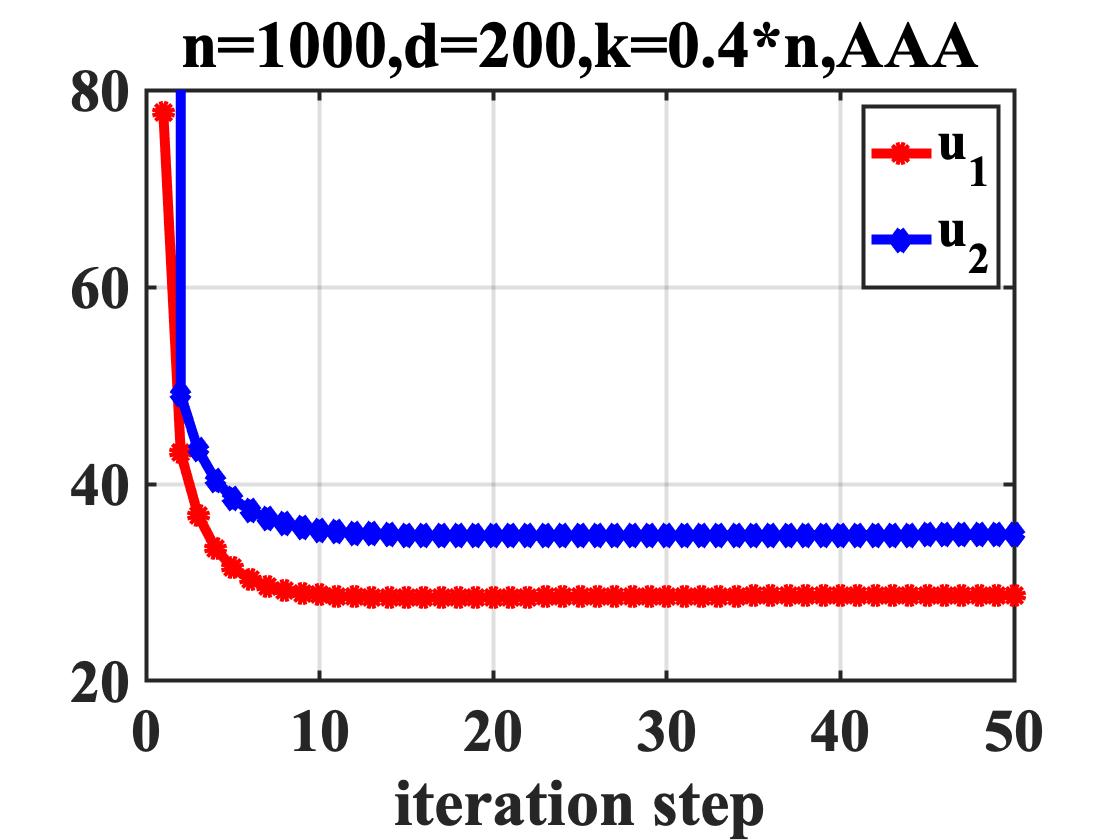}
\end{minipage}
}
\subfigure[]
{
\begin{minipage}{3cm}
\centering
\includegraphics[width=3cm]{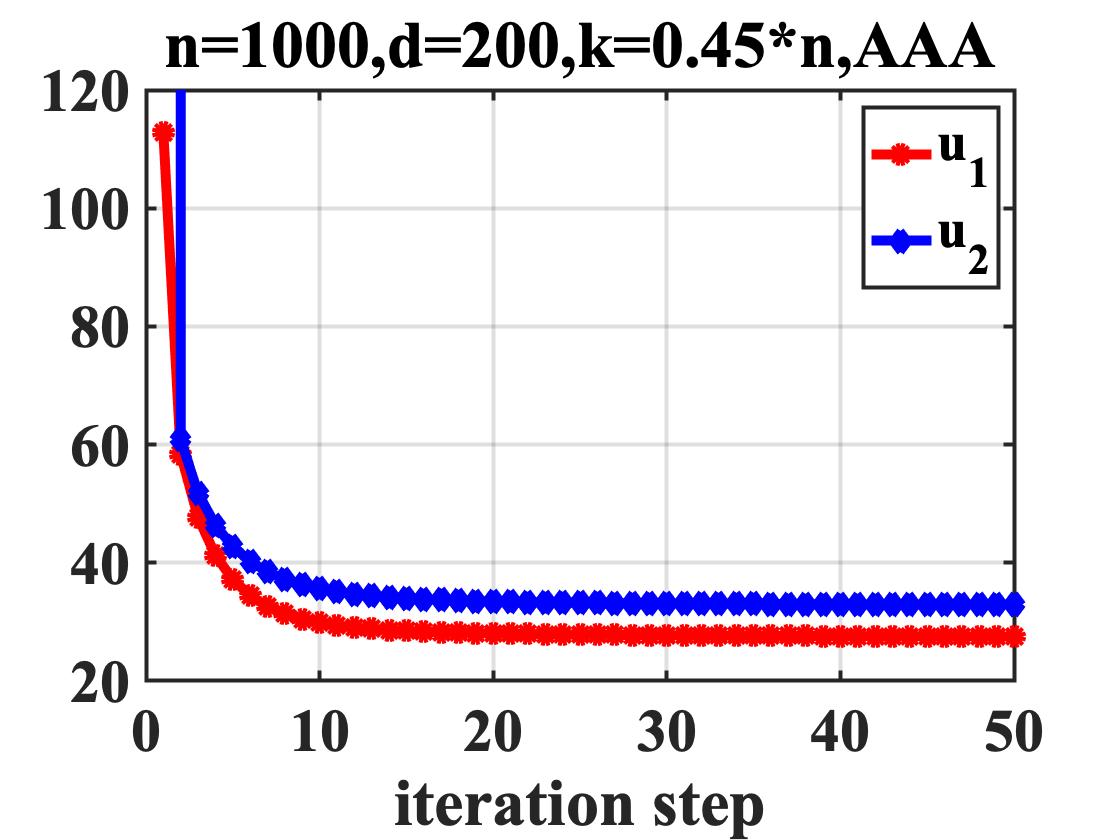}
\end{minipage}
}
\subfigure[]
{
\begin{minipage}{3cm}
\centering
\includegraphics[width=3cm]{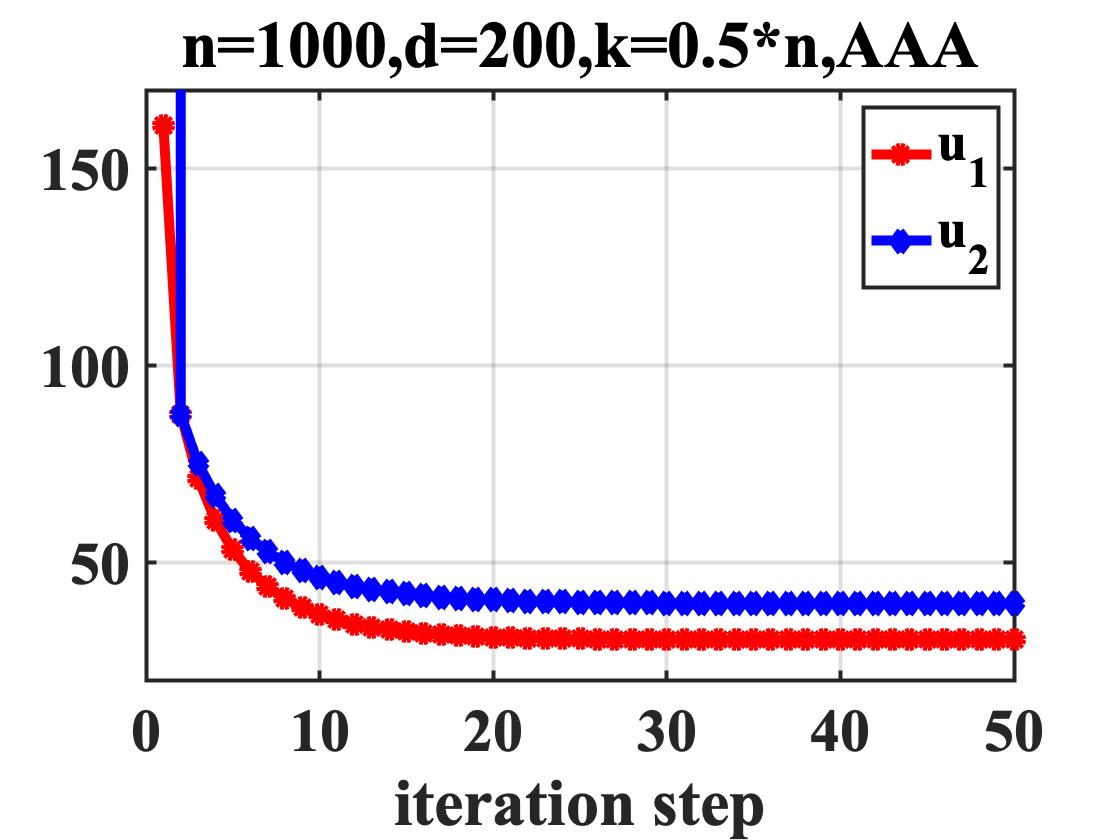}
\end{minipage}
}
\subfigure[]
{
\begin{minipage}{3cm}
\centering
\includegraphics[width=3cm]{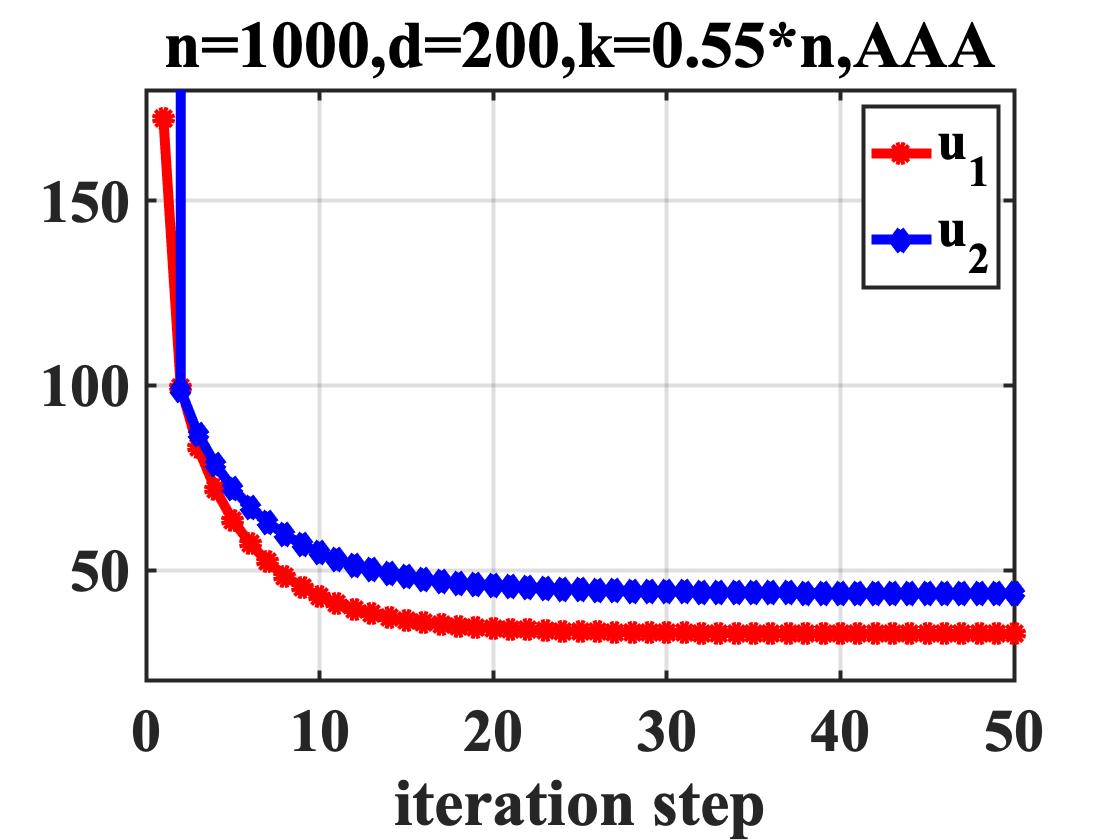}
\end{minipage}
}
\caption{Variation trends of $u_{1t}$ and $u_{2t}$ during the iteration process.} 
\label{figure3}
\end{figure*}

\spacingset{1.5}
We can find $\gamma$ for each $M$ by calculating $\max (u_{1t}/u_{2t})$, where $t=1,...$ are the iterative steps until convergence. The results are presented in Table \ref{tabel1}. For most corruption rates, $\gamma<1$. However, there are still some cases where $\gamma$ is greater than 1. This is because, in the iteration process, there are few steps in which $u_{1t}$ and $u_{2t}$ are very close, while in most cases they are well separated. We use another criterion, $mean (u_{1t}/u_{2t})$, to show this phenomenon; the results are presented in Table \ref{tabel2}. We can see that, as the corruption ratio increases, $mean (u_{1t}/u_{2t})$ basically exhibits a downward trend and all values are less than $1$. This indicates that the performance of BRHT is usually better than that suggested by the theory. This experiment can be used to explain why BRHT usually outperforms TRIP, even when TRIP uses the optimal parameters. 

\spacingset{1}
\begin{table}
\centering
\begin{tabularx}{14em}%
{|*{2}{>{\centering\arraybackslash}X|}}
\hline
Corruption rate & $\gamma$ \\ \hline
0.2 & 1.007 \\ \hline
0.25 & 0.937 \\ \hline
0.3 & 0.917 \\ \hline
0.35 & 0.876 \\ \hline
0.4 & 0.879 \\ \hline
0.45 & 0.958 \\ \hline
0.5 & 1.000 \\ \hline
0.55 & 1.008 \\ \hline
\end{tabularx}
\caption{Calculated $\gamma$ for each corruption rate} 
\label{tabel1}
\end{table}

\begin{table}
\centering
\begin{tabularx}{14em}%
{|*{2}{>{\centering\arraybackslash}X|}}
\hline
Corruption rate & mean($u_{1t}/u_{2t}$) \\ \hline
0.2 & 0.989 \\ \hline
0.25 & 0.917 \\ \hline
0.3 & 0.892 \\ \hline
0.35 & 0.857 \\ \hline
0.4 & 0.806 \\ \hline
0.45 & 0.827 \\ \hline
0.5 & 0.772 \\ \hline
0.55 & 0.758 \\ \hline
\end{tabularx}
\caption{Average error ratio for each corruption rate} 
\label{tabel2}
\end{table}

\spacingset{1.5}

\section{Choice of Hyperparameters in BRHT}
In the BRHT algorithm, the most important parameter for model performance is the parameter in the weight prior $p_{\mathbf{r}}(\mathbf{r})$. According to assumption 1, we must ensure that the Bayesian reweighting regression provides a more robust and accurate solution than traditional least-squares regression. This requires the weight ${E}_{q(\mathbf{r})}(\mathbf{r})$ in the variational M step of the VBEM algorithm to be relatively insensitive to $\mathbb{E}_{q(\mathbf{w})}[\log \ell(y_{i}\mid\mathbf{w},\mathbf{x}_i,\sigma^2)]$, or very few points will have large weights and others will have little impact on the estimates. A relatively sensitive weight will lead to bias, as only a few points of information will be used, and the effects will be even worse when some outliers have not been detected. Additionally, the weight cannot be too stable, or BRHT will have almost the same performance as TRIP. Here, we present a useful way to determine the hyperparameters so that all uncorrupted points have relatively large weights when the regression result is correct.

Consider the variational E step in the VBEM method. We have:
\begin{equation*}
q(r_i)\propto exp\{\log p_{\mathbf{r}}(r_i)+r_i\mathbb{E}_{q(\mathbf{w})}[\log \ell(y_{i}\mid \mathbf{w},\mathbf{x}_i,\sigma^2)]\}
\end{equation*}

We use the true likelihood $\log \ell(y_{i}\mid \mathbf{w}^*,\mathbf{x}_i,\sigma^2)$ to replace $\mathbb{E}_{q(\mathbf{w})}[\log \ell(y_{i}\mid \mathbf{w},\mathbf{x}_i,\sigma^2)]$, and we find that:
\[
\log \ell(y_{i}\mid \mathbf{w}^*,\mathbf{x}_i,\sigma^2)=-\frac{1}{2\sigma^2}(y_{i}-\mathbf{x}_i^T\mathbf{w}^{*})^2-\frac{1}{2}\log(2\pi\sigma^2)
\]
If $y_i$ is not corrupted, then $\frac{1}{\sigma^2}(y_{i}-\mathbf{x}_i^T\mathbf{w}^{*})^2=\frac{1}{\sigma^2}\epsilon_i^2\le \chi^2(0.95)$ holds with at least 95\% probability, where $\chi^2(0.95)$ is the $95$\% quantile of the $\chi^2$ distribution with 1 degree of freedom. Under the above condition, with at least $95$\% probability, it is easy to see that: 
\[
-\frac{1}{2}\chi^2(0.95)-\frac{1}{2}\log(2\pi\sigma^2)
\le
\log \ell(y_{i}\mid \mathbf{w}^*,\mathbf{x}_i,\sigma^2)\le -\frac{1}{2}\log(2\pi\sigma^2)
\]

Here, we define two posterior distributions of weights in the extreme case where all points fit well or deviate greatly in the true regression model:
\begin{align*}
&q_{1}(r_i)\propto exp[\log p_{\mathbf{r}}(r_i)+r_i(-\frac{1}{2}\log(2\pi\sigma^2))]\\
&q_{2}(r_i)\propto exp[\log p_{\mathbf{r}}(r_i)+r_i(-\frac{1}{2}\chi^2(0.95)-\frac{1}{2}\log(2\pi\sigma^2))]
\end{align*}
Then, the hyperparameter in the weight prior $p_{\mathbf{r}}(\mathbf{r})$ is determined by the following rule:
\[
\mathbb{E}_{q_{2}(r_i)}(r_i)\ge \beta\mathbb{E}_{q_{1}(r_i)}(r_i)
\]
In this paper, $\beta=\frac{1}{2}$. The parameter $\sigma^2$ can be replaced by a robust estimate such as the M estimator. After the weight prior $p_{\mathbf{r}}(\mathbf{r})$ has been determined, the hyperparameter $\Sigma$ in prior $p_{\mathbf{w}}(\mathbf{w})$ can be selected by cross-validation using $\Sigma$ in the specific form $\Sigma=sI$.

\section{Additional Experimental Results}\label{extra_result}
In this section, we give more experimental results of TRIP and BRHT in comparison with alternative methods. We also show the robustness of our methods under other attacks. First, We compare TRIP and BRHT with the TORRENT method proposed by Bhatia et al.~\cite{bhatia2015robust} on both OAA and AAA. TORRENT can resist AAA when the white noise $\boldsymbol{\epsilon}$ is not considered in the model. In order to evaluate the influence of white noise on robust regression, the true data are generated in two ways, one with white noise ($y_{i}=\mathbf{x}_{i}^T{w}^*+\epsilon_{i}$) and the other without white noise ($y_{i}=\mathbf{x}_{i}^T{w}^*$). Other settings are the same as those in Section 6. The experimental results are shown in Figure \ref{extra_exp1}. Under these two attacks, the performance of TORRENT algorithm is very consistent with that of CRR in both noisy and noiseless settings. TORRENT performs slightly better than CRR in the absence of white noise, as shown in Figure \ref{extra_exp1}(e). However, both CRR and TORRENT perform poorly under AAA. It can be seen that the TRIP and BRHT algorithms are very robust in all cases. 
\begin{figure*}[tbp]
\centering
\subfigure[]
{
\begin{minipage}{3cm}
\centering
\includegraphics[width=3cm]{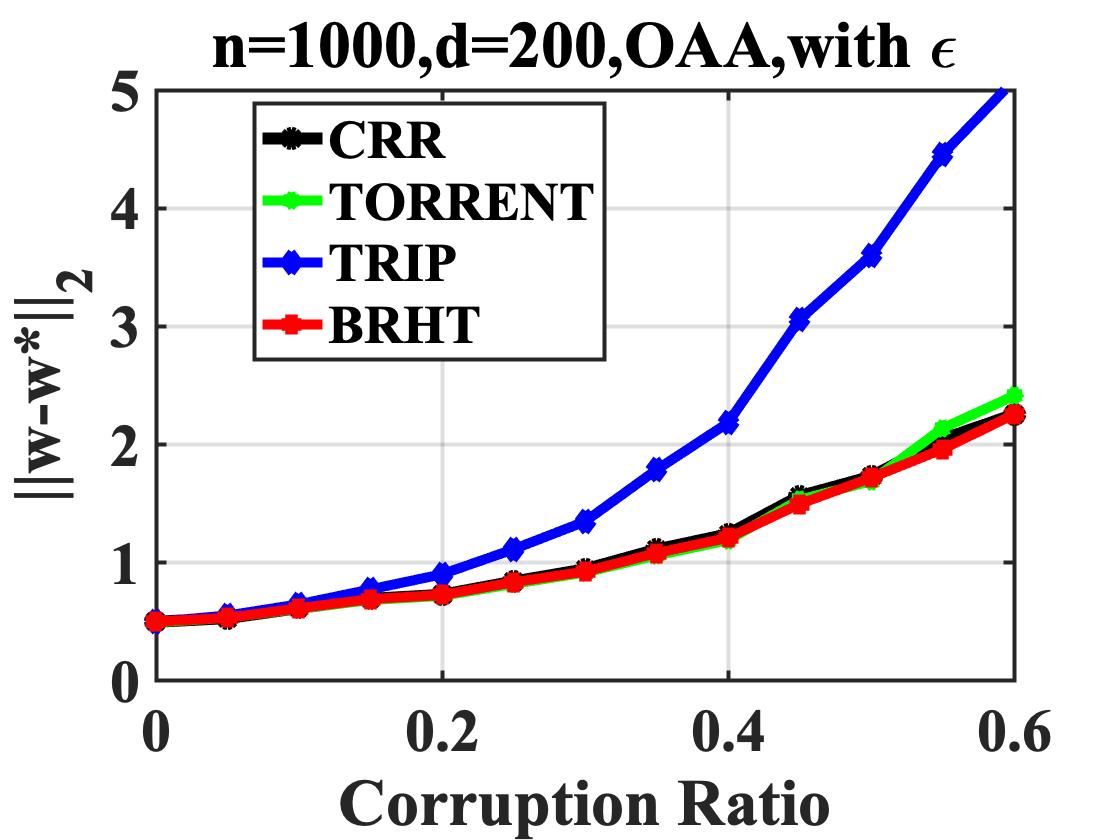}
\end{minipage}
}
\subfigure[]
{
\begin{minipage}{3cm}
\centering
\includegraphics[width=3cm]{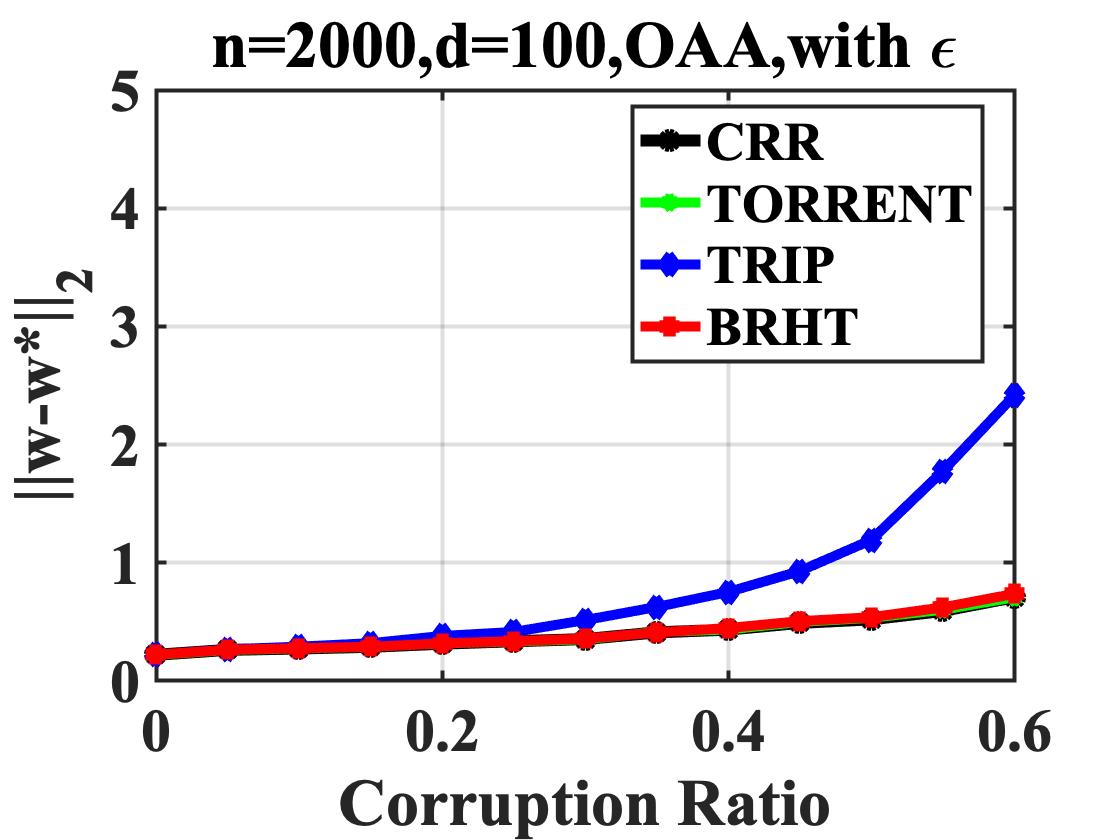}
\end{minipage}
}
\subfigure[]
{
\begin{minipage}{3cm}
\centering
\includegraphics[width=3cm]{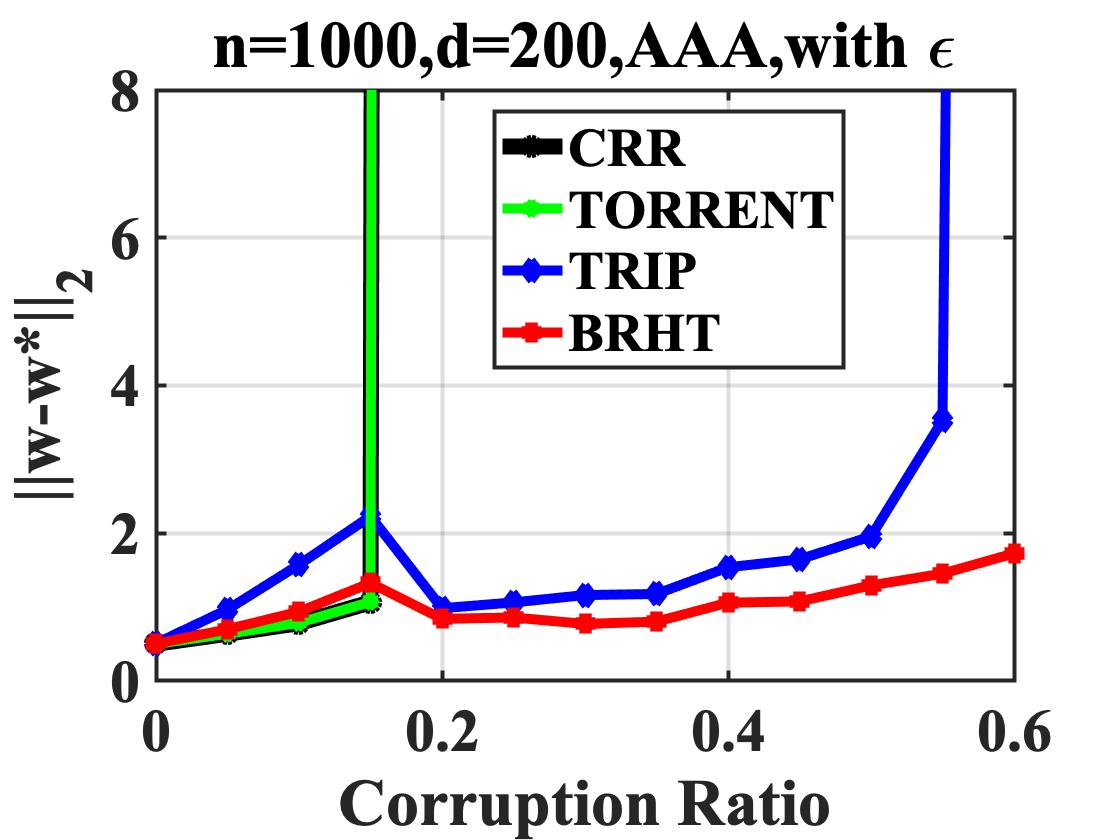}
\end{minipage}
}
\subfigure[]
{
\begin{minipage}{3cm}
\centering
\includegraphics[width=3cm]{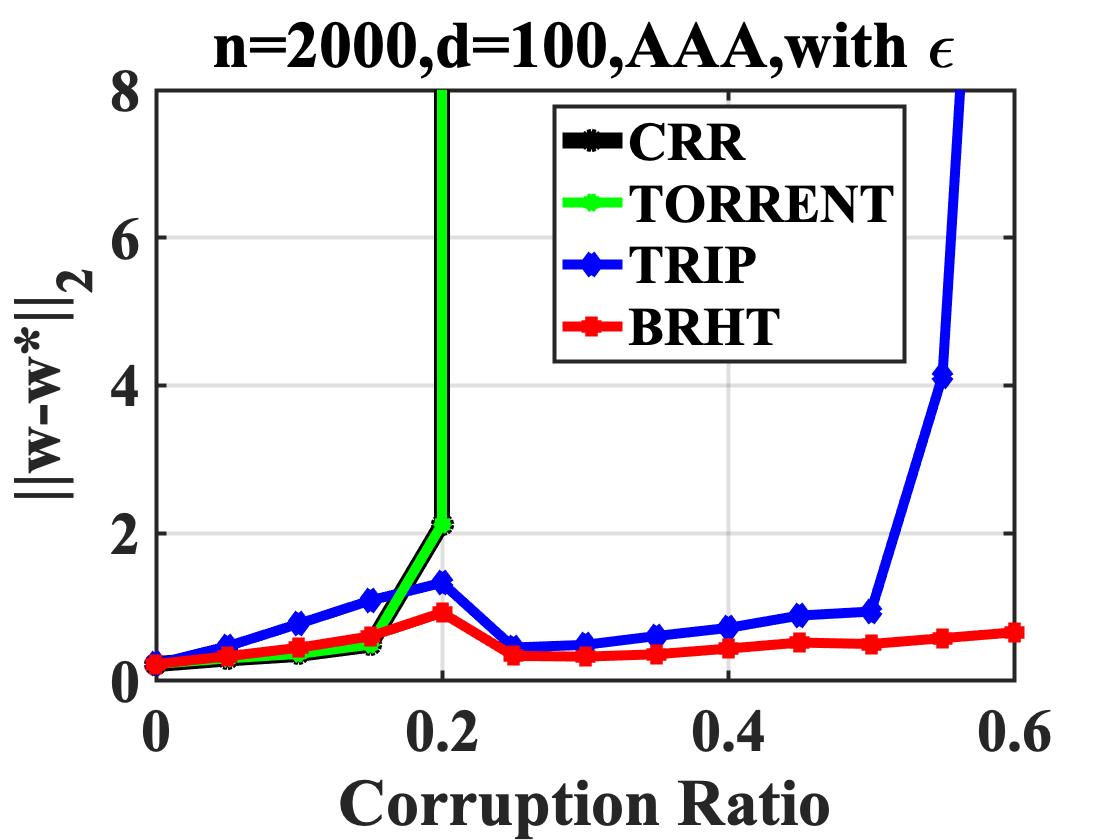}
\end{minipage}
}
\\
\subfigure[]
{
\begin{minipage}{3cm}
\centering
\includegraphics[width=3cm]{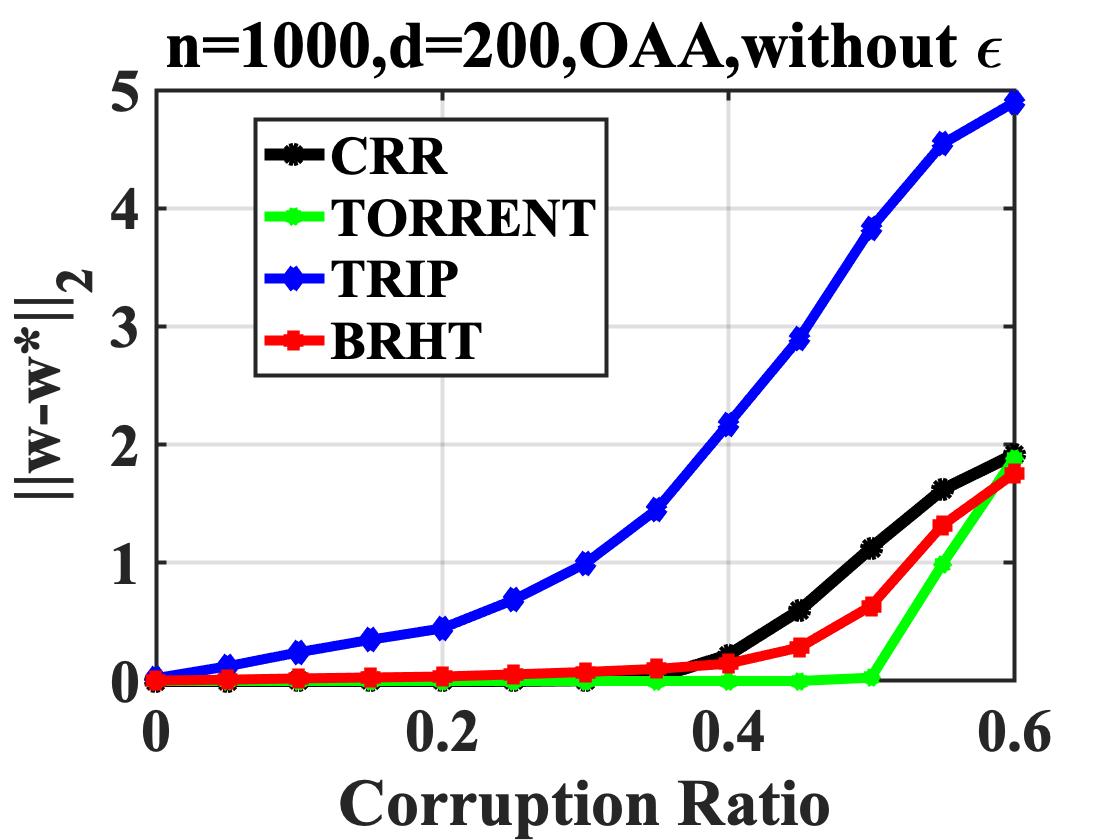}
\end{minipage}
}
\subfigure[]
{
\begin{minipage}{3cm}
\centering
\includegraphics[width=3cm]{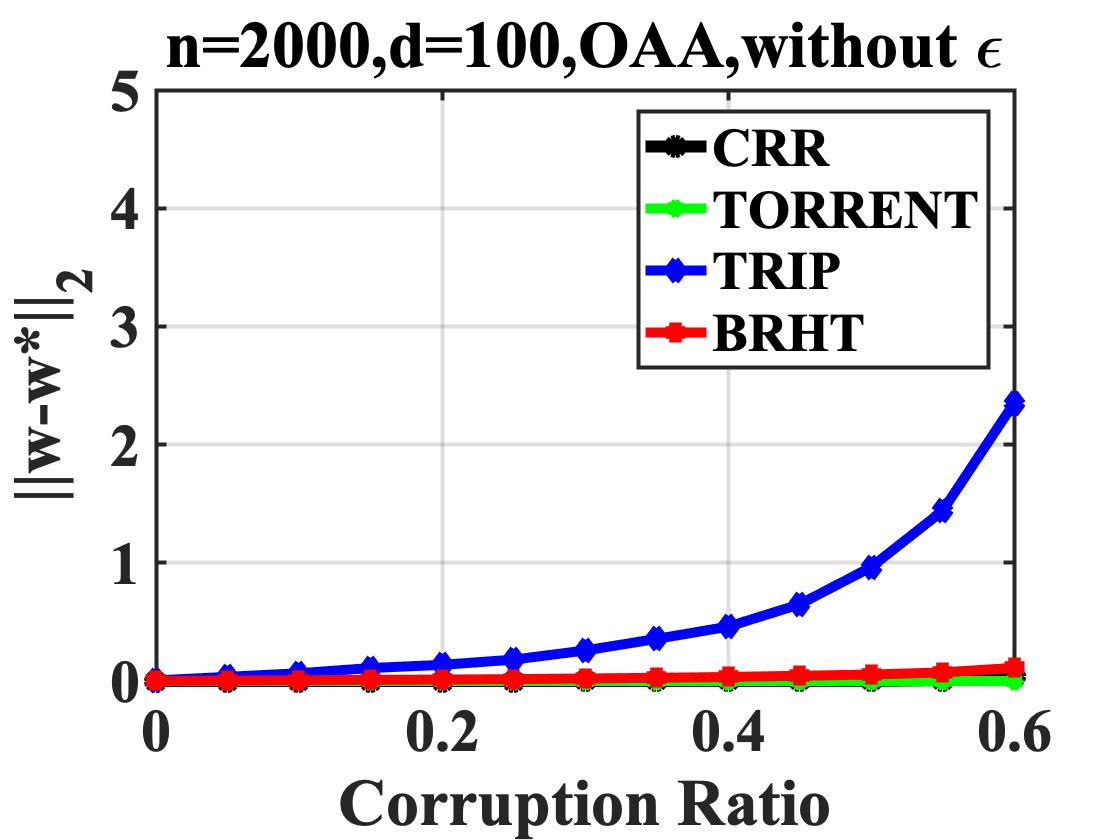}
\end{minipage}
}
\subfigure[]
{
\begin{minipage}{3cm}
\centering
\includegraphics[width=3cm]{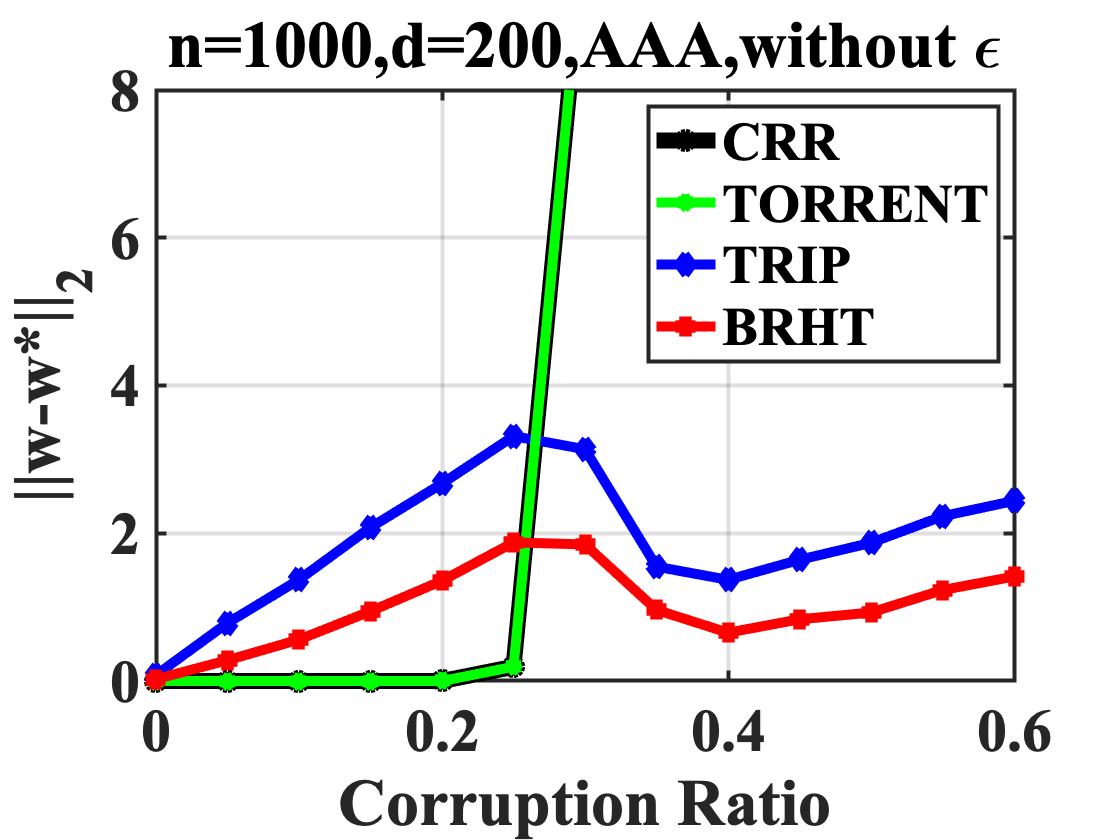}
\end{minipage}
}
\subfigure[]
{
\begin{minipage}{3cm}
\centering
\includegraphics[width=3cm]{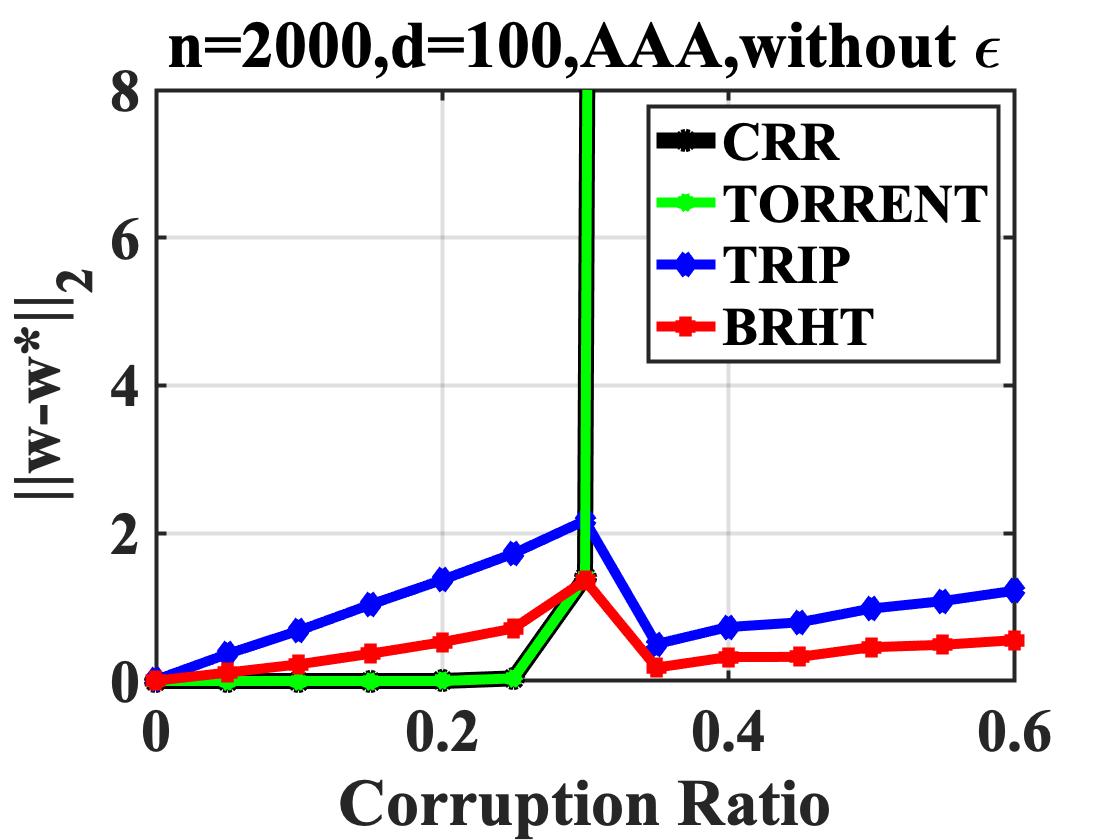}
\end{minipage}
}
\caption{Recovery of parameters with respect to the number of data points $n$, dimensionality $d$, and corruption ratio  $\alpha$. (a),(b),(c),(d) consider the case with white noise $\boldsymbol{\epsilon}$, while (e),(f),(g),(h) do not consider white noise. The performance of TORRENT and CRR is similar, and TRIP and BRHT are still more robust under AAAs than CRR and TORRENT.} 
\label{extra_exp1}
\end{figure*}

We also consider another leverage point attack (LPA) on data sets. For a point $(\mathbf{x}_{i},y_i)$, the leverage value is defined as $h_{ii}=\mathbf{x}_{i}^{T}(XX^{T})^{-1}\mathbf{x}_{i}$. In the linear regression, the regression result can be strongly affected by high leverage points \citep{chatterjee1986influential}. Therefore, if we corrupt those high leverage points, the regression result is more likely to be unstable. If we set the covariant $\mathbf{x}_i$ as iid in $\mathcal{N} (0,I_{{d}})$, then the high leverage points are roughly those points with large norms $\|\mathbf{x}_i\|_2$ since $\frac{1}{n}XX^{T}$ converges to $I_d$ as $n\to \infty$. According to the above analysis, we set the LPA as follows: choose $k$ points with the largest covariant norm $\|\mathbf{x}_i\|_2$ and set their corresponding $y_i$ to $0$. In this experiment, the true coefficient $\mathbf{w}^*$ is chosen to be a random unit norm vector and the covariant $\mathbf{x}_i$ are iid in $\mathcal{N} (0,I_{{d}})$. The true data (before attack) are also generated in two ways, one with white noise $y_{i}=\mathbf{x}_{i}^T{w}^*+\epsilon_{i}$ and the other without white noise $y_{i}=\mathbf{x}_{i}^T{w}^*$, where $\epsilon_{i}$ are iid in $\mathcal{N} (0,\sigma^2)$. We set $\sigma=1$ in the experiments. The experimental results are shown in Figure \ref{extra_exp2}. Under LPA, CRR performs poorly and usually collapses first among these methods. Rob-ULA has relatively better performance when the proportion of outliers is high, but there will be relatively large errors in the case of low proportion of outliers. TORRENT is very robust under LPA, especially in the absence of white noise. However, if the data dimension is high and the sample size is small, TORRENT is easier to collapse. The proposed TRIP and BRHT are still better than CRR, and will maintain a robust result even there are lots of outliers. The estimation errors of BRHT are smaller than TRIP, which shows BRHT is the most robust algorithm in this experiment.

\begin{figure*}[tbp]
	\centering
	\subfigure[]
	{
		\begin{minipage}{3cm}
			\centering
			\includegraphics[width=3cm]{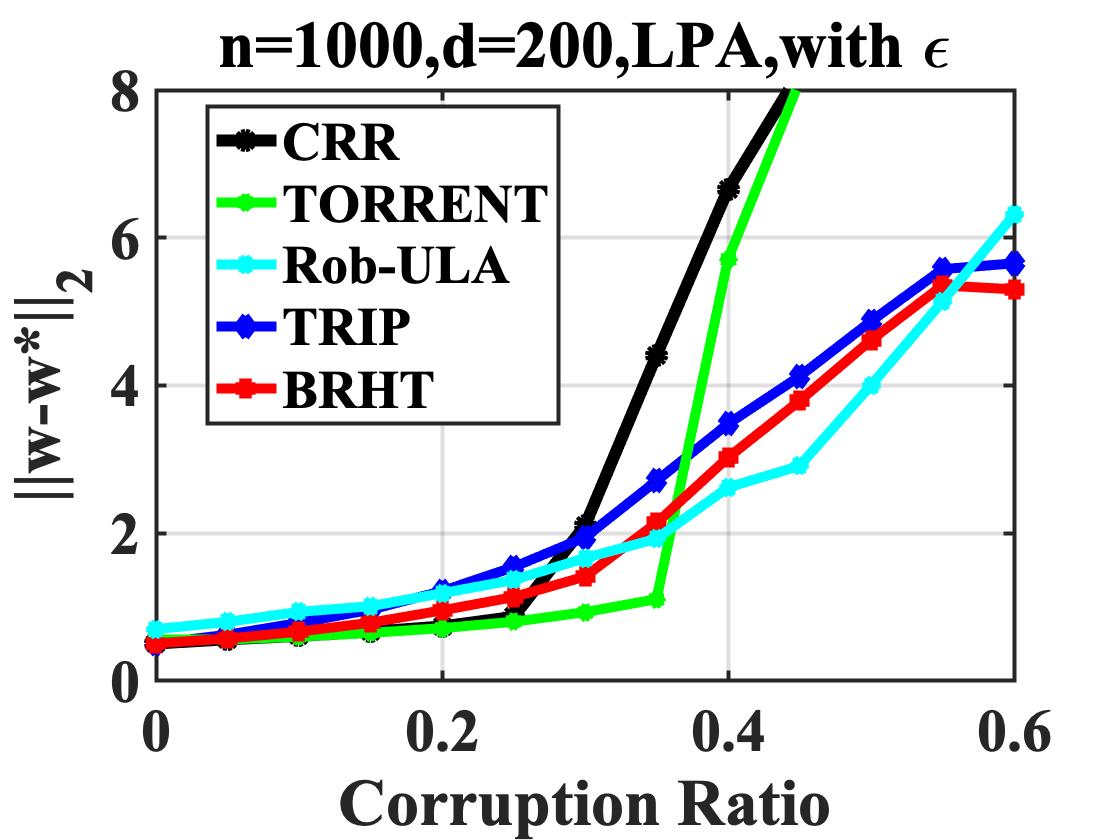}
		\end{minipage}
	}
	\subfigure[]
	{
		\begin{minipage}{3cm}
			\centering
			\includegraphics[width=3cm]{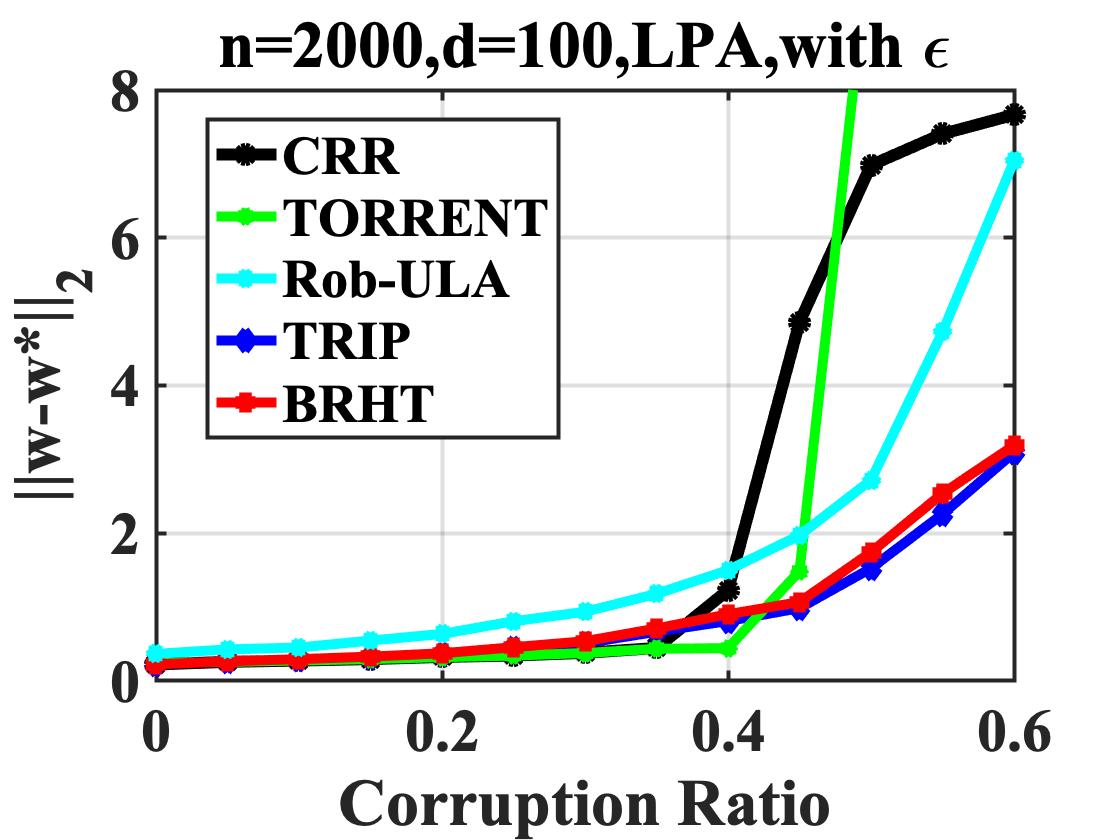}
		\end{minipage}
	}
	\subfigure[]
	{
		\begin{minipage}{3cm}
			\centering
			\includegraphics[width=3cm]{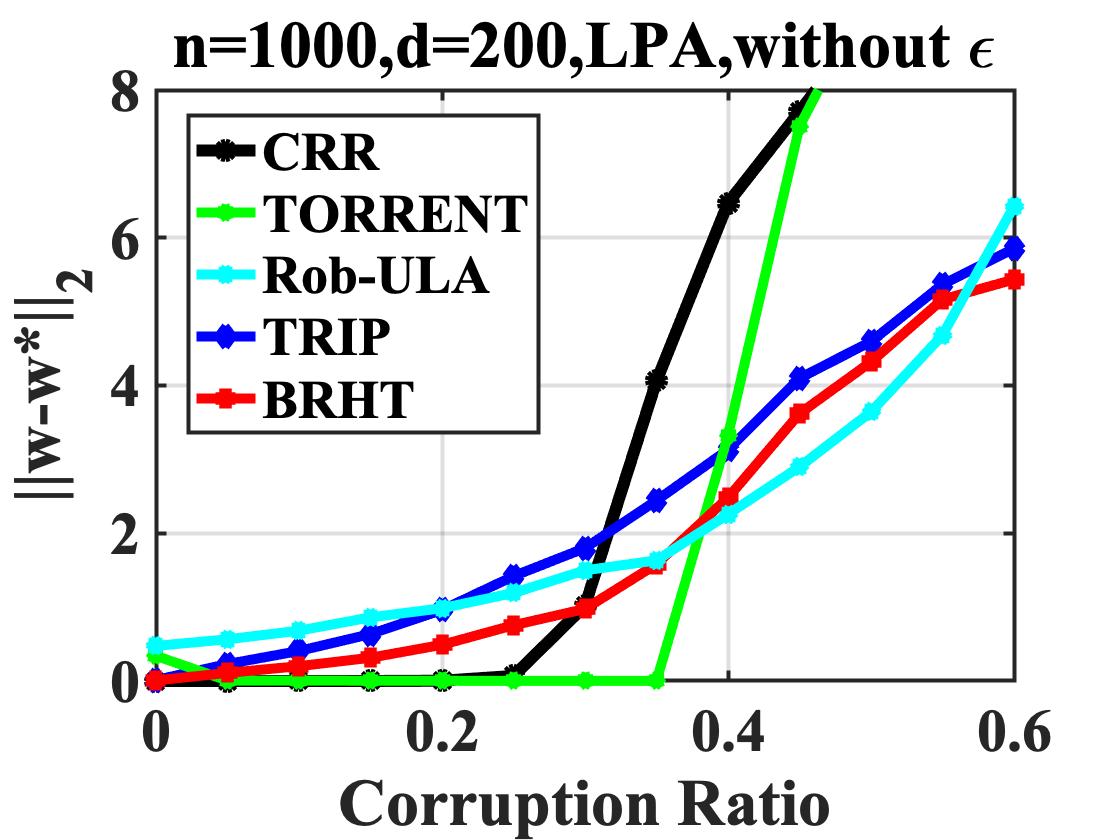}
		\end{minipage}
	}
	\subfigure[]
	{
		\begin{minipage}{3cm}
			\centering
			\includegraphics[width=3cm]{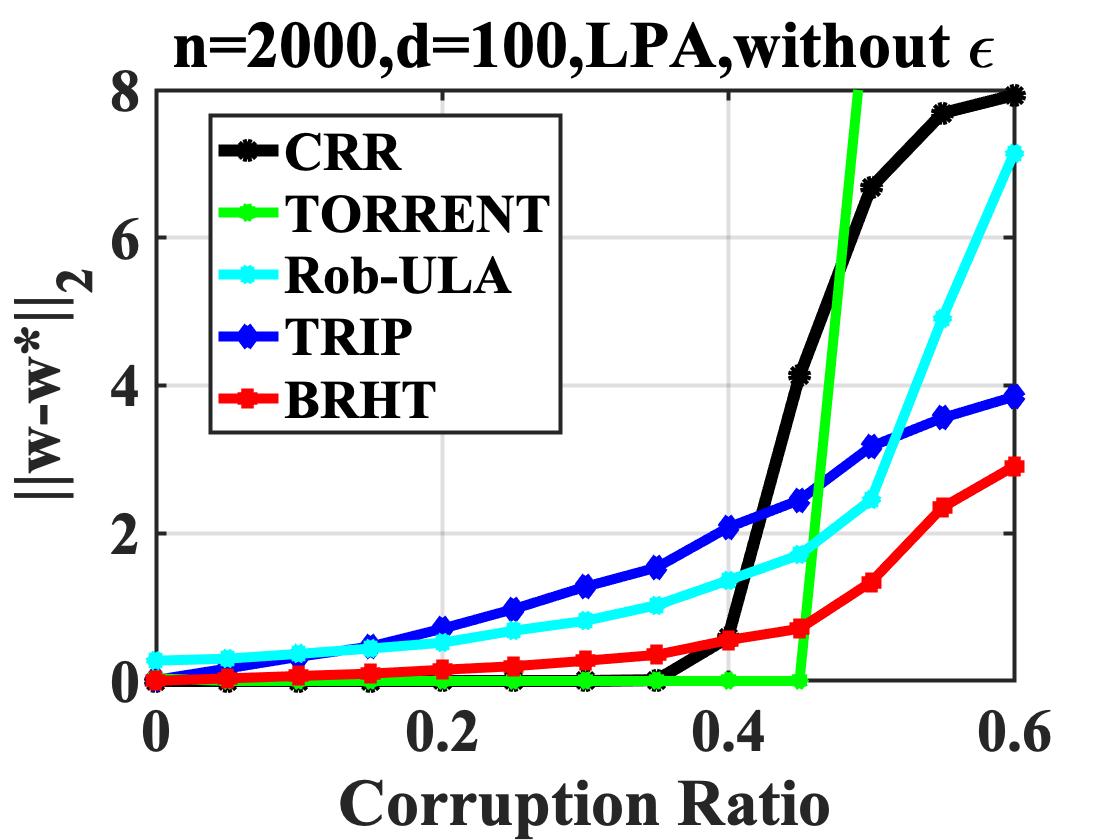}
		\end{minipage}
	}
	\caption{Recovery of parameters concerning the number of data points $n$, dimensionality $d$, and corruption ratio $\alpha$ under LPA.} 
\label{extra_exp2}
\end{figure*}

\newpage
\bibliographystyle{apalike}
\bibliography{reference_LP}